\numberwithin{equation}{section}
\newtheorem{theorem}{Theorem}[section]
\newtheorem{cor}[theorem]{Corollary}
\newtheorem{lemma}[theorem]{Lemma}
\newtheorem{remark}[theorem]{Remark}
\newtheorem{prop}[theorem]{Proposition}
\newtheorem{example}[theorem]{Example}
\newtheorem{assumption}{Assumption}
\newcommand{\cF}{\mathcal{F}}
\newcommand{\cX}{\mathcal{X}}
\newcommand{\R}{\mathbb{R}}
\newcommand{\C}{\mathbb{C}}
\newcommand{\N}{\mathbb{N}}
\newcommand{\Prob}{\mathbb{P}}
\newcommand*{\E}{\mathbb{E}}
\providecommand{\rank}{\operatorname{rank}}
\DeclareMathOperator{\logdel}{\log\tfrac{1}{\delta}}
\newcommand*{\eps}{\varepsilon}
\renewcommand{\leq}{\leqslant}
\renewcommand{\geq}{\geqslant}
\providecommand{\abs}[1]{\left\lvert#1\right\rvert}
\newcommand{\ball}{\mathbb{B}^d}
\newcommand{\lam}{\lambda}
\newcommand{\del}{\delta}
\newcommand{\sig}{\sigma}
\newcolumntype{?}{!{\vrule width 1pt}}
\newcolumntype{C}{>{\centering\arraybackslash}m{8em}}
\newcommand{\HF}{\operatorname{HF}}
\newcommand{\HS}{\operatorname{HS}}
\newcommand{\LT}{\operatorname{LT}}
\newcommand{\sign}{\operatorname{sign}}
\newcommand{\SO}{\operatorname{SO}(3)}
\newcommand{\SOn}{\operatorname{SO}(n)}
\newcommand{\Unif}{\operatorname{Unif}}
\def\blfootnote{\gdef\@thefnmark{}\@footnotetext}
\begin{document}

\title{Kernel approximation on algebraic varieties}

\author{Jason M. Altschuler \and Pablo A. Parrilo}
\date{}
\maketitle
\blfootnote{The authors are with the Laboratory for Information and Decision Systems (LIDS), Massachusetts Institute of Technology, Cambridge MA 02139. Work partially supported by NSF AF 1565235, NSF Graduate Research Fellowship 1122374, and a TwoSigma PhD Fellowship.}

\begin{abstract}
	Low-rank approximation of kernels is a fundamental mathematical problem with widespread algorithmic applications. Often the kernel is restricted to an algebraic variety, e.g., in problems involving sparse or low-rank data. We show that significantly better approximations are obtainable in this setting: the rank required to achieve a given error depends on the variety's dimension rather than the ambient dimension, which is typically much larger. This is true in both high-precision and high-dimensional regimes. Our results are presented for smooth isotropic kernels, the predominant class of kernels used in applications. Our main technical insight is to approximate smooth kernels by polynomial kernels, and leverage two key properties of polynomial kernels that hold when they are restricted to a variety. First, their ranks decrease exponentially in the variety's co-dimension. Second, their maximum values are governed by their values over a small set of points. Together, our results provide a general approach for exploiting (approximate) ``algebraic structure'' in datasets in order to efficiently solve large-scale data science problems. 
\end{abstract}

\section{Introduction}\label{sec:intro}

Given a kernel $K$, domain $\cX \subset \R^d$, and accuracy $\eps > 0$, the low-rank approximation problem
is to find a kernel $K_r$ of rank $r$, for $r$ as small as possible, satisfying
\[
	\sup_{x,y \in \cX} \abs{K(x,y) - K_r(x,y)} \leq \eps.
\]
In addition to being a fundamental mathematical problem in its own right, low-rank approximation has broad algorithmic implications in data science and applied mathematics. The connection is that low-rank kernel approximations enable rapid computation of core algorithmic tasks such as the Discrete Gauss Transform, heat equation solvers~\citep{GreStr91}, optimal transport solvers~\citep{AltBacRudWee18,SolGoePey15}, and kernel methods in machine learning~\citep{RifYeoTom03,YanDurDav05,RahRec08,WilRas06}, among many others. Designing better approximations---i.e., approximations that require smaller rank $r$ for the same accuracy $\eps$---immediately translates into faster algorithms for these myriad applications.

\par This broad applicability has led to an extensive literature on low-rank approximation of kernels. Existing approaches can be roughly partitioned into two categories depending on how the rank $r$ required to achieve $\eps$ approximation accuracy scales in the problem parameters.

\begin{itemize}
	\item \underline{High-precision approaches} scale polylogarithmically in the approximation accuracy $\eps$, but exponentially in the ambient dimension $d$. A typical rate for approximating a smooth isotropic kernel over the unit\footnote{$\cX \subset \ball$ without loss of generality because rescaling the domain $\cX$ is equivalent to rescaling the kernel function.} ball $\cX = \ball$ is
	\begin{align}
		r = O\left( \log 1/\eps \right)^d,
		\label{eq:curse-of-dim}
	\end{align}
	achieved for instance by polynomial methods~\citep{GreStr91,YanDurDav05,YanDurGum03,CotKesSre11,WanLiDar18}. Details in Proposition~\ref{prop:poly-standard}.
	\item \underline{High-dimensional approaches} scale exponentially better in $d$, but exponentially worse in $\eps$. A typical rate for approximating a positive-definite, isotropic kernel $K$ over $\cX = \ball$ is
	\begin{align}
		r = O\left( d \,\frac{\log(\sig_K / \eps)}{\eps^2}  \right),
		\label{eq:rate-high-dim}
	\end{align} 
	achieved by the Random Fourier Features method~\citep{RahRec08}.
	Above, $\sig_K^2$ is the trace of the Hessian of $k(x-y) = K(x,y)$ at $0$; this is called the ``curvature of $K$''. Details in Proposition~\ref{prop:rff-standard}.
\end{itemize}

\paragraph*{Key issue: dimension dependence.} 
Both approaches have severe limitations in practice. On one hand, high-precision approaches are limited to dimensions $d\leq5$ or $10$, say, at the most. On the other hand, high-dimensional approaches cannot approximate to accuracy $\eps$ better than a couple digits of precision with ranks $r$ of practical size (typically in the hundreds or thousands)---especially if the dimension $d$ is in the hundreds or thousands.

\paragraph*{Better approximation over structured domains?} A pervasive phenomenon throughout data science is that real-world datasets often lie on ``low-dimensional domains'' $\cX$ in a high-dimensional ambient space $\R^d$. This motivates the critical hypothesis:
\begin{align*}
	\text{\emph{If $\cX \subset \R^d$ has ``effective dimension'' $d^{\star}$, }}&\text{\emph{then the dependence on the ambient dimension $d$}}
	\\ \text{\emph{in the rates~\eqref{eq:curse-of-dim} and~\eqref{eq:rate-high-dim} can be }}&\text{\emph{improved to the analogous dependence on $d^{\star}$.}}
\end{align*}
There are different ways to formalize this notion of ``effective dimension''. Previous work has focused on exploiting local differentiable structure: consider $\cX$ to be (a bounded subset of) a low-dimensional real manifold. In contrast, this paper seeks to exploit global algebraic structure: we consider $\cX$ to be (a bounded subset of) a low-dimensional real algebraic variety.

\paragraph*{Global algebraic structure vs local differentiable structure.} These two settings of varieties and manifolds are in general incomparable.
Our investigation is motivated by the opportunity that \emph{the variety setting handles many popular applications that the manifold setting cannot.}

\par Indeed, existing bounds from the manifold literature often do not apply to the variety setting because they require the domain to satisfy smoothness or curvature bounds, and do not allow for singular points, cusps, self-intersections, etc. A quintessential example is problems involving \emph{sparse data}~\citep{CotKesSre11}, in which case $\cX$ is a low-dimensional variety: the union of low-dimensional coordinate subspaces, see \S\ref{ssec:ex:uas}. This cannot be handled by previous manifold results since $0$ is a singular point of $\cX$. Another important example is problems involving \emph{low-rank matrices}, in which case the relevant domain $\cX$ is again a low-dimensional variety which cannot be handled by previous manifold results since $\cX$ is not smooth, see \S\ref{ssec:ex:lr}. These issues can be critical in practice, not merely a theoretical technicality; see Figure~\ref{fig:nystrom-sparse}.

\begin{figure}
	\begin{center}
		\includegraphics[width=0.4\textwidth]{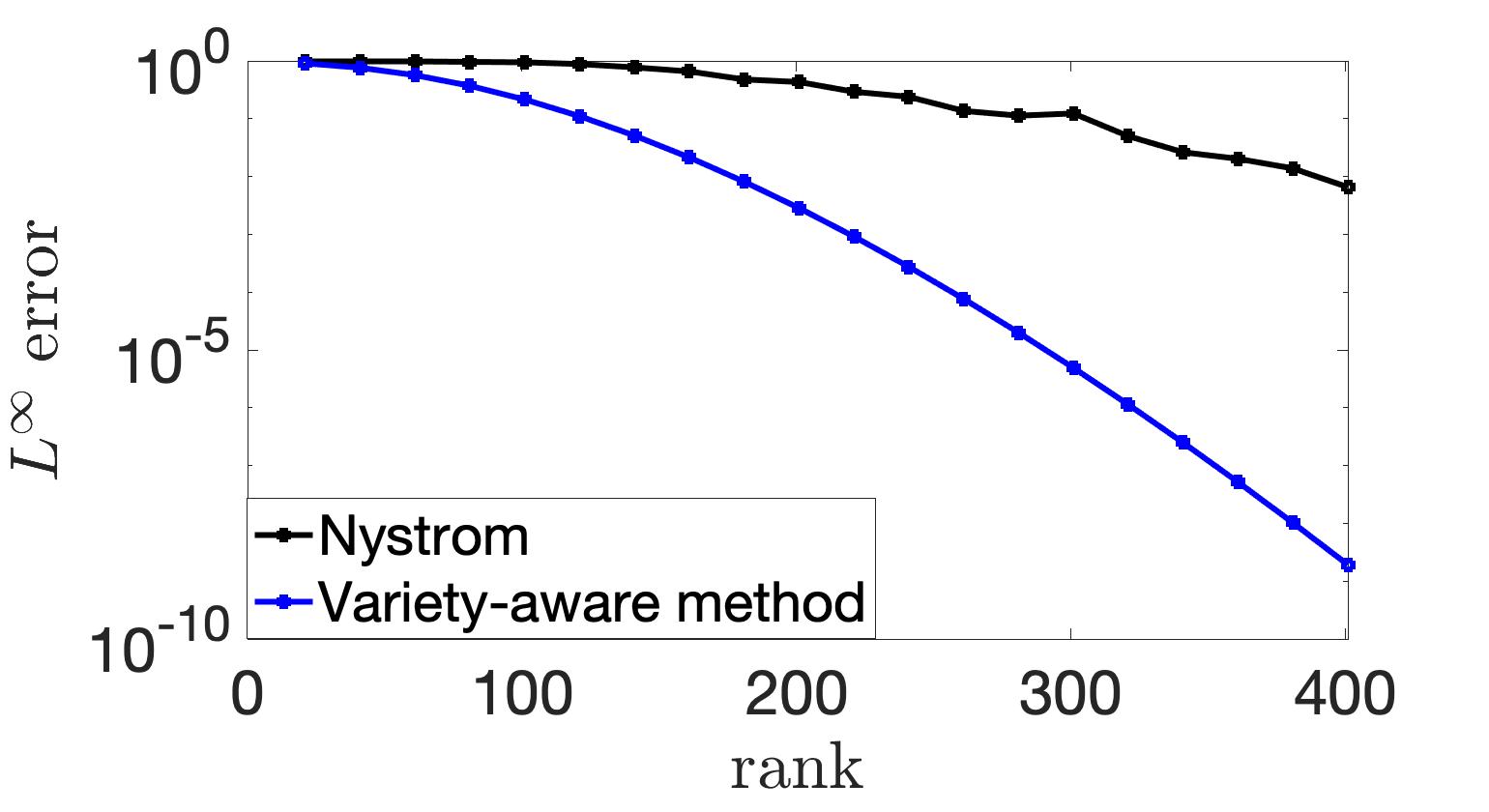}
	\end{center}
	\caption{Methods that exploit manifold structure (e.g., Nystr\"om) often \emph{cannot} exploit variety structure. This is evident even in simple settings: here we consider approximating the Gaussian kernel $e^{-\|x-y\|^2/2}$ over $1$-sparse vectors $x,y \in \R^{20}$ with norm at most $1$. Details in \S\ref{ssec:ex:uas}.}	
	\label{fig:nystrom-sparse}
\end{figure}

\subsection{Contribution: better approximation over varieties}\label{ssec:intro:cont}

This paper initiates the study of kernel approximation  in the setting that the approximation domain is a bounded subset of a real algebraic variety $V \subset \R^d$. We show that in this setting, the aforementioned hypothesis is true: for both high-precision and high-dimensional approaches, the dependence on the ambient dimension $d$ can be improved to dependence on the dimension of $V$.

\begin{theorem}[Informal version of main results]\label{thm:informal}
	Suppose $K$ is a smooth isotropic kernel with curvature $\sig_K$. Consider approximating $K$ to $L^{\infty}$ error $\eps$ over a domain $\cX = V \cap \ball$, where $V$ is a variety in $\R^d$ with dimension $d^{\star}$. 
	\begin{itemize}
		\item \underline{High-precision approach (Theorem~\ref{thm:main-var})} It suffices to have rank
		\[
			r = O\left( \log 1/\eps \right)^{d^{\star}}.
		\]
		\item \underline{High-dimensional approach (Theorem~\ref{thm:high-dim})} It suffices to have rank
		\[
		r = O\left(d^{\star} \frac{\log(\sig_K d^{\star} /\eps)}{\eps^2} \right).
		\]
	\end{itemize}
\end{theorem}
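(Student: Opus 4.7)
The plan is to combine smooth-kernel-by-polynomial approximation with two structural properties of polynomial kernels restricted to a variety, as advertised in the abstract. I will address the two bullets of Theorem~\ref{thm:informal} separately.

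\textbf{High-precision part.} First I would approximate the isotropic radial profile of $K$ by a univariate polynomial of degree $n$, via a truncated Chebyshev expansion on the relevant bounded interval. Because $K$ is smooth, standard one-dimensional approximation theory produces a polynomial kernel $K_n$, of joint degree $O(n)$ in $(x,y)$, satisfying $\sup_{x,y \in \ball} |K(x,y) - K_n(x,y)| \leq \eps$ with $n = O(\log 1/\eps)$; this is essentially Proposition~\ref{prop:poly-standard}, which gives rank $\binom{n+d}{d}$ on the full ball. The key step is then to observe that the rank of $K_n$ restricted to $V \cap \ball$ equals the Hilbert function $\dim_{\R}(\R[x]_{\leq n}/I(V)_{\leq n})$, which by classical Hilbert-polynomial theory grows like $O(n^{d^{\star}})$, with leading constant controlled by $\deg(V)$. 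This is the first of the two key properties advertised in the abstract. Substituting $n = O(\log 1/\eps)$ yields the claimed rank $(\log 1/\eps)^{d^{\star}}$.

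\textbf{High-dimensional part.} Here I would start from Random Fourier Features (Proposition~\ref{prop:rff-standard}), which gives an unbiased estimator $\hat{K}_r$ of $K$ whose pointwise concentration is classical. The difficulty is upgrading to uniform error over $\cX = V \cap \ball$. The standard argument unions a Hoeffding bound over an $\eps$-net of $\ball$ of size $(C/\eps)^d$, which is the origin of the ambient-$d$ factor in~\eqref{eq:rate-high-dim}. To replace $d$ by $d^{\star}$, the plan is to first polynomial-approximate each random feature $\cos(\omega_i^\top(x-y))$ to accuracy $\eps$ by a polynomial of degree $\Otilde(\log 1/\eps)$, so that, up to an additive $\eps$-error, $K - \hat{K}_r$ is a polynomial kernel of small degree in $(x,y)$. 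Then invoke the second key property of polynomial kernels on varieties: the sup of a degree-$n$ polynomial kernel over $V \cap \ball$ is equivalent, up to a multiplicative constant, to its supremum over a finite distinguished sampling set of size $\binom{n+d^{\star}}{d^{\star}}^{O(1)} = (1/\eps)^{d^{\star}}$. A Hoeffding-type union bound over this finite set then yields the target rank $O(d^{\star}\log(\sig_K d^{\star}/\eps)/\eps^2)$.

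\textbf{Main obstacle.} I expect the genuinely hard step to be the second structural property above, i.e.\ a Markov--Bernstein-type polynomial sampling inequality on a possibly singular real variety. For smooth compact submanifolds one can simply import covering and Lipschitz arguments, but a variety may have singular points, cusps, or self-intersections, so a genuinely algebraic argument is needed---plausibly via transversal slicing to reduce to $\R^{d^{\star}}$, or an inductive polynomial-sampling argument in the co-dimension. This is where I expect the main technical novelty of the paper to live, and where I would spend most of my effort; the approximation and concentration pieces above are comparatively standard once that inequality is in hand.
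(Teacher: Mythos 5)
The high-precision half of your proposal matches the paper's argument very closely: approximate the radial profile of $K$ by a degree-$n$ polynomial with $n = O(\log 1/\eps)$ (the paper's Assumption~\ref{assump:smooth}), then bound the rank of the resulting polynomial kernel over $V$ by the Hilbert function $\HF_V(O(n)) = O(\deg V \cdot n^{\dim V})$ via Lemma~\ref{lem:hf-equidim}. That is precisely the proof of Theorem~\ref{thm:main-var} via Proposition~\ref{prop:rank-asymp}. Two minor imprecisions: the rank \emph{equals} the Hilbert function only for generic kernels (it is an upper bound in general, Remark~\ref{rem:rank-asymp:tight}), and one should note that the informal statement suppresses the $\deg V$ factor, which is why the paper assumes $\log\deg V = O(\dim V)$.

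For the high-dimensional half, you correctly identify the decisive ingredient --- a constant-slack polynomial norming inequality on a variety, Proposition~\ref{prop:norming} --- and you are right that this is where the technical novelty lives. The paper's route is not slicing or an inductive co-dimension argument: it combines the classical Fekete-set norming bound (Lemma~\ref{lem:fekete}, which gives slack $N=\HF_V(n)$) with a tensoring trick (apply Fekete to $p^a$ at degree $an$ and take $a$-th roots, with $a=\Theta(\dim V\log(n\dim V))$) to drive the slack down to a constant; the whole point is that the Fekete set for a degree-$n$ space over $V$ has size $\HF_V(n)=O(n)^{\dim V}$ rather than $\binom{n+d}{d}$.

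There is, however, a genuine gap in your degree estimate for the cosine features. You claim $\cos(\langle\omega_i,x-y\rangle)$ can be approximated to accuracy $\eps$ on $\ball\times\ball$ by a polynomial of degree $\Otilde(\log 1/\eps)$. This fails: a truncation of $\cos$ at argument of size $R$ needs degree $\Omega(R)$, and for a kernel with curvature $\sig_K^2$ the Bochner measure has $\E\|\omega\|^2=\sig_K^2$, so typical arguments have size $\approx \sig_K$; worse, the Bochner measure need not have bounded support at all, so without further work no single finite degree suffices uniformly over the drawn frequencies. The paper fixes this by first truncating the Fourier distribution at $\|\omega\|^2 \le t = O(\sig_K^2/\eps)$ (costing an $O(\eps)$ bias by Markov), and only then invoking polynomial approximation of degree $n=O(\sig_K^2/\eps)$ via Lemma~\ref{lem:ft-smooth}. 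Note this degree is \emph{polynomial} in $\sig_K/\eps$, not polylogarithmic; the final rank is still $O(d^\star\log(\sig_K d^\star/\eps)/\eps^2)$ only because $n$ enters through $\log|\cX_n|$. So your outline reproduces the headline rate, but it silently omits the frequency-truncation step and misstates the cosine-approximation degree by roughly a factor $\sig_K^2/\eps$, and both must be repaired for the argument to close.
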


\paragraph*{Remarks.}

\begin{description}
	\item \emph{Our result gives tighter bounds for the many domains that are both manifolds and varieties.} For instance, while~\citep[Corollary 4]{AltBacRudWee18} shows an analog to our high-precision result in the case that $\cX$ is a bounded manifold with dimension $d^{\star}$, the exponent is $2.5 d^{\star}$ rather than $d^{\star}$. Because manifold dimension and variety dimension coincide when $\cX$ is both a real manifold and a real algebraic variety, Theorem~\ref{thm:main-var} provides precisely the high-precision result sought in the manifold literature: exponential dependence in the effective dimension, with no fudge factors. 
	
	\item \emph{Our techniques extend to more general kernels.} For simplicity, we state our results for smooth isotropic kernels since on one hand this captures most popular kernels used in practice, and on the other hand this level of generality yields simple proofs. Neither result requires isotropy (recall this means rotation-invariance and translation-invariance). Indeed, our high-dimensional result requires only translation-invariance, and our high-precision result assumes isotropy solely so that the ``smoothness'' assumption on the multivariate kernel $K$ can be simply stated in terms of the univariate function $f$ satisfying $K(x,y) = f(\|x-y\|^2)$. There are many ways to quantify smoothness. Our high-precision approach works whenever $K$ is well-approximated by a low-degree polynomial; this occurs for instance if $f$ is analytic in a neighborhood around the relevant domain and is satisfied for many kernels used in practice, for example the Gaussian and Cauchy kernels (see \S\ref{ssec:prelim:standard}).
	
	\item \emph{Approximation rates of course cannot depend on the domain solely through its effective dimension.} Indeed, approximation over a space-filling curve or over a union of hyperplanes in a tightly gridded formation, is effectively as difficult as approximation over a full-dimensional domain. A similar concern also applies if the domain contains arbitrary lower-dimensional components (e.g., a cloud of points), see the discussion following Lemma~\ref{lem:hf-equidim}. In the manifold setting, degenerate domains are excluded by assuming bounds on their reach or on high-order derivatives of an atlas, see e.g.,~\citep{AltBacRudWee18,baraniuk2009random}. In our algebraic setting, we make the natural assumption that the variety's \emph{degree} is not super-exponentially large: $\log \deg(V) = O(\dim V)$. This assumption is satisfied in common situations (see \S\ref{sec:ex}), and can be checked either using standard techniques if the variety is known in advance (see \S\ref{ssec:prelim:ag}) or otherwise via estimation from samples~\citep{breiding2018learning}.
	
	\item \emph{Adaptivity.} Both our results are presented from an existential point of view. An interesting algorithmic feature of the high-dimensional approach is that it automatically adapts to the variety. In contrast, the high-precision approach depends on the description of the variety.
	
	\item \emph{Kernels restricted to a variety vs kernels on a variety.} 
	Approximating an isotropic kernel $K(x,y) = f(\|x-y\|^2)$ over a domain $\cX$ can refer either to restricting $K$ to $x,y \in \cX$, or replacing $\|x-y\|$ by an intrinsic metric over $\cX$~\citep{feragen2016open}. This paper considers the former.

\end{description}

\subsection{Techniques}\label{ssec:intro:tech}

In order to prove our results, we synthesize techniques from the traditionally disparate fields of approximation theory and algebraic geometry. Key to our proofs are certain structural properties of polynomial kernels that hold when they are restricted to algebraic varieties---this may be of independent interest. We detail these properties below and how they enable us to exploit the algebraic structure of the approximation domain in the context of low-rank kernel approximation.

\paragraph*{High-precision approach.} The standard such approach exploits smoothness in order to approximate a kernel over a compact domain by a low-degree polynomial kernel. Details in \S\ref{ssec:prelim:standard}.
However, the fundamental obstacle is that the rank of a polynomial kernel grows rapidly in its degree $n$, namely as 
$\binom{n+d}{d} \approx n^d$.
This is because expressing a polynomial in $d$ variables of degree $n$ as the weighted sum of monomials (or any other basis) potentially requires all monomials in $d$ variables of degree at most $n$---of which there are precisely this many. This exponential growth in $d$ is called the ``curse of dimensionality'' for high-precision approaches.

\par The primary insight behind our high-precision approach is that if a polynomial kernel is restricted to a variety $V \subset \R^d$, then its rank drops. In fact, significantly so if $\dim(V) \ll d$. This is perhaps most easily seen in the case of sparse data~\citep{CotKesSre11}, i.e., in the case that the variety $V$ consists of all $k$-sparse points on $\R^d$. Observe that in this case, any monomial that depends on more than $k$ variables vanishes on $V$. Thus a polynomial of degree $n$ over this variety $V$ can be expressed as a weighted sum of such monomials---of which there are exactly $\sum_{i=0}^k \binom{d}{i} \binom{n}{i}$ by Proposition~\ref{prop:sparse:ip}. Critically, this grows at a much slower rate of roughly $n^k$ rather than $n^d$. Such an improvement is crucial even in small-scale settings.

\par For other varieties $V$, it is not true that monomials vanish over $V$. However, the monomials become linearly dependent. For example, if $V = \{(x_1,x_2) : x_1^2 + x_2^2 = 1\}$ is the unit circle, then although no monomials vanish on $V$, the monomials $1$, $x_1^2$, and $x_2^2$ are linearly independent when restricted to $V$. This linear dependence is the generic source of rank reduction since it enables us to use a refined description of the feature space beyond the standard one of all low-degree monomials. 
The refined description essentially\footnote{Factorizing over the cooordinate ring fully exploits the domain's algebraic structure. In \S\ref{ssec:hp-rank}, we obtain even better rank bounds by further exploiting a second source of algebraic structure: the polynomial function itself.}
 uses bounded-degree monomials in the \emph{coordinate ring} corresponding to $V$. Intuitively, this removes all ``redundancies'' in the standard description in order to produce factorizations with lower rank.
Quantitatively, we provide an exact formula in terms of the rank of a finite-dimensional matrix (Proposition~\ref{prop:rank-exact}), and show how to compute tight asymptotic rank bounds in terms of the associated Hilbert function (Proposition~\ref{prop:rank-asymp}).

\paragraph*{High-dimensional approach.} The standard such approach uses Bochner's Theorem to express a positive-definite, translation-invariant kernel as the expectation $K(x,y) = \E_{\omega} [ f_{\omega}(x) f_{\omega}(y)]$ where $f_{\omega}$ is a bounded function for any realization of the random variable $\omega$, and then approximates this by taking the empirical average of $r$ samples~\citep{RahRec08}. Details in \S\ref{ssec:prelim:standard}. Both the standard analysis and our analysis proceed in two steps:
\begin{enumerate}
	\item[(i)] Bound the approximation error on a finite subset $S$ of the compact domain $\cX$.
	\item[(ii)] Extend this approximation bound over $S$ to all of $\cX$.
\end{enumerate}
Step (i) is straightforward: a standard Chernoff bound ensures small error over any finite set $S$ with high probability if $r$ grows logarithmically in $|S|$. Step (ii) is the fundamental obstacle: in order to extend an approximation bound over $S$ to $\cX$, the set $S$ must be large. Existing analyses take $S$ to be an $\eps$-net of $\cX$ and argue step (ii) via Lipschitz smoothness of the approximation error; however the straightforward analysis requires $|S| \approx 1/\eps^d$, whereby the rank $r$ scales at least linearly in $d$.

\par We approach step (ii) from an algebraic perspective rather than an analytic one. Briefly, our primary insight is that if the approximation domain $\cX$ is a subset of a variety $V$, then the rigidity of polynomials enables us to prove step (ii) for a certain subset $S \subset \cX$ that is of size exponential in $\dim(V)$ rather than in $d$.

\par To describe our approach in more detail, it is insightful to rephrase step (ii) in the language of approximation theory. Note that although smooth isotropic kernels and the aforementioned approximate kernel are not polynomials, they are well-approximated by low-degree polynomials, thus the approximation error (their difference) is essentially a low-degree polynomial. The upshot is that then step (ii) amounts to the following central problem in approximation theory\footnote{In the jargon of approximation theory, this problem can be equivalently stated as: Find the smallest set of interpolation nodes for which the corresponding Lebesgue constant is at most $\lambda$. It can also be stated in terms of bounding the norm of an associated interpolation operator, or finding small norming sets. See e.g.,~\citep{bos2018fekete,Riv81}.}: given a compact domain $\cX$, degree $n$, and slack $\lam > 1$, find the smallest subset $S \subset \cX$ satisfying
\begin{align}
	\max_{x \in \cX} \abs{p_n(x)} \leq \lam \cdot \max_{x \in S} \abs{p_n(x)},
	\qquad \text{ for all polynomials $p_n$ of degree $n$}.
	\label{eq:intro-norming}
\end{align}
A classical result in interpolation theory is that there exists a set $S$ of size $|S| \approx n^d$ satisfying~\eqref{eq:intro-norming} for $\lambda \approx n^d$. However, this is insufficient for our purposes for two reasons: we need $|S|$ to be exponential in $\dim(V)$ rather than $d$, and $\lambda$ to be $O(1)$. As shown in Proposition~\ref{prop:norming}, the former issue is fixable by working over polynomials in the corresponding coordinate ring, and the latter issue is fixable by slightly increasing $|S|$ and appealing to a tensorization argument inspired by~\citep{bloom2012polynomial}.

\subsection{Related work}\label{ssec:intro:prev}

There is an extensive literature on low-rank approximation of kernels. This is in large part because different methods are better suited to different parameter regimes---depending on tradeoffs between the dimension, accuracy, and kernel bandwidth, as well as the structure of the data's domain (or even its distribution). We briefly overview existing approaches and their tradeoffs.

\paragraph*{High-dimensional vs high-precision approaches.} Methods that do not exploit the structure of the approximation domain can be grouped into the two categories described above: high-dimensional and high-precision approaches. 
\par \underline{High-dimensional approaches} scale to high ambient dimension and narrow kernels, but cannot provide approximations past a few digits of accuracy with ranks of practical size. This is because the ranks of these approaches scales polynomially in $1/\eps$. This is the case for Random Fourier Features~\citep{RahRec08} and its variants~\citep{liu2020random}, as well as sketching-based aproaches~\citep{ahle2020oblivious,woodruff2020near,woodruff2014sketching}. It is also the case for related approaches such as hashing-based approaches~\citep{ChaSim17,SprShr17} which target similar downstream applications without actually performing kernel approximation.
\par \underline{High-precision approaches} scale exponentially better in the accuracy $\eps$, but suffer from the curse of dimensionality. For instance, this is the case for polynomial approaches~\citep{YanDurDav05,CotKesSre11,WanLiDar18,zwicknagl2009power, schaback2008limit, wathen2015spectral,WanLiDar18,TauWec09} including Fast Multipole-esque approaches~\citep{GreStr91,greengard1987fast}, as well as the Nystr\"om method~\citep{AltBacRudWee18,WilSee01}. 
It is also worth mentioning that high-precision approaches tend to have worse dependence on the kernels' bandwidth than high-dimensional approaches. In the absence of structure, this prohibits high-precision approaches beyond low dimensions and moderately narrow kernels.

\paragraph*{Exploiting structure.} \underline{Local differentiable structure vs global algebraic structure.} As described above, better approximations are sometimes obtainable if the data lie on a ``structured'' domain. Previous work has focused on local differentiable structure, requiring the domain to be a low-dimensional smooth manifold (plus various technical assumptions for analysis purposes), see e.g.,~\citep{AltBacRudWee18,fuselier2012scattered,hangelbroek2011kernel,hangelbroek2010kernel,baraniuk2009random}. In contrast, this paper investigates domains with global algebraic structure, namely low-dimensional varieties. 
These two sources of structure sometimes coincide (e.g., $\SOn$), but sometimes only one is present (e.g., only variety structure is available for sparse or low-rank data).

\par \underline{Data adaptivity.} Different algorithms adapt to the data distribution to varying degrees. On one end of the spectrum is the Nystr\"om method, which is fully adaptive in the sense that its uses data samples to form its approximations, see e.g.,~\citep{WilSee01,GitMah16}. Other methods adapt partially, for instance Random Fourier Features can be shown to automatically adapt to the affine span of the distribution's support. Our approaches in \S\ref{sec:high-prec} and \S\ref{sec:hd} adapt to the support beyond just this affine span: indeed, in some sense, they are fully adaptive to the support when it lies on a variety.

\par \underline{Local vs global low-rank approaches.} 
An orthogonal---in fact, complementary---axis of designing kernel approximations is to exploit the spread of the data distribution in addition to the differentiable/algebraic properties of the distribution's support.
This has no effect on $L^{\infty}$ norm approximation, but can be helpful for approximation in average-case norms.
The standard approach for exploiting the data's spread is to subdivide the space (in either an a priori manner such as gridding or a data-adaptive manner such as clustering),
and then perform any of the aforementioned approximation methods locally. The basic idea is that this leverages the rapid spatial decay of kernels such as the Gaussian kernel: by zooming in, the bandwidth is effectively larger, enabling low-rank approximations to perform better locally. The result is essentially the sum of local low-rank approximations. A prominent example is the famous Fast Multipole-esque Method~\citep{GreStr91,greengard1987fast}, see also e.g.,~\citep{WanLiMah19,LeeGraMoo06,MorSriRay09,BeaGre97,SiHsiDhi17}.

\par \underline{Sparse data.} The paper~\citep{CotKesSre11} was the first to make the point that Taylor Features performs better on sparse data because some monomials vanish. However, their approach is based on the vanishing of monomials and does not generalize to other varieties. Developing high-precision approaches over other varieties requires understanding how the algebraic structure of the domain improves the rank of polynomial kernels, as shown in \S\ref{ssec:hp-rank}. Developing high-dimensional approaches over other varieties requires completely different techniques---namely norming sets for varieties, see \S\ref{sec:hd}.

\subsection{Outline}\label{ssec:intro:outline}

\S\ref{sec:prelim} introduces relevant preliminaries.
Our results on kernel approximation over varieties are presented in \S\ref{sec:high-prec} (for high-precision regimes) and in \S\ref{sec:hd} (for high-dimensional regimes). 
We illustrate our results on a number of example varieties in \S\ref{sec:ex}. We conclude in \S\ref{sec:discussion} with potential future directions.

\section{Preliminaries}\label{sec:prelim}

In this section, we briefly recall relevant background about kernels in \S\ref{ssec:prelim:kernels} and algebraic geometry in \S\ref{ssec:prelim:ag}. Further details can be found in, e.g., the standard texts~\citep{WilRas06,SchSmo02,ShaCri04} for kernels and~\citep{Cox06,Cox13,Sha94} for algebraic geometry. In \S\ref{ssec:prelim:standard}, we describe two popular low-rank approximation approaches since we build upon them in the sequel. Readers familiar with any of these three topics should feel free to skip the corresponding sections. However, we note that the way we introduce previous approaches in \S\ref{ssec:prelim:standard} is non-standard: while the literature casts high-precision and high-dimensional approaches as fundamentally different, we attempt to introduce them in a somewhat unified manner.

\par \underline{Notation.} We denote the Euclidean norm by $\|\cdot\|$, 
the $L^{\infty}$ norm over a domain $S$ by $\|\cdot\|_{S}$, the set $\{1, \dots, n\}$ by $[n]$, 
and the set of positive (resp., non-negative) integers by $\N$ (resp., $\N_0)$.
We emphasize one non-standard notation: throughout a kernel is \emph{not} necessarily PSD unless explicitly specified. 
We do this because our results in \S\ref{sec:high-prec} apply to polynomial kernels regardless of whether they are PSD, and this extension to indefinite kernels may be of interest since they are used in applications (e.g., the Fast Gauss Transform and Optimal Transport).

\subsection{Kernels}\label{ssec:prelim:kernels}
\paragraph*{Kernels and kernel matrices.} A \emph{kernel} $K : \cX \times \cX \to \R$ is a symmetric function of two arguments (not necessarily PSD, see above). 
The \textit{kernel matrix} associated to a kernel $K$ and set of points $\{x_1, \dots, x_N\} \subset \cX$ is the $N \times N$ symmetric matrix with $ij$-th entry $K(x_i,x_j)$.

\paragraph*{Types of kernels.}  A kernel is \textit{positive semidefinite} (PSD) if for any $N \in \N$ and any set $\{x_1, \dots, x_N\}$, the corresponding $N \times N$ kernel matrix is a PSD matrix. Similarly, a kernel is \emph{positive definite} (PD) if all corresponding kernel matrices are PD matrices.
A \textit{polynomial kernel} is a kernel that is also a polynomial. The \emph{degree} of a polynomial kernel is the total degree in either of its arguments $x$ or $y$ (these two numbers are the same by symmetry). 
A kernel is \textit{isotropic} if it is translation-invariant and rotation-invariant, i.e., $K(x,y)$ is a function only of $\|x-y\|$.  An \textit{isotropic polynomial kernel} is a kernel of the form $p(\|x-y\|^2)$ where $p$ is a polynomial. 

\paragraph*{Rank of kernels.} A \emph{rank-$r$ factorization} of a kernel $K : \cX \times \cX \to \R$ is a pair of ``feature maps'' $\phi, \psi : \cX \to \R^r$ such that $K(x,y) = \langle \phi(x), \psi(y) \rangle$ for all $x,y$. The \textit{rank} of a kernel is the minimum $r$ for which there exists a rank-$r$ factorization. A PSD kernel admits a minimal-rank factorization with $\phi = \psi$. Observe that the rank of a kernel cannot increase upon restriction of the kernel's domain. 
If a kernel $K$ admits a rank $r$ factorization $\langle \phi(x), \psi(y)\rangle$, then the $N \times N$ kernel matrix corresponding to any set of points $\{x_1, \dots, x_N\} \subset \cX$ admits the factorization $A^TB$ where $A,B \in \R^{r \times N}$, and the $i$-th columns of $A$ and $B$ are respectively $\phi(x_i)$ and $\psi(x_j)$.

\subsection{Algebraic geometry}\label{ssec:prelim:ag}

\paragraph*{Polynomial spaces.} 
We write $\R[x]$ to denote the polynomial ring of $d$-variate polynomials with real coefficients in the variable $x\in \R^d$. We write $\R_{n}[x]$ (resp., $\R_{\leq n}[x]$) to denote the linear space of polynomials in $\R[x]$ of degree $n$ (resp., at most $n$).

\begin{lemma}[Dimension of polynomial spaces]\label{lem:poly-dim}
	Let $x \in \R^d$.
	\begin{itemize}
		\item \underline{Bounded-degree polynomials:} $\dim(\R_{\leq n}[x]) = \binom{n+d}{n}$.
		\item \underline{Homogeneous polynomials:} $\dim(\R_n[x]) = \binom{n+d-1}{n}$.
	\end{itemize}
\end{lemma}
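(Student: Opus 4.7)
The plan is to identify an explicit basis for each space in terms of monomials, and then reduce the dimension count to a combinatorial enumeration of multi-indices.

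For both parts, I would first recall that the monomials $x^{\alpha} = x_1^{\alpha_1} \cdots x_d^{\alpha_d}$ indexed by $\alpha \in \N_0^d$ are linearly independent in $\R[x]$, since any polynomial $p \in \R[x]$ admits a unique expansion $p(x) = \sum_\alpha c_\alpha x^\alpha$ with only finitely many nonzero $c_\alpha$. This is standard and I would cite it rather than re-prove it. Consequently, the monomials $\{x^\alpha : |\alpha| = n\}$ form a basis of $\R_n[x]$ and the monomials $\{x^\alpha : |\alpha| \leq n\}$ form a basis of $\R_{\leq n}[x]$, where $|\alpha| \defeq \alpha_1 + \cdots + \alpha_d$. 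Thus each claimed dimension equals the cardinality of the corresponding set of multi-indices.

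For the homogeneous case, I would count the multi-indices $\alpha \in \N_0^d$ satisfying $\alpha_1 + \cdots + \alpha_d = n$ via a direct stars-and-bars argument: such a tuple is in bijection with an arrangement of $n$ indistinguishable stars and $d-1$ bars, giving $\binom{n+d-1}{d-1} = \binom{n+d-1}{n}$.

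For the bounded-degree case, I would introduce a slack variable $\alpha_0 \defeq n - |\alpha| \in \N_0$, so that multi-indices $\alpha \in \N_0^d$ with $|\alpha| \leq n$ are in bijection with tuples $(\alpha_0, \alpha_1, \ldots, \alpha_d) \in \N_0^{d+1}$ with $\alpha_0 + \alpha_1 + \cdots + \alpha_d = n$. Stars-and-bars then yields $\binom{n+d}{d} = \binom{n+d}{n}$. (Alternatively, one could sum the homogeneous count over degrees $0, 1, \ldots, n$ and apply the hockey-stick identity, but the slack-variable reduction is cleaner.)

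There is no real obstacle here — the whole argument is routine once linear independence of monomials is invoked. The only thing to be slightly careful about is stating clearly that the multi-indices range over $\N_0^d$ (not $\N^d$), so that the stars-and-bars count applies directly without off-by-one adjustments.
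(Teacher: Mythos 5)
Your proof is correct and is the standard argument; the paper states Lemma~\ref{lem:poly-dim} as background without proof, so there is nothing to compare against beyond noting that your monomial-basis plus stars-and-bars derivation is exactly what one would expect to supply here.
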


\paragraph*{Varieties and ideals.} Throughout, $V$ is an (affine) \emph{real algebraic variety} in $\R^d$---or, variety for short---meaning that it is the set of points in $\R^d$ on which a set of polynomials in $\R[x]$ vanishes. The associated ideal of all vanishing polynomials is $I(V) = \{p \in \R[x] : p(x) = 0, \forall x \in V \}$. This is always a real radical ideal; in what follows, every ideal $I$ considered is real radical. The vanishing set for an ideal $I \subset \R[x]$ is the variety $V(I) = \{x \in \R^d : p(x) = 0, \forall p \in I\}$. Let $\R[V]$ denote the set of equivalence classes of polynomials in $\R[x]$, where two polynomials are identified if their restrictions to $V$ are identical. By the Real Nullstellensatz Theorem, $\R[V]$ is isomorphic to the coordinate ring $\R[x]/I(V)$. The spaces $\R[x]$ and $\R[V]$ form real vector spaces in the natural way.

\paragraph*{Dimension.} There are several equivalent definitions of the \emph{dimension} $\dim V$ of a variety $V$. An intuitive geometric definition is the maximum $d^{\star}$ for which there exists a sequence $V_0 \subsetneq V_1 \subsetneq \dots \subsetneq V_{d^{\star}}$ of irreducible subvarieties of $V$. Note that unlike manifolds, the dimension of a variety $V$ is not simply the dimension of the tangent space at any point $x \in V$---in fact, these tangent spaces might have dimension different from $\dim V$, or even be undefined altogether if the variety is not smooth at that point. For instance, the union of a disjoint plane and line is a $2$-dimensional variety. A variety is \emph{equidimensional} if each irreducible component has the same dimension.

\paragraph*{Degree.} The \emph{degree} $\deg V$ is the number of intersections over $\C^d$ (counted with intersection multiplicity) of $V$ with a subspace of co-dimension $\dim(V)$ in general position.

\paragraph*{Hilbertian quantities.} Let $I_{\leq n}(V)$ denote the set of polynomials of degree at most $n$ in the vanishing ideal $I(V)$. This is a vector subspace of $\R_{\leq n}[x]$. We denote the quotient space $\R_{\leq n}[x] / I_{\leq n}(V)$ by $\R_{\leq n}[V]$.
The \emph{Hilbert function}\footnote{This is sometimes called the affine Hilbert function to distinguish it from the projective Hilbert function. Similarly for the Hilbert series. We drop the word ``affine'' throughout since there is no confusion.} of $V$ is the function
$\HF_{V}  : \N_0 \to \N_0$ defined by 
\begin{align}
	\HF_V(n)
	= \dim\left( \R_{\leq n}[V] \right),
	\label{eq:HF}
\end{align}
where the notion of dimension here is the one for vector spaces.
The \emph{Hilbert series} (a.k.a., Hilbert-Poincar\'e series) of $V$ is the generating function
\begin{align}
	\HS_{V}(t) = \sum_{n=0}^{\infty} \HF_{V}(n) t^n,
	\label{eq:HS}
\end{align}
viewed as a formal power series. The Hilbert function and series of a variety $V$ are identical to the corresponding Hilbert function and series for its vanishing ideal $I(V)$. 

\begin{lemma}[Dimension and degree in Hilbertian quantities]\label{lem:hilbert}
	Let $V$ be a variety.
	\begin{itemize}
		\item The Hilbert function $\HF_V(n)$ is a polynomial in $n$ for all sufficiently large $n$. This polynomial has degree $\dim V$ and leading coefficient $\deg V / \dim V!$. 
		\item The Hilbert series $\HS_V(t)$ can be expressed as a rational function in $t$ of the form $p_V(t)/(1-t)^{\dim V + 1}$, where $p_V$ is a polynomial with integer coefficients satisfying $p_V(1) = \deg V$.
	\end{itemize}
\end{lemma}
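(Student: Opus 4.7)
The plan is to reduce both claims to the classical Hilbert--Serre theorem by passing from the affine setting to a graded one via homogenization. Introduce a new variable $x_0$ and, for each $f \in \R[x]$ of degree $e$, let $f^h(x_0, x) = x_0^e f(x/x_0)$. Let $I^h \subset \R[x_0, x]$ be the homogeneous ideal generated by $\{f^h : f \in I(V)\}$, and set $A = \R[x_0, x]/I^h$. The dehomogenization map $g(x_0,x) \mapsto g(1,x)$ induces an $\R$-linear isomorphism from the degree-$n$ graded component $A_n$ onto $\R_{\leq n}[V]$ (surjectivity is clear; injectivity uses that a homogeneous $p$ of degree $n$ with $p(1,x)=0$ must vanish identically after expanding in powers of $x_0$). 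Hence $\dim A_n = \HF_V(n)$, and $\HS_V(t)$ coincides with the Hilbert series of $A$ in the usual graded-module sense.

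Next, I would invoke the Hilbert--Serre theorem on $A$, viewed as a finitely generated graded $\R[x_0,x]$-module. This yields $\HS_V(t) = p_V(t)/(1-t)^{m+1}$ for some polynomial $p_V$ with integer coefficients and $p_V(1) \neq 0$, where $m+1$ equals the Krull dimension of $A$. Because the projective scheme cut out by $I^h$ is the projective closure of $V$ and thus has dimension $\dim V$, one obtains $m = \dim V$. Expanding $(1-t)^{-(\dim V + 1)} = \sum_n \binom{n+\dim V}{\dim V} t^n$ and multiplying by $p_V$ shows that, for all $n > \deg p_V$, $\HF_V(n)$ agrees with a polynomial in $n$ of degree $\dim V$ and leading coefficient $p_V(1)/(\dim V)!$.

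The remaining task is to identify $p_V(1)$ with $\deg V$, and this is where I expect the main technical work. The natural strategy is induction on $\dim V$ via generic hyperplane sections. First I would extend scalars to $\C$: this leaves the Hilbert function unchanged because $I(V)$ is real-radical and dimensions of quotients are preserved under the flat base change $\R \hookrightarrow \C$. A Bertini-type genericity argument then ensures that intersecting with a generic affine hyperplane drops the Hilbert polynomial's degree by exactly one while preserving its leading coefficient. After $\dim V$ such cuts one obtains a zero-dimensional scheme $V \cap L$, whose Hilbert polynomial is the constant equal to its length, and by the stated definition of $\deg V$ this length equals the number of intersection points over $\C^d$ counted with multiplicity. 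Matching leading coefficients yields $p_V(1) = \deg V$.

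The chief obstacle lies in this last step: carefully controlling the hyperplane sections (ensuring they meet each irreducible component transversally, so that no excess intersection arises and the dimension drops cleanly) and establishing the base case that the Hilbert polynomial of a zero-dimensional scheme equals its length. Both are standard pieces of intersection theory, but they must be invoked with some care in order to justify $p_V(1) = \deg V$; once in hand, both bullet points of the lemma follow immediately from the first two paragraphs.
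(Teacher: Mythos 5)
The paper states this lemma without proof, treating it as a standard fact from algebraic geometry (citing general references such as Cox--Little--O'Shea and Shafarevich). So there is no proof in the paper to compare against directly; the surrounding material (in particular Lemma~\ref{lem:LT} on leading-term ideals, and the discussion of standard monomials) suggests the paper has in mind the Gr\"obner-basis route: reduce to a monomial ideal, count standard monomials, and read off the Hilbert polynomial. Your sketch instead goes through homogenization, Hilbert--Serre, and generic hyperplane sections, which is the other classical route. Both are textbook approaches; yours arguably makes the degree identity $p_V(1)=\deg V$ more transparent, while the Gr\"obner-basis approach is better suited to the explicit computations the paper needs later (e.g.\ in \S\ref{sec:ex}).

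A few points in your sketch are slightly off, though fixable. First, the injectivity argument is incomplete: you show that dehomogenization $\R[x_0,x]_n \to \R_{\leq n}[x]$ is injective, but what you actually need is that the induced map on quotients $A_n \to \R_{\leq n}[V]$ is injective, i.e.\ that a homogeneous $g$ of degree $n$ with $g(1,x)\in I_{\leq n}(V)$ lies in $(I^h)_n$. This follows by writing $g(1,x)=f$ with $f\in I$ and $\deg f\leq n$, and observing $g = x_0^{n-\deg f} f^h \in I^h$ (using the injectivity you did prove to identify $g$), but you should say so explicitly. Second, the role of real-radicality is misplaced: the equality of $\R$- and $\C$-Hilbert functions holds for \emph{any} ideal $I$ (since $\dim_\C(W\otimes_\R\C)=\dim_\R W$ and extension of scalars commutes with the filtration by degree); what real-radicality actually buys you is that $I(V)\cdot\C[x]$ is radical, so $V_\C$ is reduced and the paper's geometric definition of $\deg V$ (intersection count over $\C^d$) genuinely coincides with the Hilbert-polynomial degree you are computing. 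Finally, the Bertini/hyperplane-section step and the zero-dimensional base case are asserted rather than proved, but you flag this yourself and it is indeed standard; for a self-contained argument you would also need to verify that the hyperplane section drops the Hilbert polynomial's degree by one with the leading coefficient preserved, which amounts to showing the generic hyperplane's defining form is a nonzerodivisor on the top-dimensional part of $A$.
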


\begin{lemma}~\citep{chardin1989majoration}\label{lem:hf-equidim}
	If $V$ is an equidimensional variety, then
	$\HF_V(n) \leq \deg V \binom{n + \dim V}{\dim V}$.
\end{lemma}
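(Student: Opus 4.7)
The plan is to induct on $d^\star := \dim V$ using generic hyperplane sections. The Hilbert function is determined by the ideal $I(V) \subset \R[x]$; tensoring with $\C$ identifies $\HF_V$ with the Hilbert function of the complexification $V_\C \subset \C^d$, whose complex dimension and degree equal those of $V$. I therefore carry out the argument over $\C$, where standard commutative-algebra tools are available.

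Base case $d^\star = 0$: $V_\C$ is a set of at most $\deg V$ points in $\C^d$, so $\C[V_\C]$ has $\C$-dimension at most $\deg V$, giving $\HF_V(n) \leq \deg V = \deg V \binom{n}{0}$ for all $n$.

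Inductive step: Choose a generic linear form $\ell \in \C[x]$. Since $I(V_\C)$ is radical and $V_\C$ is equidimensional of dimension $d^\star$, the associated primes of $I(V_\C)$ are exactly its minimal primes, each of dimension $d^\star$; a generic $\ell$ avoids them all and is thus a nonzerodivisor in $\C[V_\C]$. By a standard Bezout-type transversality argument, a generic $\ell$ also makes $V' := V_\C \cap \{\ell = 0\}$ equidimensional of dimension $d^\star - 1$ and degree $\deg V$. The nonzerodivisor property yields the short exact sequence of graded pieces
\[
0 \longrightarrow \C_{\leq n-1}[V_\C] \xrightarrow{\,\cdot\,\ell\,} \C_{\leq n}[V_\C] \longrightarrow \C_{\leq n}[V'] \longrightarrow 0,
\]
so taking dimensions yields $\HF_V(n) - \HF_V(n-1) = \HF_{V'}(n)$. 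Telescoping this identity and invoking the inductive hypothesis on $V'$ together with the hockey-stick identity gives
\[
\HF_V(n) = \sum_{k=0}^n \HF_{V'}(k) \leq \deg V \sum_{k=0}^n \binom{k + d^\star - 1}{d^\star - 1} = \deg V \binom{n + d^\star}{d^\star}.
\]

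The main obstacle is justifying the genericity step: a single generic $\ell$ must simultaneously (a) avoid every minimal prime of $I(V_\C)$, to be a nonzerodivisor (needed for injectivity on the left of the exact sequence), and (b) cut $V_\C$ into a pure-dimensional variety of the same degree. Both properties rest essentially on equidimensionality: were $V$ to contain a lower-dimensional embedded component, a generic hyperplane could miss it dimensionally, breaking both the exact sequence and the degree-preservation step. This is precisely why the hypothesis appears in the statement, and is what distinguishes Chardin's bound from the weaker bounds that hold for arbitrary varieties.
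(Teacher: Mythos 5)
The paper does not prove this lemma; it cites it directly from Chardin~\citep{chardin1989majoration}, so there is no in-paper proof to compare against. Evaluated on its own, your proposal has a genuine gap: the short exact sequence
\[
0 \longrightarrow \C_{\leq n-1}[V_\C] \xrightarrow{\,\cdot\,\ell\,} \C_{\leq n}[V_\C] \longrightarrow \C_{\leq n}[V'] \longrightarrow 0
\]
is generally \emph{not} exact in the middle, even for generic $\ell$ and reduced equidimensional $V$. The nonzerodivisor property gives injectivity on the left; exactness in the middle would require $I_{\leq n}(V') = I_{\leq n}(V) + \ell\cdot\C_{\leq n-1}[x]$, i.e., that $I(V)+(\ell)$ already cuts out $V'$ as a reduced scheme \emph{with the correct degree bookkeeping}. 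Equivalently (passing to the projective closure $\hat V$ to sidestep the affine degree-truncation subtleties), one needs $\hat I(V)+(\hat\ell)$ to be saturated, which holds only when $\hat V$ is arithmetically Cohen--Macaulay. Equidimensionality does not guarantee this.

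A concrete counterexample: let $V \subset \C^3$ be the union of the two skew lines $L_1=\{(t,0,0)\}$ and $L_2=\{(0,s,1)\}$, a smooth equidimensional curve of dimension $1$ and degree $2$. Then $\HF_V(0)=1$ and $\HF_V(1)=4$, while $V'$ is two points, so $\HF_{V'}(1)=2$. Your sequence at $n=1$ has dimensions $1\to 4\to 2$, with alternating sum $-1\neq 0$. The failure is visible directly: $I_{\leq 1}(V')$ is $2$-dimensional (two independent linear forms vanish on the two points), whereas $I_{\leq 1}(V)+\ell\cdot\C_{\leq 0}[x]$ is just the line spanned by $\ell$. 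Consequently the telescoping identity $\HF_V(n)-\HF_V(n-1)=\HF_{V'}(n)$ is false here; it would give $\HF_V(n)=2n+1$, whereas the true value is $2(n+1)$. So the lemma holds for this $V$ (the bound is even tight), but your argument underestimates $\HF_V$. The correct exact sequence replaces $\C_{\leq n}[V']$ with $(R/(\hat I+(\hat\ell)))_n$, which can exceed $\HF_{V'}(n)$ in low degrees; the induction therefore must be run at the level of (possibly non-reduced, non-saturated) ideals, and Chardin's actual argument requires an additional device to control those low-degree excess terms. That device is the missing ingredient in the proposal.
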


Throughout we present our results for equidimensional varieties; this assumption holds in all example varieties in this paper, and can be removed without changing the asymptotics in our results. Equidimensionality let us non-asymptotically bound the Hilbert Function via Lemma~\ref{lem:hf-equidim}; nevertheless, the same asymptotics $\HF_V(n) = \deg(V) n^{\dim V} + O(n^{\dim V-1})$ hold in general by Lemma~\ref{lem:hilbert}. The difference is that without equidimensionality, the non-asymptotic (a.k.a. transient) behavior of $\HF_V(n)$ can change since the lower-order terms 
can depend on low-dimensional components of $V$. For instance, if $V$ is a line unioned with many points, then the dimension and degree of $V$ are dictated by the line (i.e., $\dim V = 1$ and $\deg V = 1$), whereby $\HF_V(n) = n + c_V$ for all sufficiently large $n$. However, the constant $c_V$ grows in the number of additional points in $V$. 
Clearly some control on this effect is necessary since if the number of unioned points is sufficiently large, then the dimensionality of the coordinate ring increases.

\paragraph*{Computing properties of a variety.} A \emph{monomial ideal} is an ideal that is generated by monomials. If $I(V)$ is a monomial ideal, then the Hilbert function $\HF_{V}(n)$ is equal to the number of monomials in $\R_{\leq n}[x]$ that are not in $I$. These monomials are called \emph{standard monomials} and can be counted via inclusion-exclusion given a list of monomial generators for $I$. The dimension and degree of $V$ can then be read off from the Hilbert function via Lemma~\ref{lem:hilbert}.
\par Of course, $I(V)$ is not always a monomial ideal. For general varieties $V$, the Hilbert function can be computed algorithmically using Gr\"obner bases. Fix a graded monomial ordering on $\R[x]$.
The \emph{leading term} of a polynomial is the largest monomial w.r.t. that ordering.
The \emph{leading term ideal} $\LT(I)$ of $I$ is the monomial ideal generated by the leading term of each element of $I$. A \emph{Gr\"obner basis} for $I$ is a finite set of generators $\{g_i\}$ such that $\LT(I)$ equals the ideal generated by $\{\LT(g_i)\}$. The reason that a Gr\"obner basis helps to compute Hilbert functions is the following lemma, which reduces the general case of arbitrary ideals $I$ to the simpler case of monomial ideals.

\begin{lemma}[Reduction from general ideals to monomial ideals]\label{lem:LT}
	Let $I$ be an ideal in $\R[x]$. For any graded monomial ordering on $\R[x]$, the Hilbert functions of $V(I)$ and $V(\LT(I))$ are identical.
\end{lemma}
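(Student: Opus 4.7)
The plan is to identify both Hilbert functions with the dimensions of concrete quotient spaces, and then exhibit a common basis: namely the \emph{standard monomials} of degree at most $n$, i.e., the monomials in $\R_{\leq n}[x]$ that do not lie in $\LT(I)$. Concretely, by the conventions of Section~\ref{ssec:prelim:ag}, $\HF_{V(I)}(n) = \dim\bigl(\R_{\leq n}[x]/I_{\leq n}\bigr)$ (using that $I$ is real radical, so it is the vanishing ideal of $V(I)$), and likewise $\HF_{V(\LT(I))}(n) = \dim\bigl(\R_{\leq n}[x]/\LT(I)_{\leq n}\bigr)$. It therefore suffices to show that both quotients have the standard monomials of degree $\leq n$ as a vector-space basis.

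The monomial-ideal side is immediate. Because $\LT(I)$ is generated by monomials, $\LT(I)_{\leq n}$ is precisely the linear span of the monomials in $\LT(I)$ of degree $\leq n$; hence $\R_{\leq n}[x]/\LT(I)_{\leq n}$ is spanned by the standard monomials of degree $\leq n$, and these are linearly independent in the quotient since no nontrivial $\R$-combination of standard monomials can lie in a monomial ideal that contains none of them.

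The content of the lemma lies in proving the same statement for the quotient $\R_{\leq n}[x]/I_{\leq n}$, and this is where a Gr\"obner basis $\{g_1,\ldots,g_s\}$ of $I$ enters. For spanning, I would apply the multivariate division algorithm to an arbitrary $f \in \R_{\leq n}[x]$, producing $f = \sum_i q_i g_i + r$ where $r$ is supported on standard monomials. For linear independence, I would argue that any nonzero $\R$-combination $h = \sum_\alpha c_\alpha x^\alpha$ of standard monomials lying in $I_{\leq n}$ would satisfy $\LT(h) \in \LT(I)$, contradicting that $\LT(h)$ is itself a standard monomial. This yields the desired basis, and hence equality of dimensions.

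The one step that genuinely requires the hypothesis is the spanning argument: without further assumptions the division algorithm produces a remainder $r$ that a priori need not satisfy $\deg r \leq \deg f$, in which case $r$ might escape $\R_{\leq n}[x]$ and the congruence $f \equiv r \pmod{I_{\leq n}}$ would fail. The main obstacle is therefore to observe that a \emph{graded} monomial ordering refines total-degree comparison, so at each reduction step the monomial removed and the monomials introduced from $q_i g_i$ all have degree at most $\deg f$. Consequently $\deg r \leq \deg f \leq n$ and $f - r \in I_{\leq n}$, completing the basis verification and thus the proof.
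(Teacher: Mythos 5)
The paper does not supply a proof of this lemma; it is the classical Gr\"obner-basis fact (essentially Macaulay's theorem) delegated to the standard references. Your argument is the textbook proof of that fact and is substantively correct: both quotients $\R_{\leq n}[x]/I_{\leq n}$ and $\R_{\leq n}[x]/\LT(I)_{\leq n}$ have the standard monomials of degree at most $n$ as a basis, with the monomial-ideal side immediate and the $I$-side handled by the division algorithm for spanning and by ``$\LT(h)$ is a standard monomial'' for independence. You have also correctly isolated the one place where gradedness of the order is genuinely used: the division algorithm controls the \emph{multidegree} of the remainder by that of the dividend, and a graded order promotes this to control of total degree, so that $\deg r \leq \deg f$ and hence $f - r \in I_{\leq n}$ rather than merely $f - r \in I$. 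That is exactly the point that makes the degree-truncated quotients match.

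One precision issue, which (to be fair) is already latent in the lemma's own phrasing and does not affect the substance of your argument: the identity ``$\HF_{V(\LT(I))}(n) = \dim(\R_{\leq n}[x]/\LT(I)_{\leq n})$'' is not a consequence of $I$ being real radical, because $\LT(I)$ is almost never radical. The paper defines $\HF_W$ for a variety $W$ via its \emph{vanishing ideal} $I(W)$, and $I(V(\LT(I)))$ is typically strictly larger than $\LT(I)$. For instance, with $I = \langle x^2 + y^2 - 1 \rangle$ and grevlex one gets $\LT(I) = \langle x^2 \rangle$, yet $I(V(\langle x^2\rangle)) = \langle x \rangle$; the variety $V(\LT(I))$ is a line with Hilbert function $n+1$, whereas $\dim(\R_{\leq n}[x,y]/\LT(I)_{\leq n}) = 2n+1 = \HF_{V(I)}(n)$. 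So read literally the statement about varieties would be false. What your proof actually establishes---and what every downstream use in the paper requires, e.g.\ Example~\ref{ex:ip-bound}---is the equality of the \emph{ideal} Hilbert functions $\dim(\R_{\leq n}[x]/I_{\leq n}) = \dim(\R_{\leq n}[x]/\LT(I)_{\leq n})$. Just drop the word ``likewise'' and state directly that you are computing the Hilbert function of the monomial ideal $\LT(I)$, not of the variety $V(\LT(I))$; the rest of the argument stands.
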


Thus, given a Gr\"obner basis $\{g_i\}$ of an arbitrary ideal $I$, one can form a description of the leading term ideal $\LT(I)$ as the monomial ideal generated by $\{\LT(g_i)\}$, and then use this to compute the Hilbert function of $V(I)$ by counting the number of standard monomials in $\LT(I)$. This machinery is demonstrated through several concrete examples when we use it in \S\ref{sec:high-prec} and \S\ref{sec:ex}.

\subsection{Standard approaches for kernel approximation}\label{ssec:prelim:standard}

Here we describe two of the most popular low-rank approximation approaches since we build upon them in the sequel. These are polynomial-based approaches~\citep{YanDurDav05,CotKesSre11,WanLiDar18,zwicknagl2009power, schaback2008limit, wathen2015spectral} and Random Fourier Features (RFF)~\citep{RahRec08,liu2020random}. The former is suited for high-precision regimes, whereas the latter is suited for high-dimensional regimes.
Each of these two approaches consists of two steps:
\begin{enumerate}
	\item Expand the relevant kernel
	\begin{align}
		K(x,y) = \int f_{\omega}(x) f_{\omega}(y) d\nu(\omega) \label{eq:prelim:int-rep}
	\end{align}
	as a convex combination of rank-$1$ functions. Here $\nu$ is a probability distribution---continuous if the representation~\eqref{eq:prelim:int-rep} is an integral, or discrete if~\eqref{eq:prelim:int-rep} is a sum.
	\item Form a rank-$r$ approximation by taking $r$ of these infinitely many rank-$1$ functions. 
\end{enumerate}

\par To explain the difference between polynomial-based approaches and RFF, let us begin with how they perform step (2). On one hand, polynomial-based approaches greedily choose the $r$ rank-$1$ functions $f_{\omega}(x)f_{\omega}(y)$ with largest weight $d\nu(\omega)$. On the other hand, RFF independently samples $r$ random rank-$1$ functions $f_{\omega}(x)f_{\omega}(y)$ according to the distribution $d\nu(\omega)$. 

\par This difference in step (2) necessitates strikingly different kinds of representations~\eqref{eq:prelim:int-rep} in step (1). Intuitively, the greedy truncation scheme performs well on representations in which $\nu$ is a discrete distribution with rapidly decaying tails. In contrast, the random sampling scheme performs well on representations in which the functions $f_{\omega}$ have small magnitude. In particular, the representations~\eqref{eq:prelim:int-rep} used in step (1) are as follows.
\begin{itemize}
	\item \underline{Polynomial-based representation}:
	\begin{align}
		K(x,y)
		=
		\sum_{n=0}^{\infty} c_n p_n(x,y)
		=
		\sum_{n=0}^{\infty} c_n \sum_{\alpha=1}^{\binom{n+d-1}{n}} u_{n,\alpha}(x)v_{n,\alpha}(y).
		\label{eq:tay-rep}
	\end{align}
	This representation first expands $K$ into the sum of polynomial kernels $p_n$ of degree at most $n$ (typically via monomial expansions or Chebyshev expansions), and then factorizes each $p_n$. The inner sum is over $\binom{n+d-1}{n}$ polynomials because this is the dimension of the space of degree-$n$ homogeneous polynomials on $\R^d$.
	\item \underline{RFF representation}:
	\begin{align}
		K(x,y) = \E_{\omega,\theta}
		\left[ \cos(\langle \omega, x \rangle + \theta) \cos(\langle \omega, y \rangle + \theta)  \right].	
		\label{eq:rff-int-rep}
	\end{align}
	Here $\omega$ is sampled from the Fourier transform $\mu$ of $k(x-y) = K(x,y)$; this is a probability distribution by Bochner's Theorem if $\mu$ is a continuous, PD, translation-invariant kernel normalized so that $K(0,0)=1$, see e.g.,~\citep{WilRas06}. Independently, $\theta$ is sampled from the uniform distribution over $[0,2\pi)$. This representation is obtained by simple trigonometric manipulation of the Fourier transform identity. 
\end{itemize}

Given that both approaches seek to optimize the $L^{\infty}$ error metric, a natural question is why use one representation and not the other? The answer is based on the parameter regime. 
\par On one hand, the fact that the representation~\eqref{eq:tay-rep} is a finite sum with coefficients that decay exponentially fast if $K$ is smooth, means that exponentially small error is obtained by truncating. This is critical for high-precision regimes. However, the issue with this approach is that the rank grows as $\Omega(n)^d$ in the truncation degree $n$, and this prohibitive beyond low dimensions $d$.
\par On the other hand, the fact that the integrand in the RFF representation~\eqref{eq:rff-int-rep} is bounded in magnitude by $1$, means that sampling-based quadrature converges at standard statistical rates which scale well in the dimension $d$. This is critical for high-dimensional regimes. However, the issue with this approach is that statistical rates require roughly $\Omega(1/\eps^2)$ samples in order to obtain $\eps$ accuracy, which is prohibitive for accuracies $\eps$ beyond a few digits of precision.
\par Details on each of these methods and their formal guarantees follow.

\subsubsection{High-precision approximation via polynomial features}
As discussed in \S\ref{ssec:intro:cont}, there are many ways to quantify smoothness of a kernel. For simplicity, we assume (1) isotropy, meaning that the kernel admits a representation $K(x,y) = f(\|x-y\|^2)$; and (2) the univariate function $f$ satisfies the following smoothness condition. Note that we write $K(x,y) = f(\|x-y\|^2)$ rather than $f(\|x-y\|)$ since in the latter case, $K$ might not be smooth even if $f$ is; e.g., take $f$ to be the identity.

\begin{assumption}\label{assump:smooth}
	There exist constants $\alpha > 0$ and $\beta \in (0,1)$ such that for all $n \in \N$, there is a polynomial $p_n$ of degree $n$ satisfying $\|f - p_n\|_{[0,4]} \leq \alpha \cdot \beta^{n}$.
\end{assumption}

A classical result of Bernstein from over a century ago shows that this assumption is essentially equivalent to analyticity of $f$ in a complex neighborhood around the approximation domain.
See also~\citep{Tre13} for other smoothness conditions that lead to fast rates for polynomial approximation.

\begin{lemma}[Analyticity implies approximation~\citep{bernstein1912ordre}]\label{lem:smooth}
	Suppose $f$ is analytically continuable to the Bernstein ellipse $E_{\rho} = \mathrm{Interior}(\{z + z^{-1} + 2 : z \in \C, |z| = \rho\})$ of parameter $\rho > 1$ around $[0,4]$. Then $f$ satisfies Assumption~\ref{assump:smooth} with $\alpha = 2 \|f\|_{E_{\rho}}/(\rho-1)$  and $\beta = 1/\rho$. 
\end{lemma}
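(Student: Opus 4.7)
The plan is to reduce the statement to the classical Chebyshev expansion bound on $[-1,1]$ via an affine change of variables. Define $g(t) := f(2t+2)$ for $t \in [-1,1]$. The affine map $t \mapsto 2t+2$ sends the standard Bernstein ellipse $E_{\rho}^{\mathrm{std}} = \{(z+z^{-1})/2 : |z| = \rho\}$ (with foci $\pm 1$) to the ellipse $\{z + z^{-1} + 2 : |z| = \rho\}$ (with foci $0,4$), which is exactly the $E_\rho$ in the statement. Hence $g$ is analytic on $E_{\rho}^{\mathrm{std}}$ with $\|g\|_{E_{\rho}^{\mathrm{std}}} = \|f\|_{E_{\rho}} =: M$.

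Next I would invoke the classical Chebyshev coefficient bound for functions analytic on a Bernstein ellipse: writing $g(t) = \sum_{k=0}^{\infty} a_k T_k(t)$ with Chebyshev polynomials $T_k$, one has $|a_k| \leq 2 M \rho^{-k}$ for $k \geq 1$ (and $|a_0| \leq M$). The standard derivation is to express each $a_k$ via the contour integral
\[
a_k = \frac{1}{\pi i} \oint_{|z|=\rho} g\!\left(\tfrac{z+z^{-1}}{2}\right) z^{-k-1} \, dz,
\]
and then bound the integrand in modulus by $M$ on $|z|=\rho$.

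Truncating the Chebyshev series gives the approximating polynomial $q_n(t) := \sum_{k=0}^n a_k T_k(t)$ of degree $n$. Using $|T_k(t)| \leq 1$ on $[-1,1]$ and the geometric tail sum,
\[
\|g - q_n\|_{[-1,1]} \leq \sum_{k=n+1}^{\infty} |a_k| \leq 2M \sum_{k=n+1}^{\infty} \rho^{-k} = \frac{2M}{\rho - 1} \rho^{-n}.
\]
Finally, pulling back to the original variable by setting $p_n(s) := q_n((s-2)/2)$ (still a polynomial of degree $n$) yields $\|f - p_n\|_{[0,4]} = \|g - q_n\|_{[-1,1]} \leq \tfrac{2M}{\rho-1} \rho^{-n}$, which is exactly Assumption~\ref{assump:smooth} with $\alpha = 2\|f\|_{E_\rho}/(\rho-1)$ and $\beta = 1/\rho$.

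The only nontrivial step is the Chebyshev coefficient estimate, and even that is standard; the main thing to be careful about is matching the definition of $E_\rho$ in the statement to the standard Bernstein ellipse under the affine rescaling, which determines the precise constant in $\alpha$. Everything else is routine manipulation.
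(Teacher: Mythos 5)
Your proof is correct, and it is the standard derivation of Bernstein's theorem, which is exactly what the paper is invoking by citing \citep{bernstein1912ordre} (the paper itself gives no proof, just the citation). The affine change of variables $t \mapsto 2t+2$ correctly maps the standard Bernstein ellipse with foci $\pm 1$ to the ellipse $\{z+z^{-1}+2 : |z|=\rho\}$ with foci $0,4$, the Chebyshev coefficient bound $|a_k| \leq 2M\rho^{-k}$ via the contour integral is correct, and the geometric tail sum yields precisely $\alpha = 2\|f\|_{E_\rho}/(\rho-1)$ and $\beta = 1/\rho$. No gaps.
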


This lemma ensures that Assumption~\ref{assump:smooth} is satisfied for popular kernels such as the Gaussian kernel, in which case $f(t) = e^{-t/2}$ is entire, and the Cauchy kernel, in which case $f(t) = (1+t/2)^{-1}$ has a pole at $-3/2$ and thus is analytically continuable to any Bernstein ellipse $E_{\rho}$ with parameter $\rho < \rho_{\max} = 7/4 + \sqrt{33}/4 \approx 3.186$.

Standard rates for approximating smooth isotropic kernels are immediate from combining this implication of smoothness with simple rank bounds on bounded-degree polynomial kernels, see e.g.,~\citep{WanLiDar18}. For completeness, we provide a short proof.

\begin{prop}[Standard rates for high-precision approximation]\label{prop:poly-standard}
	Suppose $K(x,y) = f(\|x-y\|^2)$ where $f$ satisfies Assumption~\ref{assump:smooth}. There is a universal constant $c$ such that for all $\eps > 0$, there is a kernel of rank 
	\[
	r \leq 
	\left(c \frac{\log(\alpha/\eps)}{\log (1/\beta)} \right)^{d}
	= O\left(\log1/\eps \right)^d.
	\] 
	that approximates $K$ on $\ball \times \ball$ to $L^{\infty}$ error $\eps$.
\end{prop}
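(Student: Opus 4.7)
The strategy is to reduce the kernel approximation problem to univariate polynomial approximation, then bound the rank of the resulting polynomial kernel via the dimension of the ambient polynomial space.

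First, I would use Assumption~\ref{assump:smooth} to obtain, for each $n \in \N$, a univariate polynomial $p_n$ of degree $n$ with $\|f - p_n\|_{[0,4]} \leq \alpha \beta^n$. The interval $[0,4]$ is the right one because for any $x,y \in \ball$, the argument $\|x-y\|^2$ lies in $[0,4]$. Define the candidate approximation
\begin{equation*}
  K_n(x,y) \defeq p_n\bigl(\|x-y\|^2\bigr).
\end{equation*}
Then $\sup_{x,y \in \ball} |K(x,y) - K_n(x,y)| \leq \alpha \beta^n$, so choosing $n = \lceil \log(\alpha/\eps)/\log(1/\beta)\rceil$ gives $L^{\infty}$ error at most $\eps$ on $\ball \times \ball$.

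Next I would bound the rank of $K_n$. Since $\|x-y\|^2$ is a polynomial of total degree $2$ in the entries of $x$ (with $y$ treated as a parameter), the function $x \mapsto K_n(x,y)$ is a polynomial of degree at most $2n$. Thus $K_n(x,y)$ can be expanded in the monomial basis as
\begin{equation*}
  K_n(x,y) \;=\; \sum_{|\alpha|\leq 2n} c_\alpha(y)\,x^\alpha,
\end{equation*}
and by symmetry each coefficient $c_\alpha(y)$ is itself a polynomial in $y$ of degree at most $2n$. This gives an explicit rank-$\binom{2n+d}{d}$ factorization via the feature maps $\phi(x) = (x^\alpha)_{|\alpha|\leq 2n}$ and $\psi(y) = (c_\alpha(y))_{|\alpha|\leq 2n}$; the bound $\binom{2n+d}{d}$ on the number of such multi-indices comes from Lemma~\ref{lem:poly-dim}.

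Finally, I would substitute the chosen $n$ into the rank bound and simplify. For $\eps$ small enough that $n \geq 1$, the elementary bound $\binom{2n+d}{d} \leq (2n+d)^d \leq (c' n)^d$ (for a suitable universal $c'$) shows that
\begin{equation*}
  r \;\leq\; \binom{2n+d}{d} \;\leq\; \left(c\,\frac{\log(\alpha/\eps)}{\log(1/\beta)}\right)^d
\end{equation*}
for a universal constant $c$; the trivial edge case $\eps \geq \alpha$ is handled by taking $n = 0$ and $K_0$ constant, giving rank $1$. The only real ``obstacle'' is keeping track of the edge cases and verifying that a single universal $c$ suffices; everything else is routine. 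Notice that this argument exploits no structure beyond the representation $K(x,y) = f(\|x-y\|^2)$ and the dimension count $\dim \R_{\leq 2n}[x] = \binom{2n+d}{d}$, which is precisely the bottleneck the rest of the paper will overcome by restricting to a variety and replacing this dimension by $\HF_V(2n)$.
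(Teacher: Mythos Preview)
Your proposal is correct and follows essentially the same route as the paper: pick $n=\lceil \log(\alpha/\eps)/\log(1/\beta)\rceil$, set $K_n(x,y)=p_n(\|x-y\|^2)$, and bound $\rank K_n\leq \dim \R_{\leq 2n}[x]=\binom{2n+d}{d}$ via Lemma~\ref{lem:poly-dim}. The only wrinkle is the final estimate $(2n+d)^d\leq (c'n)^d$ for a \emph{universal} $c'$, which fails as stated (take $n=1$, $d$ large); use instead the sharper bound $\binom{2n+d}{d}\leq\bigl(e(2n+d)/d\bigr)^d\leq (3en)^d$ for $n\geq 1$, which the paper's ``crude bound'' implicitly invokes.
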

\begin{proof}
	By Assumption~\ref{assump:smooth}, there is a polynomial $p_n$ of degree $n = \lceil \log(\alpha/\eps)/\log(1/\beta) \rceil$ satisfying $\|f-p_n\|_{[0,4]} \leq \eps$. Thus the kernel $K_n(x,y) = p_n(\|x-y\|^2)$ satisfies $\|K - K_n\|_{\ball \times \ball} \leq \eps$. Since $K_n$ has degree at most $2n$ in each argument, it follows from Lemma~\ref{lem:poly-dim} and a crude bound that $\rank K_n \leq \dim(\R_{\leq 2n}[x]) = \binom{2n+d}{d} = O(n)^d$.
\end{proof}

\begin{remark}[Taylor Features]\label{rem:tay}
	For the Gaussian kernel $G(x,y) = e^{-\|x-y\|^2/(2\sig^2)}$, one can obtain rank bounds which are slightly better in practice albeit the same asymptotically. The trick is to factor out the scalings $e^{-(\|x\|^2+\|y\|^2)/(2\sig^2)}$ and then approximate the remainder $e^{\langle x,y \rangle/\sig^2}$ via a rotation-invariant polynomial kernel $p_n(\langle x, y\rangle)$. The point is that the scaling factors do not affect the rank, and a rotation-invariant polynomial kernel $p_n(\langle x,y \rangle)$ generically has lower rank than an isotropic polynomial kernel $q_n(\|x-y\|^2)$ for $p_n$ and $q_n$ of the same degree (although both ranks are asympotically the same $O(n)^d$). Specifically, the popular Taylor Features kernel is
	$
		T_n(x,y) = e^{-(\|x\|^2+\|y\|^2)/(2\sig^2)} \sum_{k=0}^n \tfrac{\langle x,y \rangle^k}{\sig^{2k} k!}$, see e.g.,~\citep{YanDurDav05,YanDurGum03,CotKesSre11}.
\end{remark}

\subsubsection{High-dimensional approximation via Random Fourier Features}

The RFF kernel of rank $r$ is the empirical mean of $r$ samples of the integral representation~\eqref{eq:rff-int-rep}; that is,
\begin{align}
	K_r(x,y) := \frac{1}{r} \sum_{i=1}^r \cos(\langle \omega_i, x_i \rangle + \theta_i) \cos(\langle \omega_i, y \rangle + \theta_i)
	\label{eq:rff}
\end{align}
where $\omega_1, \dots, \omega_r$ are sampled from the Fourier transform $\mu$ of $k(x-y) = K(x,y)$, and $\theta_1, \dots, \theta_r \sim \Unif([0,2\pi))$ are all sampled independently. The guarantees of this approach are summarized as follows; see~\citep{RahRec08} for a proof. 

\begin{assumption}\label{assump:rff}
	$K$ is a continuous, positive-definite, translation-invariant kernel on $\R^d$ with normalization $K(0,0)=1$ and curvature $\sig_K^2$. 
\end{assumption}

\begin{prop}[Standard rates for high-dimensional approximation]\label{prop:rff-standard}
	Suppose $K$ satisfies Assumption~\ref{assump:rff}. Then the kernel $K_r$ in~\eqref{eq:rff} has rank at most $r$ and satisfies $\| K - K_r \|_{\ball \times \ball} \leq \eps$ with any constant probability for 
	\[
	r = O\left( d \,\frac{\log(\sig_K / \eps)}{\eps^2}  \right).
	\]
\end{prop}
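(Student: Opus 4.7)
My approach follows the classical concentration-plus-covering strategy. First, by Bochner's theorem the Fourier transform $\mu$ of $k$ (where $k(x-y)=K(x,y)$) is a probability measure, so the integral representation~\eqref{eq:rff-int-rep} shows that $K_r(x,y)$ is an unbiased estimator of $K(x,y)$ for each fixed pair $(x,y)$. Since every summand in~\eqref{eq:rff} lies in $[-1,1]$, Hoeffding's inequality gives the pointwise concentration bound $\Prob[|K_r(x,y)-K(x,y)|>\eps/2] \leq 2\exp(-r\eps^2/8)$.

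To pass from pointwise to uniform control on $\ball \times \ball$, I would take a $\delta$-net $\cN$ of $\ball$ of size $|\cN| \leq (3/\delta)^d$, apply a union bound over the $|\cN|^2$ pairs to get $\eps/2$ accuracy uniformly on $\cN \times \cN$ with any prescribed constant probability as long as $r \gtrsim d\log(1/\delta)/\eps^2$, and then use a Lipschitz estimate on the error $s(x,y) := K_r(x,y)-K(x,y)$ to extend to the full domain: if $s$ has Lipschitz constant $L$ over $\ball \times \ball$, then choosing $\delta \leq \eps/(4L)$ yields $\|K-K_r\|_{\ball\times\ball}\leq \eps$ on the good event.

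The main technical obstacle is controlling $L$ by a quantity independent of $r$ and only polylogarithmic in $\eps$. Here the curvature assumption enters. Writing $k(z) = \int e^{i\langle\omega,z\rangle}\,d\mu(\omega)$ and differentiating twice at zero yields the identity $\E_{\omega\sim\mu}[\|\omega\|^2] = \sigma_K^2$. Since each cosine summand in~\eqref{eq:rff} has gradient of Euclidean norm at most $\|\omega_i\|$, the expected squared Lipschitz seminorm of $s$ is $O(\sigma_K^2)$; a Markov inequality on this seminorm (identified with $\sup_{x,y}\|\nabla s(x,y)\|$ after a mild smoothing argument on the compact convex domain, as in~\citep{RahRec08}) then gives $L = O(\sigma_K)$ with any prescribed constant probability.

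Finally, balancing: setting $\delta \asymp \eps/\sigma_K$ makes the Lipschitz extension cost exactly $\eps/2$ and converts $\log(1/\delta)$ into $\log(\sigma_K/\eps)$. Combined with $r \gtrsim d\log(1/\delta)/\eps^2$ from the net step, this gives the announced rate $r = O(d\log(\sigma_K/\eps)/\eps^2)$.
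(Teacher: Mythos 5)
Your argument is correct and is essentially the classical Rahimi--Recht proof (Hoeffding pointwise concentration, union bound over a $\delta$-net of $\ball$, Lipschitz extension via a Markov bound on the gradient using $\E_{\omega\sim\mu}\|\omega\|^2=\sigma_K^2$, then balancing $\delta\asymp\eps/\sigma_K$). The paper does not reprove this proposition but simply cites~\citep{RahRec08}, which is exactly the argument you reconstruct, so the approaches coincide.
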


Note that for a kernel $K$ satisfying Assumption~\ref{assump:rff}, its curvature $\sig_K^2$, defined as the trace of the Hessian of $k(x-y) = K(x,y)$ at $0$, is equal to $\E_{\omega \sim \mu} \|\omega\|^2$~\citep{RahRec08}.

\section{Kernel approximation over a variety: high-precision regime}\label{sec:high-prec}

Here we provide an exponential improvement in the rank~\eqref{eq:curse-of-dim} required by high-precision approaches for kernel approximation. Specifically, we show that if the approximation domain is a low-dimensional algebraic variety $V$ in a high-dimensional ambient space $\R^d$, then the curse of dimensionality for high-precision approaches can be alleviated: the exponential dependence in the ambient dimension $d$ is improvable to exponential dependence in the variety's dimension $\dim V$.

\begin{theorem}[High-precision approximation over a variety]\label{thm:main-var} 
	Suppose $K(x,y) = f(\|x-y\|^2)$ where $f$ satisfies Assumption~\ref{assump:smooth}. Suppose also $\cX = V \cap \ball$, where $V \subset \R^d$ is an equidimensional real algebraic variety. 
	There is a universal constant $c$ such that for all $\eps > 0$, there is a kernel of rank 
	\begin{align}
		r \leq \deg (V) \left(c \frac{\log(\alpha/\eps)}{\log (1/\beta)} \right)^{\dim V}.
		\label{eq:main-var:r}
	\end{align}
	that approximates $K$ on $\cX \times \cX$ to $L^{\infty}$ error $\eps$.
\end{theorem}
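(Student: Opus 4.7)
The strategy is the standard ``smooth $\Rightarrow$ polynomial kernel'' reduction used in Proposition~\ref{prop:poly-standard}, but with the polynomial rank bound computed in the coordinate ring of $V$ rather than in $\R[x]$. This is exactly the source of rank reduction advertised in \S\ref{ssec:intro:tech}.

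\textbf{Step 1: Approximate $f$ by a polynomial.} Set $n = \lceil \log(\alpha/\eps)/\log(1/\beta)\rceil$. By Assumption~\ref{assump:smooth}, there is a univariate polynomial $p_n$ of degree $n$ with $\|f - p_n\|_{[0,4]} \leq \alpha \beta^n \leq \eps$. Define the polynomial kernel $K_n(x,y) := p_n(\|x-y\|^2)$. Since $\|x-y\|^2 \in [0,4]$ whenever $x,y \in \ball$, we obtain $\|K - K_n\|_{\cX \times \cX} \leq \eps$. It remains to bound the rank of $K_n$ restricted to $\cX\times \cX$.

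\textbf{Step 2 (main step): Bound the rank of $K_n$ over $V\times V$ using the coordinate ring.} Observe that $K_n(x,y)$ has degree at most $2n$ in $x$ and degree at most $2n$ in $y$ separately. Fix any basis $e_1,\dots,e_m$ of $\R_{\leq 2n}[V]$, where $m = \HF_V(2n)$. For $x,y \in V$, each monomial $x^\alpha$ with $|\alpha|\leq 2n$ agrees on $V$ with a unique linear combination of $e_1,\dots,e_m$, so we may write
\[
K_n(x,y) \;=\; \sum_{i,j=1}^{m} A_{ij}\, e_i(x)\, e_j(y), \qquad x,y \in V,
\]
for some symmetric matrix $A \in \R^{m\times m}$. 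Taking the spectral decomposition $A = \sum_{k=1}^{\rank A} \lambda_k u_k u_k^T$ and setting $\phi_k(x) := \sqrt{|\lambda_k|}\sum_i u_{k,i} e_i(x)$, $\psi_k(y) := \sign(\lambda_k)\phi_k(y)$, produces a rank-$(\rank A)$ factorization of $K_n|_{V\times V}$. Hence
\[
\rank\bigl(K_n|_{\cX\times \cX}\bigr)\;\leq\; \rank\bigl(K_n|_{V\times V}\bigr)\;\leq\; \rank(A)\;\leq\; m \;=\; \HF_V(2n),
\]
where the first inequality uses that restriction cannot increase rank.

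\textbf{Step 3: Apply the Hilbert function bound and conclude.} Because $V$ is equidimensional, Lemma~\ref{lem:hf-equidim} yields
\[
\HF_V(2n) \;\leq\; \deg(V)\binom{2n + \dim V}{\dim V} \;\leq\; \deg(V)\,(c'\, n)^{\dim V}
\]
for a universal constant $c'$. Substituting $n = \lceil \log(\alpha/\eps)/\log(1/\beta)\rceil$ gives the advertised bound~\eqref{eq:main-var:r}.

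\textbf{Where the difficulty lies.} Steps 1 and 3 are essentially mechanical once the right tools (Assumption~\ref{assump:smooth} and Lemma~\ref{lem:hf-equidim}) are in hand. The conceptual content is entirely in Step 2: recognizing that the generic rank bound $\dim \R_{\leq 2n}[x] = O(n)^d$ for polynomial kernels is replaceable by $\dim \R_{\leq 2n}[V] = \HF_V(2n)$ once we restrict to $V\times V$. Everything else is routine. (In fact, one could sharpen the constant $c$ a little by working with the homogeneous part of $K_n$ of each degree separately, using Lemma~\ref{lem:poly-dim}, but this is not needed for the stated asymptotics.)
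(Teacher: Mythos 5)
Your proof is correct and follows essentially the same route as the paper: approximate $f$ by a degree-$n$ polynomial via Assumption~\ref{assump:smooth}, then bound the rank of the resulting polynomial kernel restricted to $V\times V$ by $\HF_V(2n)$, and finish with Lemma~\ref{lem:hf-equidim}. The only difference is presentational—where the paper invokes Proposition~\ref{prop:rank-asymp}, you reprove the needed upper-bound half of it inline by expanding $K_n|_{V\times V}$ as a symmetric bilinear form in a basis $e_1,\dots,e_m$ of $\R_{\leq 2n}[V]$ and diagonalizing; note that spectral decomposition is a bit more machinery than necessary (any full-rank factorization of $A$ would do, and the paper's version $\rank(\Phi C\Phi^T)\leq \rank\Phi$ sidesteps symmetry entirely), but the argument is sound.
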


\par As overviewed in \S\ref{ssec:intro:tech}, our approach has two components. The first controls the approximation error and is standard: exploit smoothness in order to approximate the kernel by a low-degree polynomial. The second controls the rank of our approximate kernel and is the critical new ingredient: exploit the algebraic structure of the domain in order to factorize the polynomial kernel in a succinct way. A simple statement of this second ingredient that gives asymptotic bounds is as follows; this rank bound is generically tight (see Remark~\ref{rem:rank-asymp:tight}).

\begin{prop}[Rank bound for polynomial kernels over varieties]\label{prop:rank-asymp}
	Let $K_n$ be the restriction of an (indefinite) degree-$n$ polynomial kernel
	 to $V \times V$, where $V$ is a variety in $\R^d$. Then
	\begin{align}
		\rank K_n \leq
				\HF_{V}(n).
		\label{eq:rank-comp-HF}
	\end{align}
\end{prop}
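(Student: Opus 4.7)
The plan is to reformulate $\rank K_n$ as the dimension of the span of the column slices $K_n(x,\cdot)|_V$ as $x$ ranges over $V$, and then to observe that each such slice lies in the finite-dimensional space $\R_{\leq n}[V]$, whose dimension is $\HF_V(n)$ by definition.

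For the first step, I would recall the standard identification, valid for any function $K : V \times V \to \R$:
\[
\rank K \;=\; \dim \spann\{K(x, \cdot) : x \in V\},
\]
where the span is taken in the vector space of real-valued functions on $V$. The ``$\leq$'' direction is immediate from any factorization $K(x,y) = \sum_{i=1}^{r} \phi_i(x) \psi_i(y)$, which places every slice in $\spann\{\psi_1, \ldots, \psi_r\}$. The ``$\geq$'' direction is equally immediate: choosing a basis $\{p_1, \ldots, p_m\}$ of the slice-span and writing $K(x,\cdot) = \sum_{i=1}^m c_i(x) p_i$ for unique scalars $c_i(x)$ yields a rank-$m$ factorization $K(x,y) = \sum_{i=1}^m c_i(x) p_i(y)$.

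For the second step, I would observe that because $K_n$ is a polynomial kernel of degree $n$, the slice $y \mapsto K_n(x,y)$ is a polynomial in $\R_{\leq n}[y]$ for every fixed $x \in V$. Its restriction to $V$ therefore represents a class in the quotient $\R_{\leq n}[V] = \R_{\leq n}[y]/I_{\leq n}(V)$. Hence $\spann\{K_n(x,\cdot)|_V : x \in V\}$ is contained in $\R_{\leq n}[V]$, has dimension at most $\HF_V(n)$, and combining with the first step gives the claim.

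I do not anticipate a genuine obstacle: once the ``rank equals dimension of the slice-span'' reformulation is in hand, the proposition is essentially a bookkeeping exercise, and its content lies in the viewpoint that the coordinate ring $\R_{\leq n}[V]$ -- rather than the ambient space $\R_{\leq n}[x]$ -- is the correct object against which to count. The genericity of the bound, referenced in Remark~\ref{rem:rank-asymp:tight}, is presumably witnessed by polynomial kernels whose associated bilinear coefficient matrix, after quotienting by $I_{\leq n}(V)$, has full rank $\HF_V(n)$.
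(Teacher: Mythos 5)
Your proof is correct, and it takes a genuinely different and more elementary route than the paper's. The paper first establishes an \emph{exact} rank formula (Proposition~\ref{prop:rank-exact}), expressing $\rank K_n$ as the rank of the matrix $\Phi C \Phi^T$ via the correspondence between polynomial kernels and bilinear forms over the coordinate ring (Lemma~\ref{lem:ker-sbf}), with the lower bound in that proposition requiring a Unisolvence-type lemma (Lemma~\ref{lem:rank-exact:interp}) to produce a generalized Vandermonde matrix that is invertible on $V$; the bound $\rank K_n \le \HF_V(n)$ is then read off from $\rank(\Phi C\Phi^T)\le \rank\Phi$. You bypass all of that machinery by appealing directly to the classical characterization $\rank K = \dim\spann\{K(x,\cdot)\colon x\in V\}$ and observing that every slice lies in $\R_{\le n}[V]$. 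Your route buys brevity and avoids the Unisolvence step; what it does not recover is the exact formula $\rank K_n = \rank(\Phi C\Phi^T)$, which the paper uses downstream (e.g., to argue tightness in Remark~\ref{rem:rank-asymp:tight} and to obtain optimal polynomial factorizations in Corollary~\ref{cor:poly-factor}). One cosmetic slip: the ``$\le$'' and ``$\ge$'' labels in your slice-span argument are swapped relative to the equality $\rank K = \dim\spann\{K(x,\cdot)\}$ as you wrote it --- the factorization argument shows $\dim\spann \le \rank K$ and the basis-expansion argument shows $\rank K \le \dim\spann$ --- but both directions are present and correct, and for the proposition you only need the latter together with the containment of the slice-span in $\R_{\le n}[V]$.
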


\par With this rank bound, the proof of Theorem~\ref{thm:main-var} follows readily.

\begin{proof}[Proof of Theorem~\ref{thm:main-var}]
	By Assumption~\ref{assump:smooth}, there is a polynomial $p_n$ of degree $n = \lceil \log(\alpha/\eps)/\log(1/\beta) \rceil$ satisfying $\|f-p_n\|_{[0,4]} \leq \eps$. Thus the kernel 
	$K_{n}(x,y) = p_n(\|x-y\|^2)$ satisfies 
	$
		\|K - K_{n}\|_{\cX \times \cX} \leq \eps.
	$
	By Proposition~\ref{prop:rank-asymp}, the fact that $p_n(\|x-y\|^2)$ is a polynomial of degree\footnote{Although this degree increase for $n$ to $2n$ is irrelevant for the asymptotics in Theorem~\ref{thm:main-var}, a more refined analysis of isotropic polynomial kernels yields rank bounds that are better in practice, see Remark~\ref{rem:rank-asymp:use}.} $2n$, and Lemma~\ref{lem:hf-equidim},
	$
		\rank K_{n} \leq \deg V \binom{2n+\dim V}{\dim V}$. This is at most $\deg V \left( c n \right)^{\dim V}$
	for some universal constant $c$.
\end{proof}

The remainder of the section is devoted to proving Proposition~\ref{prop:rank-asymp}. Along the way, we develop a more general understanding of how the rank of a polynomial kernel drops when it is restricted to an algebraic variety, since this may be of independent interest. In particular, we provide several illustrative examples in \S\ref{sssec:rank-ex}, describe the correspondence between polynomials over varieties and bilinear forms over coordinate rings in \S\ref{sssec:rank-forms}, provide an exact rank formula in terms of a finite-dimensional matrix in \S\ref{sssec:rank-exact}, and prove the asymptotic rank bound in Proposition~\ref{prop:rank-asymp} as well as remark on its tightness and common use cases in \S\ref{sssec:rank-asymp}.

\subsection{Rank of polynomial kernels over algebraic varieties}\label{ssec:hp-rank}

\subsubsection{Illustrative examples}\label{sssec:rank-ex}

\par We begin by illustrating the underlying phenomenon through several simple examples (see \S\ref{sec:ex} for examples with more involved varieties). For simplicity, we consider the rotation-invariant kernel
\[
	R_n(x,y) = \sum_{k=0}^n \frac{\langle x,y \rangle^k}{k!}.
\]
The same ideas extend to isotropic kernels, see Remark~\ref{rem:rank-asymp:use}. The significance of this kernel $R_n(x,y)$ is that it is the ``Taylor Features'' approximation of the Gaussian kernel $e^{-\|x-y\|^2/2}$, see Remark~\ref{rem:tay}, modulo omitting the scalings $e^{-(\|x\|+\|y\|^2)/2}$ which does not change the rank. In what follows, we abuse notation slightly by writing $R_n(V)$ to denote the restriction of the kernel $R_n$ to $V \times V$.

\begin{example}\label{ex:toy}
	We demonstrate that the rank of $R_2$ drops when restricted to a variety $V \subset \R^2$.
	\begin{itemize}
		\item \underline{Full space.} If $V = \R^2$, then $R_2(x,y) = \langle \phi(x),\phi(y)\rangle$ where $\phi(x) = [1, x_1, x_2, x_1^2/\sqrt{2},x_2^2/\sqrt{2}, x_1x_2]^T \in \R^6$. Thus $\rank R_2(V) \leq 6$.
		\item \underline{1-sparse data.} If $V = \{x \in \R^2 : x_1x_2 = 0\}$, then $R_2(x,y) = \langle \phi(x),\phi(y)\rangle$ where $\phi(x) = [1, x_1, x_2, x_1^2/\sqrt{2},x_2^2/\sqrt{2}]^T \in \R^5$. Thus $\rank R_2(V) \leq 5$.
		\item \underline{Spherical data.} If $V = \{x \in \R^2 : x_1^2 + x_2^2 = 1\}$, then $R_2(x,y) = \langle \phi(x),\phi(y) \rangle$ where $\phi(x) = [	\sqrt{5}/2, x_1, x_2,  x_1^2 -1/2, x_1x_2]^T \in \R^5$. Thus $\rank R_2(V) \leq 5$.
	\end{itemize}
	Note that the rank bounds in all these examples are tight, as shown next.
\end{example}

\par While the computations are straightforward in this toy example, computing rank bounds is clearly much more involved for more complicated kernels and varieties. Proposition~\ref{prop:rank-asymp} provides a simple, systematic approach for computing tight rank bounds.

\begin{example}[Using Proposition~\ref{prop:rank-asymp}]\label{ex:ip-bound}
	Let us demonstrate how to use Proposition~\ref{prop:rank-asymp} to compute the rank of $R_n$ when restricted to a variety. Since Proposition~\ref{prop:rank-asymp} is tight for the Taylor Features kernel (Remark~\ref{rem:rank-asymp:tight}), $\rank R_n(V)
		 = \HF_{V}(n)
	 $ for any variety $V \subset \R^d$.
	\begin{itemize}
		\item \underline{Full space.} If $V = \R^d$, then by Lemma~\ref{lem:poly-dim},
		\[
		\rank R_n(V) = \HF_{V}(n) = \binom{n+d}{d}.
		\] 
		\item \underline{$1$-sparse data.} If $V = \{x \in \R^d : x_ix_j = 0, \forall i < j \}$, then $I(V) = \langle x_ix_j : i < j \rangle$ is a monomial ideal and the standard monomials of degree at most $n$ are $1$ and $\{x_i^k\}_{i \in [d], k \in [n]}$.
		Thus
		\[
		\rank R_n(V) = \HF_{V}(n) = nd+1.
		\] 
		\item \underline{Spherical data.} If $V = \{x \in \R^d : \|x\|^2 = 1 \}$, then $I(V)$ is not a monomial ideal. The polynomial $\sum_{i=1}^d x_i^2 - 1$ generates $I(V)$ and forms a Gr\"obner basis for it w.r.t. grevlex, say. Thus $\LT(I(V)) = \langle x_1^2 \rangle$. 
		The corresponding standard monomials of degree at most $n$ are (i) monomials in $\R_{\leq n}[x_2, \dots, x_d]$; and (ii) $x_1$ times monomials in $\R_{\leq n-1}[x_2, \dots, x_d]$.
		Thus
		\[
		\rank R_n(V) = \HF_{V}(n) = \binom{n+d-1}{d-1} + \binom{n+d-2}{d-1}.
		\]
	\end{itemize}
	This proves optimality of the bounds in Examples~\ref{ex:toy} for $n=d=2$. Moreover, it shows that $\rank R_n(V)$ grows as $O(n)^d$, $O(n)$, and $O(n)^{d-1}$, respectively, because $\dim(V)$ is $d$, $1$, and $d-1$ for these three varieties $V$.
\end{example}

\subsubsection{Polynomial kernels over varieties as bilinear forms over coordinate rings}\label{sssec:rank-forms}

\par Our starting point for developing rank bounds is to view polynomial kernels as symmetric bilinear forms. First consider a polynomial kernel $K$ on the full space $\R^d \times \R^d$. 
Recall that the degree of $K$ is the total degree in either variable $x, y \in \R^d$ (these two numbers are the same by symmetry).  
A basic fact is that polynomials kernels of degree at most $n$ are in
$1$-$1$ correspondence with symmetric bilinear forms $C$ over $\R_{\leq n}[x]$. This fact is perhaps most intuitively understood in the monomial basis, where it amounts to the identity
\begin{align}
	K(x,y) = f(x)^T C f(y), \label{eq:sbf}
\end{align}
where $f(x)$ has entries $x^{\alpha} = \prod_{i=1}^d x_i^{\alpha_i}$ and $C$ has entries $c_{\alpha,\beta}$ for multi-indices $\alpha,\beta \in \N_0^d$ satisfying $\sum_{i=1}^d \alpha_i$, $\sum_{i=1}^d \beta_i\leq n$, and $K(x,y) = \sum_{\alpha,\beta} c_{\alpha,\beta} x^{\alpha}y^{\beta}$.

\par How does this change if $K$ is restricted to $V \times V$, where $V$ is a variety in $\R^d$? This corresponds to restricting the symmetric bilinear form~\eqref{eq:sbf} to the space $\R_{\leq n}[V]$ of bounded-degree polynomials in the coordinate ring. (Recall from \S\ref{ssec:prelim:ag} that $\R_{\leq n}[V] = \R_{\leq n}[x] / I_{\leq n}(V)$.) This restricted form over $\R_{\leq n}[V]$ is easily computed in terms of the unrestricted form over $\R_{\leq n}[x]$ and the linear restriction map $\Phi : \R_{\leq n}[x] \to \R_{\leq n}[V]$, which maps a polynomial over $\R^d$ to its restriction over $V$. Note that $\Phi^T$ corresponds to the map that embeds the coordinate ring into the polynomial ring.

\begin{lemma}[Polynomial kernels over varieties as symmetric bilinear forms over coordinate rings]\label{lem:ker-sbf}
	Let $K$ be the restriction of an (indefinite) degree-$n$ polynomial to $V \times V$, where $V$ is a variety in $\R^d$. 
	Then $K$ is equal to the symmetric bilinear form
	\begin{align*}
		\Phi C \Phi^T
	\end{align*}
	over $\R_{\leq n}[V]$, where $\Phi$ is the restriction map from $\R_{\leq n}[x]$ to $\R_{\leq n}[V]$, and $C$ is the symmetric bilinear form over $\R_{\leq n}[x]$ corresponding to $K$.
\end{lemma}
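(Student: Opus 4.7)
The plan is to track how the matrix identity \eqref{eq:sbf}, namely $K(x,y) = f(x)^T C f(y)$ valid on $\R^d \times \R^d$, behaves upon restriction to $V \times V$. The only algebraic-geometric input needed is the defining property of the coordinate ring: evaluation at any $v \in V$ vanishes on $I_{\leq n}(V)$ and hence descends to a well-defined linear functional on the quotient $\R_{\leq n}[V] = \R_{\leq n}[x] / I_{\leq n}(V)$.

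First I would fix bases to make the matrix notation unambiguous. Take the monomial basis $\{x^\alpha\}_{|\alpha|\leq n}$ for $\R_{\leq n}[x]$, so that $f(x)$ in \eqref{eq:sbf} is the vector with entries $x^\alpha$. Take any basis $\{b_i\}$ (for example, the standard monomials with respect to a Gr\"obner basis) for $\R_{\leq n}[V]$, and write $\tilde f(v)$ for the vector with entries $b_i(v)$. Let $\Phi$ denote the matrix of the restriction map $\R_{\leq n}[x] \to \R_{\leq n}[V]$ in these bases, so that $x^\alpha \equiv \sum_i \Phi_{i,\alpha}\, b_i$ modulo $I_{\leq n}(V)$. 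Evaluating both sides at an arbitrary $v \in V$ and using that evaluation factors through the quotient yields $v^\alpha = \sum_i \Phi_{i,\alpha}\, b_i(v)$, which in vector form is the key identity
\[
    f(v) \;=\; \Phi^T \tilde f(v) \qquad \text{for all } v \in V.
\]
Substituting this into \eqref{eq:sbf} for $x,y \in V$ gives $K(x,y) = \tilde f(x)^T\, \Phi C \Phi^T\, \tilde f(y)$. By the same correspondence used to derive \eqref{eq:sbf}, now applied on $V$ in place of $\R^d$ with basis $\{b_i\}$ in place of $\{x^\alpha\}$, this is precisely the statement that $K|_{V\times V}$ equals the symmetric bilinear form on $\R_{\leq n}[V]$ with matrix $\Phi C \Phi^T$.

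I do not anticipate any substantive obstacle, as the entire content of the lemma is that the restriction map on polynomials is compatible with evaluation, which is the defining property of $\R_{\leq n}[V]$ as the quotient by $I_{\leq n}(V)$. The only minor point to be careful about is that $\Phi C \Phi^T$ is a basis-dependent object; since the lemma is already phrased at the level of matrices via the identification in \eqref{eq:sbf}, a choice of basis like the one above is implicit, and the proof above shows the claim in every such basis.
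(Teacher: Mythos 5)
Your proof is correct and follows essentially the same route as the paper's: both hinge on the key identity $f(v) = \Phi^T \tilde f(v)$ for $v \in V$, which encodes that evaluation at a point of $V$ factors through the quotient by $I_{\leq n}(V)$. The only cosmetic difference is that you keep the monomial basis on $\R_{\leq n}[x]$ and an arbitrary basis on $\R_{\leq n}[V]$, while the paper selects an adapted basis of $\R_{\leq n}[x]$ that makes $\Phi = [I_{M\times M},\, 0]$ and then reads off the same identity from the vanishing of the complementary basis vectors on $V$.
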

\begin{proof}
	Although the lemma statement is basis-free, the proof is perhaps most intuitive by choosing the following convenient bases. Since $\Phi$ is the restriction map from $\R_{\leq n}[x]$ to $\R_{\leq n}[V]$, there is a polynomial basis $f_1, \dots, f_N$ of $\R_{\leq n}[x]$ such that (the equivalence classes corresponding to) $f_1, \dots, f_M$ form a basis for $\R_{\leq n}[V]$, and moreover $\Phi f_i = f_i$ for $i \in [M]$ and $\Phi f_i = 0$ for $i \in [N] \setminus [M]$. (Such a basis can be computed e.g., using Gr\"obner bases.) Abusing notation slightly, let $C$ and $\Phi$ denote the matrices corresponding to the respective linear maps w.r.t. these bases. Then $\Phi = [I_{M \times M}, 0_{M \times (N-M)}]$, and
	\begin{align}
		K(x,y)
		= f(x)^T C f(y)
		= g(x)^T \Phi C\Phi^T g(y),
		\qquad \forall x,y \in V,
		\label{eq:rank-exact:1}
	\end{align}
	where $g(x) := [f_1(x), \dots, f_M(x)]^T$.
\end{proof}

\subsubsection{Exact rank formula}\label{sssec:rank-exact}

The correspondence between polynomial kernels over varieties and symmetric bilinear forms over coordinate rings in Lemma~\ref{lem:ker-sbf} gives an exact formula for the rank of the former in terms of the rank of a finite-dimensional matrix. 

\begin{prop}[Exact rank formula for polynomial kernels over varieties]\label{prop:rank-exact}
	Consider the setup in Lemma~\ref{lem:ker-sbf}. Then the rank of $K$ over $V \times V$ is
	\begin{align}
		\rank K = 
		\rank \Phi C \Phi^T.
		\label{eq:rank-exact}
	\end{align}
	In particular, if $C$ is positive definite, then
		\begin{align}
			\rank K = \rank \Phi = \HF_{V}(n).
			\label{eq:rank-exact-simp}
		\end{align}

\end{prop}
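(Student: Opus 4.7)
The plan is to reduce the rank of $K$ as a kernel over $V \times V$ to the rank of the finite-dimensional matrix $\Phi C \Phi^T$, by pairing the factorization from Lemma~\ref{lem:ker-sbf} with a ``points'' argument to get a matching lower bound. Concretely, fix the bases used in the proof of Lemma~\ref{lem:ker-sbf}, so that $g(x) = [f_1(x), \dots, f_M(x)]^T$ evaluates a basis of $\R_{\leq n}[V]$ and
\[
K(x,y) = g(x)^T \, \Phi C \Phi^T \, g(y) \qquad \text{for all } x,y \in V.
\]

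For the upper bound, set $r := \rank(\Phi C \Phi^T)$ and take any factorization $\Phi C \Phi^T = U V^T$ with $U, V \in \R^{M \times r}$. Then $K(x,y) = \langle U^T g(x), V^T g(y)\rangle$, so the feature maps $\phi := U^T g$ and $\psi := V^T g$ furnish a rank-$r$ factorization of $K$ over $V$. This shows $\rank K \leq \rank(\Phi C \Phi^T)$.

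For the lower bound, I would use the standard fact (recalled in \S\ref{ssec:prelim:kernels}) that any rank-$s$ kernel factorization yields kernel matrices of rank at most $s$. It therefore suffices to exhibit points $x_1, \dots, x_M \in V$ such that the $M \times M$ kernel matrix $[K(x_i,x_j)]_{ij}$ has rank exactly $\rank(\Phi C \Phi^T)$. Since $f_1, \dots, f_M$ descend to a basis of $\R_{\leq n}[V]$ and hence are linearly independent as functions on $V$, one can choose such $x_1, \dots, x_M$ for which the evaluation matrix $G := [g(x_1) \mid \dots \mid g(x_M)]^T \in \R^{M \times M}$ is invertible (pick them greedily, at each step choosing a point where the current span of $g$-values is enlarged; such a point must exist by linear independence). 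Then the kernel matrix equals $G (\Phi C \Phi^T) G^T$, which has rank $\rank(\Phi C \Phi^T)$ by invertibility of $G$. Combining the two bounds gives~\eqref{eq:rank-exact}.

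For the refinement~\eqref{eq:rank-exact-simp} when $C$ is positive definite, write $C = L L^T$ with $L$ invertible (e.g., via Cholesky). Then $\Phi C \Phi^T = (\Phi L)(\Phi L)^T$ has rank equal to $\rank(\Phi L) = \rank \Phi$, since right-multiplication by an invertible matrix preserves rank. Finally, $\rank \Phi = M = \dim \R_{\leq n}[V] = \HF_V(n)$, either directly from the block form $\Phi = [I_{M \times M} \mid 0]$ exhibited in the proof of Lemma~\ref{lem:ker-sbf} or from the basis-free observation that $\Phi$ is surjective onto $\R_{\leq n}[V]$. I expect the main (mild) obstacle to be the lower bound: one must justify the existence of points $x_1, \dots, x_M \in V$ on which the basis functions $f_1, \dots, f_M$ are linearly independent as vectors in $\R^M$, which is where the interpretation of $\R_{\leq n}[V]$ as equivalence classes of polynomials (rather than abstract cosets) gets used.
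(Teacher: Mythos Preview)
Your proposal is correct and follows essentially the same route as the paper: the upper bound via factoring $\Phi C \Phi^T$, the lower bound via an invertible evaluation matrix at well-chosen points of $V$, and the PD case via $C = LL^T$. The only cosmetic difference is that the paper packages the existence of the points $x_1,\dots,x_M$ as a standalone ``Unisolvence Theorem on varieties'' (proved by a span/contradiction argument equivalent to your greedy one), whereas you sketch it inline.
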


The proof makes use of the following generalization of the ``Unisolvence Theorem'' from the standard setting of $\R$ to the present setting of real algebraic varieties in $\R^d$.
For convenience, we state this in terms of the invertibility of a generalized Vandermonde matrix.

\begin{lemma}[Unisolvence Theorem on varieties]\label{lem:rank-exact:interp}
	Suppose $V$ is a variety in $\R^d$. Let $M$ denote $\dim(\R_{\leq n}[V])$. There exist points $x_1, \dots, x_M \in V$ such that the matrix $S \in \R^{M \times M}$ with entries $S_{ij} = f_i(x_j)$, is non-singular for any basis $f_1, \dots, f_M$ of $\R_{\leq n}[V]$.
\end{lemma}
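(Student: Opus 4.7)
The plan is to prove that the evaluation map from $\R_{\leq n}[V]$ to $\R^M$ is an isomorphism for a suitable choice of points, and then observe that invertibility of the evaluation matrix is basis-independent. Concretely, fix any basis $f_1, \dots, f_M$ of $\R_{\leq n}[V]$; since a change of basis acts on $S$ by left multiplication by an invertible matrix, once we find points $x_1, \dots, x_M \in V$ making $S$ non-singular for this basis, the statement follows for all bases. Equivalently, we need to exhibit $x_1, \dots, x_M \in V$ such that the evaluation map $\mathrm{ev} : \R_{\leq n}[V] \to \R^M$, sending (the equivalence class of) $f$ to $(f(x_1), \dots, f(x_M))$, is injective; since the domain and codomain have the same finite dimension $M$, injectivity upgrades to an isomorphism, which is precisely invertibility of $S$.

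To construct the points, I will argue by induction, choosing $x_1, \dots, x_M$ one at a time so that the subspace $E_k \subseteq \R_{\leq n}[V]$ of equivalence classes vanishing at $x_1, \dots, x_k$ drops in dimension by exactly one at each step. Starting from $E_0 = \R_{\leq n}[V]$ of dimension $M$, suppose we have chosen $x_1, \dots, x_k$ with $\dim E_k = M - k > 0$. Pick any nonzero $g \in E_k$. The key algebraic input is that $g$, viewed as an element of the coordinate ring $\R_{\leq n}[x]/I_{\leq n}(V)$, is nonzero, so no representative of $g$ lies in $I(V)$, and therefore there exists some $x \in V$ with $g(x) \neq 0$. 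Take $x_{k+1}$ to be such a point. Then $g \in E_k$ but $g \notin E_{k+1}$, so $E_{k+1}$ is a proper subspace of $E_k$, giving $\dim E_{k+1} \leq M - k - 1$. After $M$ steps, $E_M = \{0\}$, which exactly says $\mathrm{ev}$ is injective at the chosen points.

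The only subtle point is the identification "nonzero in $\R_{\leq n}[V]$ implies non-vanishing somewhere on $V$." This is where the real structure matters: because $I(V) = \{p \in \R[x] : p|_V \equiv 0\}$ by definition, and $\R_{\leq n}[V]$ is defined as the quotient $\R_{\leq n}[x]/I_{\leq n}(V)$, the statement is essentially tautological at the level of degree-$\leq n$ polynomials; no Nullstellensatz-style machinery is needed here beyond the definitions already recalled in \S\ref{ssec:prelim:ag}. The rest is linear algebra.

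I do not anticipate any serious obstacle. If one wanted a non-constructive variant, one could alternatively observe that $\det S$, as a function of $(x_1, \dots, x_M) \in V^M$, is a polynomial function on $V^M$, and argue it is not identically zero using the same non-vanishing observation above applied to a suitable expansion along the last row/column; the inductive argument given is essentially the cleaner constructive form of the same idea.
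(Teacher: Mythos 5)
Your proof is correct, and the crucial algebraic input is the same one the paper relies on: a nonzero equivalence class in $\R_{\leq n}[V] = \R_{\leq n}[x]/I_{\leq n}(V)$ has a representative that does not vanish identically on $V$, by the very definition of $I(V)$. Your basis-independence reduction (a change of basis acts by left multiplication by an invertible matrix) also matches the paper's opening remark.

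Where the two arguments differ is in structure. You run a greedy, dimension-dropping construction: maintain the annihilator $E_k \subseteq \R_{\leq n}[V]$ of the points chosen so far, pick a nonzero $g \in E_k$, and use the non-vanishing fact to find $x_{k+1} \in V$ with $g(x_{k+1}) \neq 0$, forcing $\dim E_{k+1} < \dim E_k$; after $M$ steps the evaluation map is injective, hence bijective. The paper instead considers the span $W = \spann\{(f_1(x),\dots,f_M(x)) : x \in V\} \subseteq \R^M$, extracts a basis $\{f(x_1),\dots,f(x_k)\}$ of $W$ from evaluation vectors, assumes $k < M$ for contradiction, and uses Lagrange interpolating functions on $\{x_1,\dots,x_k\}$ to propagate a linear dependence $\sum_i \alpha_i f_i(x_j) = 0$ from the sample points to all of $V$, contradicting linear independence of $f_1,\dots,f_M$ in $\R_{\leq n}[V]$. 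Your version is more directly constructive and avoids Lagrange interpolation entirely, making it arguably cleaner to verify; the paper's version via Lagrange functions is a classical Fekete-type argument that foreshadows the norming-set machinery used later in \S\ref{ssec:hd-norming}. Either route is a valid proof.
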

\begin{proof}
	It suffices to show the claim for any fixed basis $f_1, \dots, f_M$. Define $f(x) := [f_1(x),\dots,f_M(x)]^T$, and let $W := \textrm{span}(\{f(x)\}_{x \in V}) \subseteq \R^M$. Since $W$ is a finite-dimensional vector space, it admits a basis of the form $\{f(x_1), \dots, f(x_k)\}$ for some $x_1, \dots, x_k \in V$. Clearly $k \leq M$. Assume for contradiction that $k < M$; else the claim follows. We make two observations. First, the $M \times k$ matrix with $ij$-th entry $f_i(x_j)$ has deficient row rank, thus there exist $\alpha_1, \dots, \alpha_M$ not all zero such that
	\[
	\sum_{i=1}^M \alpha_i f_i(x_j) = 0, \qquad \forall j \in [k].
	\] 
	Second, since $\{f(x_1), \dots, f(x_k)\}$ is a basis of $W$, there exist functions $\ell_1, \dots, \ell_k : V \to \R$ satisfying
	\[
		f(x) = \sum_{j=1}^k \ell_j(x) f(x_j), \qquad \forall x \in V.
	\]
	(These are the corresponding Lagrange interpolating polynomials.) From these two observations it follows that $\sum_{i=1}^M \alpha_i f_i \equiv 0$ on $V$. Indeed, for all $x \in V$, 
	\[
		\sum_{i=1}^M \alpha_i f_i(x) = \sum_{j=1}^k \ell_j(x) \sum_{i=1}^M \alpha_i f_i(x_j) = 0.
	\]
	This contradicts the fact that $\{f_1,\dots,f_M\}$ is a basis of of $\R_{\leq n}[V]$.
\end{proof}

\begin{proof}[Proof of Proposition~\ref{prop:rank-exact}]
	Consider the basis choice in the proof of Lemma~\ref{lem:ker-sbf}.
	\par \underline{Proof of ``$\leq$''.} 
	Let $A^TB$ be a factorization of $\Phi C\Phi^T$ where $A$ and $B$ have $\rank \Phi C \Phi^T$ rows. Denote $\phi(x) = Ag(x)$ and $\psi(y) = Bg(y)$. Then by~\eqref{eq:rank-exact:1}, 
	\begin{align}
		K(x,y) = 
		\langle \phi(x), \psi(y) \rangle
		\label{eq:rank-exact:fact}
	\end{align}
	is an explicit factorization of $K$ over $V \times V$ of rank equal to $\rank \Phi C \Phi^T$.
	
	\par \underline{Proof of ``$\geq$''.} Consider points $x_1, \dots, x_M \in V$ guaranteed by Lemma~\ref{lem:rank-exact:interp},
	and let $H \in \R^{M \times M}$ be the corresponding kernel matrix with entries $H_{ij} = K(x_i,x_j)$. Note that 
	\[
	\rank K \geq \rank H,
	\]
	since the rank of the kernel $K$ over $V \times V$ is at least the rank of the kernel $K$ restricted to $\{x_1, \dots, x_M\} \times \{x_1, \dots, x_M\} \subseteq V \times V$, which in turn is precisely the rank of the matrix $H$. Now to bound $\rank H$, use~\eqref{eq:rank-exact:1} to write $H = S^T \Phi C \Phi^T S$ where $S \in \R^{M \times M}$ is the generalized Vandermonde matrix in Lemma~\ref{lem:rank-exact:interp} with entries $S_{ij} = f_i(x_j)$. Since $S$ is invertible,
	\[
	\rank H = \rank  S^T \Psi C \Psi^T S = \rank \Phi C \Phi^T.
	\]
	\par \underline{Corollary when $C$ is positive definite.} In this case, $C = LL^T$ for an invertible matrix $L$. Thus
		\[
			\rank \Phi C \Phi^T = \rank (\Phi L)(\Phi L )^T = \rank \Phi L = \rank \Phi.
		\]
		Since $\Phi$ is the linear projection map onto $\R_{\leq n}[V]$, 
		the rank of $\Phi$ is the dimension of this space---which is by definition the Hilbert function $\HF_{V}(n)$.
\end{proof}

Note that this proof does more than establish the rank of $K$. It also identifies the feature space (the image of $\Phi C \Phi^T$ viewed as a subspace of $\R_{\leq n}[V]$), and an optimal factorization~\eqref{eq:rank-exact:fact}. Since this factorization has polynomial entries, we obtain the following corollary. 

\begin{cor}[Polynomial kernels over varieties have optimal polynomial factorizations]\label{cor:poly-factor}
	Consider the setup in Lemma~\ref{lem:ker-sbf}. There exist polynomial functions $\phi, \psi : V \to \R^{\rank K}$ such that $K(x,y) = \langle \phi(x), \psi(y) \rangle$ for all $x,y \in V$. Moreover, if $K$ is PD, then this holds with $\phi = \psi$.
\end{cor}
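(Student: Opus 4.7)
The plan is to extract the factorization directly from the proof of \Cref{prop:rank-exact}, and then verify the additional symmetry in the PD case. First, I would invoke the basis choice in the proof of \Cref{lem:ker-sbf}: fix a polynomial basis $f_1,\dots,f_N$ of $\R_{\leq n}[x]$ whose first $M := \HF_V(n)$ elements descend to a basis of $\R_{\leq n}[V]$, and set $g(x) := [f_1(x),\dots,f_M(x)]^T$. Crucially, each coordinate of $g$ is a polynomial in $x$.

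Next, the proof of \Cref{prop:rank-exact} produces a rank-$r$ factorization $A^T B = \Phi C \Phi^T$ with $r = \rank \Phi C \Phi^T = \rank K$, yielding
\[
K(x,y) = \langle A g(x), B g(y) \rangle, \qquad \forall x,y \in V.
\]
Taking $\phi(x) := A g(x)$ and $\psi(y) := B g(y)$, each coordinate of $\phi$ and $\psi$ is a fixed real linear combination of the polynomials $f_1,\dots,f_M$, hence a polynomial function $V \to \R$. This establishes the first claim.

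For the PD case, the main thing to check is that the finite-dimensional symmetric bilinear form $\Phi C \Phi^T$ can be chosen PSD, so that one can write it as $UU^T$ with $\rank U = \rank K$, and then set $\phi = \psi = U^T g$. I would argue this using \Cref{lem:rank-exact:interp}: pick points $x_1,\dots,x_M \in V$ so that the generalized Vandermonde matrix $S$ with $S_{ij} = f_i(x_j)$ is invertible. Because $K$ is PD on $V$, the kernel matrix $H$ with entries $K(x_i,x_j)$ is PSD, and by the identity $H = S^T (\Phi C \Phi^T) S$ from the proof of \Cref{prop:rank-exact}, the congruence transform shows that $\Phi C \Phi^T$ is PSD as well. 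Taking its Cholesky-type factorization $\Phi C \Phi^T = U U^T$ with $\rank U = \rank \Phi C\Phi^T = \rank K$ gives the required symmetric polynomial factorization.

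The one subtle point—and the only place where I expect any real work—is ensuring consistency of notation: $K$ is a polynomial on all of $\R^d$ that is then restricted, but PD is imposed over $V$, not over $\R^d$. The argument above handles this correctly because it only uses PSDness of the form restricted to $\R_{\leq n}[V]$, which follows from PSDness of the kernel matrix $H$ at the $M$ interpolation points in $V$. Everything else is essentially bookkeeping from the previous proofs.
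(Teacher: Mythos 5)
Your proposal is correct and follows the same route the paper takes: the first part is exactly the paper's observation that the explicit factorization $K(x,y)=\langle Ag(x),Bg(y)\rangle$ from the proof of \Cref{prop:rank-exact} has polynomial coordinates, and your treatment of the PD case via the congruence $H = S^{T}(\Phi C\Phi^{T})S$ with the invertible Vandermonde matrix $S$ from \Cref{lem:rank-exact:interp} is the natural way to fill in the half-sentence the paper leaves implicit. One minor remark: the argument really only needs $K$ to be PSD on $V$ (so that $H$ and hence $\Phi C\Phi^{T}$ is PSD), which is the right hypothesis since a degree-$n$ polynomial kernel on an infinite variety has finite rank and thus cannot have strictly positive-definite kernel matrices of arbitrary size.
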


\subsubsection{Asymptotic rank formula}\label{sssec:rank-asymp}

While Proposition~\ref{prop:rank-exact} provides an exact formula for the rank of an arbitrary polynomial kernel over an arbitrary variety, it involves a matrix that is large even for moderate degree $n$ and dimension $d$. Proposition~\ref{prop:rank-asymp} provides a bound whose computation does not involve large matrices. The price to pay is that this bound is oblivious to the structure of $p$ beyond its degree. Nevertheless, this bound is tight for generic kernels. We now show how Proposition~\ref{prop:rank-asymp} follows from Proposition~\ref{prop:rank-exact}.

\begin{proof}[Proof of Proposition~\ref{prop:rank-asymp}]
	By Proposition~\ref{prop:rank-exact}, a dimension bound, and the definition of the Hilbert function,
	$\rank K = \rank(\Phi C \Phi^T) \leq \rank \Phi = \dim(\R_{\leq n}[V])
	= \HF_{V}(n)$.
\end{proof}

We conclude with two remarks about Proposition~\ref{prop:rank-asymp}: tightness and common use cases.

\begin{remark}[Proposition~\ref{prop:rank-asymp} is generically tight]\label{rem:rank-asymp:tight}
	The rank bound~\eqref{eq:rank-comp-HF} is exact for ``generic'' polynomial kernels. Indeed, the only inequality in the proof was $\rank \Phi C \Phi^T \leq \rank \Phi$, and this holds with equality by Proposition~\ref{prop:rank-exact} if the bilinear form $C$ is PD (e.g., for Taylor Features\footnote{Ignoring the exponential scalings which do not affect the rank, see Remark~\ref{rem:tay}.}).
\end{remark}

\begin{remark}[Proposition~\ref{prop:rank-asymp} for common polynomial kernels]\label{rem:rank-asymp:use}
	Polynomial kernels of interest are typically rotation-invariant or isotropic; that is, of the form $R_n(x,y) = p_n(\langle x,y\rangle)$ or $K_n(x,y) = p_n(\|x-y\|^2)$, respectively, where $p_n$ is a univariate polynomial of degree $n$. Abuse notation slightly to denote the restrictions of these kernels to $V \times V$ by $R_n(V)$ and $K_n(V)$, respectively.
	\begin{itemize}
		\item \underline{Rotation-invariant kernels.} 
		In this case, Proposition~\ref{prop:rank-asymp} combined with the asymptotics in Lemma~\ref{lem:hilbert} gives the tight bound
		\[
			\rank R_n(V) \leq \HF_{V}(n) = \deg(V) n^{\dim V} + O(n^{\dim V - 1}) 
		\]
		which holds with equality for generic polynomials $p_n$ (and e.g., Taylor Features).
		\item \underline{Isotropic kernels.} 
		In this case, a direct application of Proposition~\ref{prop:rank-asymp} gives a loose bound since it treats $p_n(\|x-y\|^2)$ as a generic polynomial of degree $2n$, leading to asymptotics of order
		\[
			\rank K_n(V) \leq \HF_{V}(2n) = \deg(V) (2n)^{\dim V} + O(n^{\dim V - 1}).
		\] 
		A more refined analysis can essentially improve the $2n$ to $n$ by capturing the structure of the polynomial $p_n(\|x-y\|^2)$ beyond its degree.
		This has an important effect in practice. However, it does not change the asymptotic bounds for kernel approximation (c.f., Theorem~\ref{thm:main-var}) since $n$ is only specified up to a constant anyways. As such, we do not investigate this further here.
	\end{itemize}
	
\end{remark}

\section{Kernel approximation over a variety: high-dimensional regime}\label{sec:hd}

Here we show that high-dimensional approaches for kernel approximation perform substantially better if the approximation domain is a low-dimensional algebraic variety $V$ in a high-dimensional ambient space $\R^d$. Specifically, we improve the rank bound~\eqref{eq:rate-high-dim} of standard approaches by showing that their dependence on the ambient dimension $d$ can be improved to dependence on $\dim V$.

\begin{theorem}[High-dimensional approximation over a variety]\label{thm:high-dim}
	Suppose $K$ is a kernel satisfying Assumption~\ref{assump:rff}. Suppose also $\cX = V \cap \ball$, where $V \subset \R^d$ is an equidimensional real algebraic variety satisfying $\deg V \leq e^{\dim V}$. For any $\eps > 0$, there exists a kernel of rank 
	\begin{align}
		r = 
		O\left(  \frac{\dim V \, \log (\sig_K \dim V /\eps)}{\eps^2} 
		\right)
		\label{thm:high-dim-rank}
	\end{align}
	that approximates $K$ on $\cX \times \cX$ to $L^{\infty}$ error $\eps$.
\end{theorem}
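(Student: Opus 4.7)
My plan is to follow the standard Random Fourier Features template (Proposition~\ref{prop:rff-standard}) with the same estimator $K_r$ from \eqref{eq:rff}, but to replace its $\eps$-net + Lipschitz extension step by a polynomial-rigidity argument tailored to $V$. As usual, I split the analysis into two pieces: (i) show $\|K-K_r\|_{S\times S}\leq \eps/3$ with high probability for a carefully chosen finite set $S\subset \cX$, and (ii) extend this bound from $S\times S$ to $\cX\times\cX$. Step (i) is immediate: the summands $\cos(\langle\omega_i,x\rangle+\theta_i)\cos(\langle\omega_i,y\rangle+\theta_i)$ are bounded in $[-1,1]$ with mean $K(x,y)$, so Hoeffding's inequality plus a union bound over $S\times S$ yield $\|K-K_r\|_{S\times S}\leq\eps/3$ with constant probability as soon as $r = O(\log|S|/\eps^2)$.

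For step (ii), I approximate both $K$ and $K_r$ by bivariate polynomial surrogates $P$ and $P_r$ of bidegree $(n,n)$. I fix a bandwidth cutoff $W$ large enough that a Markov bound $\Prob_{\omega\sim\mu}[\|\omega\|>W]\leq \sig_K^2/W^2$ controls both the tail of the Bochner integral defining $K$ and, via a union bound, the event $E$ that every RFF frequency $\omega_i$ satisfies $\|\omega_i\|\leq W$. Conditional on $E$, each factor $\cos(\langle\omega_i,\cdot\rangle+\theta_i)$ restricted to $\ball$ is approximated to uniform error $\leq \eps/100$ by its degree-$n$ Chebyshev truncation whenever $n=O(W+\log(1/\eps))$, and the same Chebyshev truncation of the Bochner integral produces $P$. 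Thus on $E$, $|K-K_r|\leq \eps/2 + |P-P_r|$ uniformly on $\ball\times\ball$, and it suffices to bound $\|P-P_r\|_{\cX\times\cX}$.

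The heart of the argument is now a norming estimate for $P-P_r$ on $V\cap\ball$. I invoke Proposition~\ref{prop:norming}, which supplies a set $S\subset V\cap\ball$ of size $|S|=O(\HF_V(n))$ with the property that $\max_{V\cap\ball}|q|\leq 2\max_S|q|$ for every $q\in \R_{\leq n}[V]$. Applying this coordinate-wise---first in $x$ with $y$ fixed, then in $y$ with $x$ ranging over $S$---to the bivariate polynomial $P-P_r$ costs a factor of $4$, giving
\[
\max_{\cX\times\cX}|P-P_r|\;\leq\; 4\max_{S\times S}|P-P_r|\;\leq\; 4\bigl(\|K-K_r\|_{S\times S}+\eps/2\bigr),
\]
which combined with step (i) yields $\|K-K_r\|_{\cX\times\cX}\leq \eps$ on the intersection of two high-probability events.

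Finally, the parameters close consistently: by Lemma~\ref{lem:hf-equidim} and the hypothesis $\log\deg V\leq\dim V$, $\log|S|=O(\dim V\cdot \log n)$, so step (i) forces $r=O(\dim V\,\log n/\eps^2)$; taking $W$ polynomial in $(\sig_K,\dim V,1/\eps)$ makes $n=O(W+\log(1/\eps))$ polynomial too, so $\log n=O(\log(\sig_K\dim V/\eps))$, matching the target rank in \eqref{thm:high-dim-rank}. The principal obstacle in this plan is Proposition~\ref{prop:norming}: a naive interpolation-theoretic norming set on a variety has Lebesgue constant scaling polynomially in $n$, which would inflate $r$ by a factor of $\poly(n)$ and destroy the rate. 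Compressing this Lebesgue constant down to $O(1)$ while keeping $|S|$ of order $\HF_V(n)$ is exactly the role of the coordinate-ring tensorization trick of~\citep{bloom2012polynomial}.
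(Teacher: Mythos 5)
Your proposal is correct and follows essentially the same route as the paper: a Hoeffding bound on a finite set, polynomial surrogates obtained by truncating the kernels to a bounded frequency band, and---as the crux---the $O(1)$-slack coordinate-ring norming set of Proposition~\ref{prop:norming}. The only substantive difference is cosmetic: the paper truncates the Fourier measure $\mu$ \emph{before} sampling (so the cutoff $W$ depends only on $\sig_K$ and $\eps$, and the event you call $E$ holds deterministically), whereas you sample from the true $\mu$ and condition on $E$, which makes $W$ depend on $r$ through a union bound and introduces a mild logarithmic circularity that, as you note, closes.
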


We prove this existential result in an algorithmic way. We use the Random Fourier Features kernel $K_r$ defined in~\eqref{eq:rff}, except with one minor technical modification: rather than sampling frequencies $\omega_1, \dots, \omega_r$ from the Fourier distribution, we sample them from a truncated version of it. That is, we re-sample a frequency if its squared norm is large (roughly $\E\|\omega\|^2/\eps = \sig_K^2/\eps$). We prove that this construction works with high probability: if the rank is
\begin{align}
		r = O\left( \frac{\dim V \, \log (\sig_K \dim V  /\eps) + \logdel}{\eps^2}  \right),
		\label{thm:high-dim-rank-hp}
\end{align}
then this RFF kernel approximates $K$ on $\cX \times \cX$ to $L^{\infty}$ error $\eps$ with probability at least $\del$. This implies the existential result in Theorem~\ref{thm:high-dim} by taking $\delta$ to be any constant.

\begin{remark}[Degree]\label{rem:high-dim-deg}
	Dependence on $\deg V$ is unavoidable, see \S\ref{ssec:intro:cont}. In Theorem~\ref{thm:high-dim}, we assume for simplicity of presentation that $\deg V$ is not exponentially large, since in this case the contribution of $\deg V$ to the rank bound~\eqref{thm:high-dim-rank} is negligible. The proof extends to arbitrary degree essentially without change\footnote{
		The only difference is that the size $\log |\cX_n|$ of the norming sets in Proposition~\ref{prop:norming} increases by $\log \deg V + \dim V \log \log \deg V$
		in order to balance terms in the tensoring proof. This slight increase in $\log |\cX_n|$ results in an analogous increase in the final rank bound in Theorem~\ref{thm:high-dim-rank} since $r$ scales linearly in $\log |\cX_n|$, see~\eqref{eq:rff-pf:r}.}
	if the rank $r$ is increased by $\eps^{-2} (\log \deg V + \dim V \log \log \deg V)$.
\end{remark}

\begin{remark}[Logarithmic dependence]
	For the Gaussian kernel $G(x,y) = e^{-\|x-y\|^2/(2\sig^2)}$ and the Cauchy kernel $C(x,y) = (1+\|x\|^2/(2\sig^2))^{-1}$, Theorem~\ref{thm:high-dim} depends logarithmically on $d$ since $\log \sig_K = \log(d/\sig^2)$. By a similar proof technique (in particular using the key Proposition~\ref{prop:norming} about norming sets over varieties), we can show that an alternative approach based on polynomial approximation and then Johnson-Lindenstrauss projection achieves similar guarantees to Theorem~\ref{thm:high-dim}, with $\log d$ improved to $\log \dim V$. This generalizes to smooth isotropic kernels.
	However, we focus on RFF since its much better algorithmic efficiency outweighs this lower-order term.
\end{remark}

See \S\ref{ssec:intro:tech} for an overview of the proof of Theorem~\ref{thm:high-dim}. As explained there, the key ingredient beyond the standard RFF analysis is to exploit the rigidity of polynomials over varieties to ensure the existence of ``norming sets'' of small size. We develop this ingredient in \S\ref{ssec:hd-norming}, and then use it to prove Theorem~\ref{thm:high-dim} in \S\ref{ssec:hd-pf}.

\subsection{Constant norming sets for varieties}\label{ssec:hd-norming}

Given a compact set $\cX \subset \R^d$, slack $\lambda > 1$, and degree $n \in \N$, a \emph{norming set} is a subset $\cX_n \subseteq \cX$ satisfying $\|p\|_{\cX} \leq \lambda\|p\|_{\cX_n}$ for all polynomials $p$ of degree $n$. That norming sets of small cardinality exist is a classical result in approximation theory with many applications, for example bounding the operator norm of the interpolation projection operator~\citep{bos2018fekete}. 

\par In order to prove Theorem~\ref{thm:high-dim}, we require a version of this standard existential result that is strengthened in two important ways. One is that we require the slack $\lambda$ to be constant (say $2$), rather than the large $\binom{n+d}{d}$ that is sufficient for standard applications in interpolation theory. The other is that we require the cardinality of the norming set to not grow exponentially in the ambient dimension $d$; the standard bound is $|\cX_n|  = \binom{n+d}{d} = O(n)^d$. This requires exploiting the fact that in the setup of this paper, $\cX$ is a compact subset of an algebraic variety of dimension $\dim V \ll d$.

\begin{prop}[Constant norming set for variety]\label{prop:norming}
	Let $\cX$ be a compact subset of an equidimensional variety $V \subset \R^d$ satisfying $\deg V \leq e^{\dim V}$. For any integer $n \in \N$, there is a set $\cX_n \subset \cX$ of size $\log |\cX_n| = O( \dim V \log(n \dim V))$ satisfying $\|p\|_{\cX} \leq 2 \|p\|_{\cX_n}$ for all polynomials $p \in \R_{\leq n}[x]$.
\end{prop}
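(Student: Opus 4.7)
The plan is to combine a classical Fekete-point construction with a tensorization trick in the spirit of \citep{bloom2012polynomial}, so that the slack is reduced from a polynomial in the Hilbert function down to a universal constant.

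First, I would reduce to working modulo polynomials that vanish on $\cX$. Let $I_m = \{p \in \R_{\leq m}[x] : p|_\cX \equiv 0\}$ and $H_m = \dim(\R_{\leq m}[x]/I_m)$. Since $\cX \subset V$, every polynomial vanishing on $V$ also vanishes on $\cX$, so $I_m \supseteq I_{\leq m}(V)$, whence $H_m \leq \HF_V(m)$. Combined with Lemma~\ref{lem:hf-equidim} and the hypothesis $\deg V \leq e^{\dim V}$, this gives $\log H_m \leq \dim V + \dim V \log(m+1)$.

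Second, I would establish a \emph{weak} norming inequality with Fekete points: for every $m$, there exist $y_1, \dots, y_{H_m} \in \cX$ such that
\[
    \|p\|_\cX \;\leq\; H_m \, \max_{j} |p(y_j)| \qquad \forall\, p \in \R_{\leq m}[x].
\]
Fix a basis $f_1, \dots, f_{H_m}$ of $\R_{\leq m}[x]/I_m$ and choose the $y_j$'s in the compact set $\cX$ so as to maximize $|\det[f_i(y_j)]_{i,j}|$; by an analog of Lemma~\ref{lem:rank-exact:interp} applied to $\cX$ rather than all of $V$, this maximum is strictly positive. The Lagrange interpolation polynomials $\ell_j(x) = \det M_j(x)/\det M$, obtained by replacing $y_j$ with $x$ in the Vandermonde matrix $M$, satisfy $|\ell_j(x)| \leq 1$ on $\cX$ by the maximality of $|\det M|$; summing the identity $p(x) = \sum_j \ell_j(x) p(y_j)$ over $j$ then yields the displayed inequality.

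Third, I would tensorize. Given $n$, choose a parameter $k \in \N$ and apply the weak norming inequality at degree $m = nk$ to $p^k$ for each $p \in \R_{\leq n}[x]$:
\[
    \|p\|_\cX^k \;=\; \|p^k\|_\cX \;\leq\; H_{nk} \max_j |p(y_j)|^k,
\]
and take $k$-th roots to obtain slack $H_{nk}^{1/k}$. Using the bound $\log H_{nk} \leq \dim V (1 + \log(nk+1))$, the choice $k = C \dim V \log(n \dim V)$ for a sufficiently large absolute constant $C$ ensures $H_{nk}^{1/k} \leq 2$. Taking $\cX_n := \{y_1, \dots, y_{H_{nk}}\}$ gives the desired norming set, with cardinality satisfying $\log|\cX_n| \leq \log H_{nk} = O(\dim V \log(n \dim V))$.

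The main obstacle is the Fekete step: one must adapt the classical Fekete/Lagrange argument to the polynomial quotient attached to a compact subset of a variety, which requires both that the maximum-determinant selection attains a nonzero value on $\cX$ (so that the $\ell_j$'s are well defined) and that the effective dimension $H_m$ is governed by $\HF_V(m)$ rather than by the ambient $\binom{m+d}{d}$. Once that is in place, the tensorization is a short calculation, and the final cardinality bound falls out by balancing $k$ against $\dim V$.
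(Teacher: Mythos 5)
Your proposal is correct and follows essentially the same route as the paper: Fekete points give a weak norming bound with slack equal to the cardinality of the interpolation set, and tensorizing (raising to the $k$-th power and taking $k$-th roots) drives the slack down to a constant, with the cardinality controlled by Lemma~\ref{lem:hf-equidim} together with the hypothesis $\deg V \leq e^{\dim V}$. Your slightly more careful bookkeeping, working with $\dim(\R_{\leq m}[x]/I_m)$ for $\cX$ rather than directly with $\HF_V(m)$, is harmless and matches what Lemma~\ref{lem:fekete} strictly requires.
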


These two ``tightenings'' of the standard result are achieved by combining the classical argument (based on Fekete sets) with a tensoring trick inspired by Proposition 23 of~\citep{bloom2012polynomial}.
Let us first introduce this classical argument. Our exposition is based on~\citep{bos2018fekete}; see that nice survey for further background.

\par Fix a compact domain $\cX \subset \R^d$ and degree $n \in \N$. Let $N$ denote the dimension of the space $\R_{\leq n}[\cX]$ of polynomials of degree at most $n$ restricted to $\cX$, and let $\{p_1, \dots, p_N\}$
denote any basis of this space. 
Define $\mathrm{vdm}(x_1, \dots, x_N)$ to be the determinant of the $N \times N$ Vandermonde matrix with $ij$-th entry $p_i(x_j)$. A \emph{Fekete set} for $\cX$ of degree $n$ is a maximizer of
\[
	\max_{\cF_n \subset \cX, \, |\cF_n| = N} \abs{\mathrm{vdm}(\cF_n)}.
\]
(Note that this definition is independent of the choice of basis $\{p_1, \dots, p_N\}$.) A basic fact about Fekete sets is that they are norming sets, albeit of large size $|\cF_n| = N$ and for large slack $\lambda = N$. 

\begin{lemma}[Fekete sets are norming sets; folklore]\label{lem:fekete}
	Consider any compact domain $\cX \subset \R^d$ and degree $n \in \N$. 
	Set $N = \dim(\R_{\leq n}[\cX])$.
	Then any Fekete set $\cF_n$ satisfies $\|p\|_{\cX} \leq N \|p\|_{\cF_n}$. 
\end{lemma}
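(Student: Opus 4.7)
The plan is to adapt the classical Lagrange-interpolation argument for Fekete sets. Let $\cF_n = \{x_1, \ldots, x_N\} \subset \cX$ be a Fekete set, and fix a basis $\{p_1, \ldots, p_N\}$ of $\R_{\leq n}[\cX]$. First I would argue that $\mathrm{vdm}(\cF_n) \neq 0$. This follows from Lemma~\ref{lem:rank-exact:interp} (the Unisolvence Theorem on varieties) applied to $V = \cX$ treated as a subset of a variety containing it: there exist $N$ points in $\cX$ on which the generalized Vandermonde matrix is nonsingular, and continuity of $|\mathrm{vdm}|$ on the compact set $\cX^N$ guarantees that the maximizer achieves a nonzero value.

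Next I would define, via Cramer's rule, the Lagrange-type functions
\[
\ell_j(x) \defeq \frac{\mathrm{vdm}(x_1, \ldots, x_{j-1}, x, x_{j+1}, \ldots, x_N)}{\mathrm{vdm}(x_1, \ldots, x_N)}, \qquad j \in [N].
\]
Each $\ell_j$ lies in $\R_{\leq n}[\cX]$ (as a fixed linear combination of $p_1, \ldots, p_N$ with coefficients depending on the cofactors of the Vandermonde matrix at $\cF_n$), and satisfies the interpolation identity $\ell_j(x_k) = \delta_{jk}$. The Fekete-maximality of $\cF_n$ immediately yields the key bound
\[
\max_{x \in \cX} |\ell_j(x)| \leq 1, \qquad j \in [N],
\]
since replacing $x_j$ by any $x \in \cX$ in the Vandermonde determinant cannot increase its absolute value.

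To finish, I would invoke unisolvence (equivalently, nonsingularity of the Vandermonde matrix at $\cF_n$) to conclude that any $p \in \R_{\leq n}[\cX]$ is uniquely determined by its values on $\cF_n$, and in fact admits the Lagrange expansion
\[
p(x) = \sum_{j=1}^N p(x_j)\, \ell_j(x), \qquad \forall x \in \cX.
\]
Taking absolute values and applying the triangle inequality together with $|\ell_j(x)| \leq 1$ gives $|p(x)| \leq N \max_j |p(x_j)| = N \|p\|_{\cF_n}$, which is the claimed norming bound.

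No step is truly hard here: the whole argument is a textbook consequence of the Fekete definition once the Lagrange functions are in place. The only subtlety worth mentioning is the initial existence/nonvanishing of the Fekete determinant, which is where the (already-established) Unisolvence Theorem on varieties is essential — without it, one could not rule out the possibility that every $N$-tuple in $\cX$ yields a degenerate Vandermonde matrix.
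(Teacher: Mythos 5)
Your argument is correct and is essentially the paper's own proof: both use the Lagrange functions $\ell_j$ defined by ratios of Vandermonde determinants, observe that Fekete-maximality gives $\|\ell_j\|_{\cX}\leq 1$, and conclude via the Lagrange expansion and the triangle inequality. The one place you go a bit beyond the paper's sketch is in explicitly addressing non-vanishing of the Fekete determinant; the paper leaves this implicit (deferring to the cited survey), whereas you invoke Lemma~\ref{lem:rank-exact:interp} together with compactness, which is a reasonable way to close that gap---just note that Lemma~\ref{lem:rank-exact:interp} is stated for a variety $V$, not an arbitrary compact $\cX$, so strictly speaking you should either observe that its proof applies verbatim to any compact set with the space $\R_{\leq n}[\cX]$, or that the relevant $\cX$ in this paper is always a compact subset of a variety with $\R_{\leq n}[\cX]=\R_{\leq n}[V]$.
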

\begin{proof}
	This proof is folklore; we sketch it for completeness and refer to e.g.,~\citep{bos2018fekete} for details. Let $P$ denote the interpolation projection operator which given a continuous function $f$ over $\cX$, outputs a polynomial $P(f)$ of degree at most $n$ which interpolates $f$ at $\cF_n$. Then 
	\[
		P(f)(x) = \sum_{i=1}^N f(x_i) \ell_i(x),
	\]
	where $\ell_i(x)$ is the $i$-th Lagrange interpolating polynomial for $\cF_n$. Because of the classical identity
	$\ell_i(x) = \mathrm{vdm}(x_1, \dots, x_{i-1},x, x_{i+1}, \dots, x_N) / \mathrm{vdm}(x_1, \dots, x_{i-1},x_i, x_{i+1}, \dots, x_N)$, it follows by definition of $\cF_n$ being a Fekete set that
	\[
		\|\ell_i\|_{\cX} = 1.
	\]
	Thus, for any polynomial $p \in \R_{\leq n}[x]$, we have
	\[
		\|p\|_{\cX}
		=
		\left\|P(p)\right\|_{\cX}
		= 
		\left\|\sum_{i=1}^N p(x_i) \ell_i\right\|_{\cX}
		\leq
		\sum_{i=1}^N |p(x_i)| \cdot \left\|\ell_i\right\|_{\cX}
		\leq 
		N\|p\|_{\cF_n}.
	\]
\end{proof}

\begin{proof}[Proof of Proposition~\ref{prop:norming}]
	For shorthand, denote $\dim V$ by $d^{\star}$. For integer $a \in \N$ chosen shortly, let $\cX_n = \cF_{an}$ be a Fekete set for $\cX$ of degree $an$. Since $|\cF_{an}|$ is the dimension of the space $\R_{\leq an}[V]$,
	\[
	|\cX_n|
	=
	|\cF_{an}|
	=
	\HF_V(an) 
	\leq
	\deg V \binom{an + d^*}{d^*}
	\leq
	\deg V \cdot (2ean)^{d^{\star}}
	\]
	by Lemma~\ref{lem:hf-equidim} and a crude bound.
	By the assumption $\deg V \leq e^{d^{\star}}$, this implies
	\begin{align}
		|\cX_n| \leq (15 an)^{d^{\star}}.
		\label{eq:norming-proof}
	\end{align}
	Now observe that if 
	$\deg(p) \leq n$, then $\deg(p^{a}) \leq an$, thus
	\[
	\|p\|_{\cX} = \|p^a\|_{\cX}^{1/a} \leq \left( |\cF_{an}| \cdot \|p^a\|_{\cF_{an}} \right)^{1/a} = |\cF_{an}|^{1/a} \cdot \|p\|_{\cX_n},
	\]
	by the norming property of Fekete sets (Lemma~\ref{lem:fekete}). Choosing $a \geq 1$ strengthens the tensoring bound $|\cF_{an}|^{1/a}$. In particular, choosing $a = \lceil c d^* \log(n d^*) \rceil$ for an appropriate constant $c$ ensures that 
	$|\cF_{an}|^{1/a}
	\leq
	(15an)^{d^{\star}/a}
	=
	\exp( \frac{d^{\star}}{a}  \log (15an) )
	\leq 2$ 
	as well as the desired guarantee on $\log|\cX_n|$.
\end{proof}

\subsection{Proof of Theorem~\ref{thm:high-dim}}\label{ssec:hd-pf}

\paragraph*{Construction: RFF on truncated Fourier distribution.} By Bochner's Theorem,
\[
	K(x,y) = \E_{\omega \sim \mu} \left[ e^{i \langle \omega, x - y\rangle} \right].
\]
Define 
\begin{align}
	\tilde{K}(x,y) = \E_{\tilde{\omega} \sim \tilde{\mu}} \left[e^{i \langle \tilde{\omega}, x - y \rangle} \right],
	\label{eq:rff-int-rep-tilde}
\end{align}
where $\tilde{\mu}(\tilde{\omega}) = 	\frac{1}{1-p} \mu(\tilde{\omega}) \cdot \mathds{1}[\|\tilde{\omega}\|^2 \leq t]$
is $\mu$ truncated to the ball of squared norm $t = 2\E_{\omega \sim \mu} \|\omega\|^2 / \eps = 2\sig_K^2/\eps$. Here, $p = \Prob_{\omega \sim \mu} [\|\omega\|^2 > t]$ so that $\tilde{\mu}$ is rescaled to a probability distribution. 
\par The approximation we construct is the RFF approximation of $\tilde{K}$; that is,
\[
\tilde{K}_r(x,y) = \frac{1}{r}\sum_{i=1}^r e^{i \langle \omega, x - y\rangle},
\]
where $\tilde{\omega_1}, \dots, \tilde{\omega_r} \sim \tilde{\mu}$ are drawn independently. Note that $\tilde{K}$ has rank $2r$ by expanding the complex exponential into sinusoids~\citep{RahRec08}. We show that $\tilde{K}_r$ approximates $K$ well with high probability.

\paragraph*{Analysis step 1: Error bound for a fixed pair of points}
As in~\citep{RahRec08}, applying Hoeffding's inequality for sampling the integral representation~\eqref{eq:rff-int-rep-tilde} implies that for any fixed $x,y \in \cX$,
\begin{align}
	\Prob\left[ \abs{\tilde{K}(x,y) - {\tilde{K}}_r(x,y)} \leq \eps \right]
	\geq
	1 - e^{- r\eps^2/2 }.
	\label{eq:rff-pf:hoeffding}
\end{align}

\paragraph*{Analysis step 2: Extending the error bound to the whole domain} 
This is where the proof critically deviates from~\citep{RahRec08}: rather than union bound over an $\eps$-net of $\cX$, we union bound over a norming set for $\cX$ and exploit algebraic properties of the domain.

\underline{Fourier distribution truncation.} By Markov's inequality,
\[
p = \Prob_{\omega \sim \mu}\left[ \|\omega\|^2 > t  \right] \leq \frac{\E_{\omega \sim \mu}\|\omega\|^2}{t} = \frac{\eps}{2}.
\]
Thus the kernel $\tilde{K}$ is uniformly close to $K$ because for all $x,y$,
\begin{align}
	\abs{K(x,y) - \tilde{K}(x,y)}
	= 
	\abs{
		\E_{\omega \sim \mu} \left[ e^{i \langle \omega, x - y\rangle} \right]
		-
		E_{\tilde{\omega} \sim \tilde{\mu}} \left[ e^{i \langle \tilde{\omega}, x - y\rangle} \right]
	}
	\leq 2p
	\leq \eps.\label{eq:rff-pf:K-Ktilde}
\end{align}
Above, the second step is by conditioning on $\|w\|^2 \leq t$ and bounding the integrands by $1$.

\underline{Polynomial approximation.} We approximate $\tilde{K}$ and $\tilde{K_r}$ by low-degree polynomials. Since their Fourier Transforms are compactly supported, it can be shown (see Lemma~\ref{lem:ft-smooth}) that there exist polynomial kernels $P_n$ and $Q_n$ of degree $n = O(t + \log 1/\eps) =  O(\sig_K^2/\eps)$ that satisfy
\begin{align}
	\|\tilde{K} - P_n\|_{\ball \times \ball} \leq \eps \quad \text{ and } \quad 
	\|\tilde{K_r} - Q_n\|_{\ball \times \ball} \leq \eps.
	\label{eq:rff-pf:poly}
\end{align}

\underline{Using the norming set.} Let $\cX_n \subset \cX$ be the norming set guaranteed by Proposition~\ref{prop:norming}. Then by a union bound over~\eqref{eq:rff-pf:hoeffding} for all $(x,y) \in \cX_n \times \cX_n$, we have
\begin{align}
	\Prob\left[ \|\tilde{K} - \tilde{K}_r\|_{\cX_n \times \cX_n} \leq \eps \right] \geq 1 - \delta
	\label{eq:rff-pf:union-bound}
\end{align}
if the rank $r$ is at least
\begin{align}
	r \geq \frac{2}{\eps^2} \log \left( \frac{|\cX_n|^2}{\delta} \right).
	\label{eq:rff-pf:r}
\end{align}
This gives the desired rank bound~\eqref{thm:high-dim-rank} by plugging in the bound $\log |\cX_n| = O(\dim(V) \log (n \dim(V)))$ by Proposition~\ref{prop:norming}, and the definition of $n = O(\sig_K^2/\eps)$.
Moreover, in this success event of~\eqref{eq:rff-pf:union-bound},
\begin{align*}
	\|K - \tilde{K}_r\|_{\cX \times \cX}
	&\leq 
	\|\tilde{K} - \tilde{K}_r\|_{\cX \times \cX} + \eps
	\leq
	\| P_n - Q_n\|_{\cX \times \cX} + 3\eps
	\leq
	2\|P_n - Q_n\|_{\cX_n \times \cX}	+ 3\eps
	\\&\leq
	4\|P_n - Q_n\|_{\cX_n \times \cX_n} + 3\eps
	\leq
	4\|\tilde{K} - \tilde{K}_r\|_{\cX_n \times \cX_n} + 11\eps
	\leq
	15\eps.
\end{align*}
Above, the first inequality is by the uniform approximation~\eqref{eq:rff-pf:K-Ktilde} of $K$ by $\tilde{K}$. The second and penultimate inequalities are by replacing $\tilde{K}$ and $\tilde{K}_r$ with the respective polynomial approximations $P_n$ and $Q_n$, see~\eqref{eq:rff-pf:poly}. The third inequality is because $\cX_n$ is a constant norming set (cf. Proposition~\ref{prop:norming}) and the fact that $P_n(x,y) - Q_n(x,y)$ is a degree $n$ polynomial in $x$ for fixed $y$; and vice versa for the fourth inequality. The final inequality is by the error bound~\eqref{eq:rff-pf:union-bound} on $\cX_n \times \cX_n$. Rescaling $\eps$ by a constant factor $1/15$ concludes the proof.

\section{Examples}\label{sec:ex}

In this section we consider several example varieties. We demonstrate the improved rates for kernel approximation implied by our results by computing the dimension, degree, and Hilbert function for each of these varieties. See Table~\ref{tab:variety} for a summary. We briefly remind the reader of how these three characteristics of varieties arise in our results. 
\begin{itemize}
	\item The \emph{dimension} of the variety is the predominant characteristic for our purposes, as the main point of our kernel approximation results in both the high-precision regime (Theorem~\ref{thm:main-var}) and high-dimensional regime (Theorem~\ref{thm:high-dim}) is that asymptotic dependence on the ambient dimension can be improved to the analogous dependence on the variety's dimension.
	\item The \emph{degree} of the variety is a quantitative measure of the variety's regularity, which is required for kernel approximation (see \S\ref{ssec:intro:cont}). Our high-precision rate depends linearly on it (Theorem~\ref{thm:main-var}), and our high-dimensional rate depends logarithmically on it (Remark~\ref{rem:high-dim-deg}). 
	\item The \emph{Hilbert function} provides tight rank bounds on polynomial kernels over varieties (Proposition~\ref{prop:rank-asymp}). Our high-precision result (Theorem~\ref{thm:main-var}) only uses the asymptotics of this Hilbert function; computing the lower-order terms enables numerical computations and comparisons.
\end{itemize} 
We include explicit computations to illustrate a variety of different techniques for determining these three quantities.

\begin{table}[]
	\centering
	\begin{tabular}{|c|c|c|c|c|}
		\hline
		\textbf{Variety}     & \textbf{Ambient dim} & \textbf{Variety dim} & \textbf{Hilbert function} \bm{$\mathrm{HF}_V(n)$} & \textbf{Where} \\ \hline
		$\R^d$  					& $d$ & $d$ 		&   $\binom{n+d}{d}$  & Ex~\ref{ex:ip-bound}    \\ \hline
		sphere 					& $d$ & $d-1$ 		& $\binom{n+d-1}{d-1} + \binom{n+d-2}{d-1}$ & Ex~\ref{ex:ip-bound}        \\ \hline
		$k$-sparse vectors & $d$ & $k$		& $\sum_{j=0}^k \binom{d}{j}\binom{n}{j}$ & \S\ref{ssec:ex:uas}      \\ \hline
		rank-$1$ matrices		    & $d = m_1m_2$ & $m_1+m_2-1$ 	& $\sum_{k=0}^n \binom{k+m_1-1}{m_1-1}\binom{k+m_2-1}{m_2-1}$  & \S\ref{ssec:ex:lr} 		\\ \hline
		sym. rank-$1$ matrices		    & $d = \binom{m+1}{2}$ & $m$ 	& $\sum_{k=0}^n \binom{2k+m-1}{m-1}$  & \S\ref{ssec:ex:lr} 		\\ \hline
		trig. moment curve 				& $d$ & $1$	     	& $dn+1$ & \S\ref{ssec:ex:moment} 		\\ \hline
		$\SO$ 						& $9$ & $3$ 		& $(2n+3)(2n+1)(n+1)/3$         & \S\ref{ssec:ex:SO3} \\ \hline
	\end{tabular}
	\caption{Characteristics of several example varieties.
	}
	\label{tab:variety}
\end{table}

\subsection{Sparse data}\label{ssec:ex:uas}

Many data-science applications involve $k$-sparse points in a high-dimensional ambient space $\R^d$, where $k \ll d$. These points lie on the algebraic variety $V$ which is the union of all $\binom{k}{d}$ coordinate subspaces of dimension $k$, i.e.,
\begin{align}
	V = \bigcup_{S \subset [d], \, |S| = k} \left\{ x \in \R^d : x_i = 0, \, \forall i \notin S \right\}.
	\label{eq:var-sparse}
\end{align}

\noindent It is clear that $\dim(V) = k$ since each of these $k$-dimensional hyperplanes is an irreducible variety. Following, we also compute the degree and Hilbert function of $V$.

\begin{prop}[Sparse data]\label{prop:sparse:ip}
	Let $V$ be the variety in~\eqref{eq:var-sparse}, and suppose $k < d$.  Then 
	\begin{itemize}
		\item \underline{Hilbert function.} $\HF_{V}(n) = \sum_{j=0}^k \binom{d}{j}\binom{n}{j}$.
		\item \underline{Dimension.} $\dim V = k$.
		\item \underline{Degree.} $\deg V = \binom{d}{k}$.
	\end{itemize}
\end{prop}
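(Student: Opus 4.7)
The plan is to identify the vanishing ideal $I(V)$ explicitly as the squarefree monomial ideal
\[
    I = \langle x_{i_1} x_{i_2} \cdots x_{i_{k+1}} : 1 \leq i_1 < i_2 < \cdots < i_{k+1} \leq d \rangle,
\]
after which the Hilbert function can be read off by counting standard monomials, and the dimension and degree then follow either from the asymptotics of $\HF_V(n)$ via Lemma~\ref{lem:hilbert} or directly from the irreducible decomposition of $V$. To verify $I(V) = I$, I would first show $V(I) = V$: a monomial $x_{i_1} \cdots x_{i_{k+1}}$ evaluates to zero at a point $x \in \R^d$ iff at least one of the $k+1$ specified coordinates of $x$ vanishes, so the common zero locus of all such monomials is precisely $\{x \in \R^d : |\supp(x)| \leq k\} = V$. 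Since $I$ is generated by squarefree monomials it is a radical ideal, so the Real Nullstellensatz gives $I(V) = I(V(I)) = I$.

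Given this explicit description, the standard-monomial basis of $\R_{\leq n}[V]$ consists precisely of those $x^\alpha$ with $|\alpha| \leq n$ and $|\supp(\alpha)| \leq k$. I would count these by partitioning according to the support set $S \subseteq [d]$: for each $S$ of size $j \in \{0, 1, \ldots, k\}$, the number of exponent vectors $\alpha|_S$ with all entries at least $1$ and total degree at most $n$ equals $\binom{n}{j}$, by the substitution $\beta_i = \alpha_i - 1$ which converts the count to a standard stars-and-bars enumeration on $j$ non-negative integers summing to at most $n-j$. Summing $\binom{d}{j}\binom{n}{j}$ over $j \in \{0, \ldots, k\}$ then yields the claimed Hilbert function formula.

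For the dimension and degree I would extract both from the asymptotic behavior $\HF_V(n) = \binom{d}{k} n^k/k! + O(n^{k-1})$ via Lemma~\ref{lem:hilbert}, giving $\dim V = k$ and $\deg V = \binom{d}{k}$. These values also have a direct geometric justification as a sanity check: $V$ is the union of $\binom{d}{k}$ distinct $k$-dimensional linear subspaces of $\R^d$, each a linear space of degree one, so equidimensionality and additivity of degree over irreducible components recover the same answers. The only technically nontrivial step is the identification $I(V) = I$, and the heavy lifting there is the radicality of squarefree monomial ideals, which is a standard algebraic fact; once that is in place, the rest of the argument is pure bookkeeping.
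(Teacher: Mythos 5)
Your proposal follows essentially the same route as the paper: identify $I(V)$ as the squarefree monomial ideal generated by degree-$(k{+}1)$ squarefree monomials, count standard monomials of degree at most $n$ by partitioning on support size, and then read off dimension and degree from the leading term of the Hilbert function via Lemma~\ref{lem:hilbert}. The only substantive difference is that you supply a justification for $I(V) = I$, which the paper asserts without proof.

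One small imprecision in that justification: you argue that $I$ is radical and then invoke the Real Nullstellensatz to conclude $I(V(I)) = I$. But the Real Nullstellensatz gives $I(V(I)) = \sqrt[\R]{I}$, the \emph{real} radical, and radicality alone does not imply real radicality over $\R$ (e.g., $\langle x^2 + y^2 \rangle$ is prime, hence radical, but its real vanishing locus is the origin). What you actually need is that squarefree monomial ideals are real radical. This is true and has essentially the same flavor as your argument: $I$ is the intersection of the coordinate-subspace primes $P_S = \langle x_i : i \notin S \rangle$ over the $k$-subsets $S \subset [d]$, each $P_S$ is a real prime (its quotient is a real polynomial ring, hence formally real), and a finite intersection of real radical ideals is real radical. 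With that substitution your argument is complete; everything else matches the paper.
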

\begin{proof}[Proof of Proposition~\ref{prop:sparse:ip}]
	$I(V)$ is the monomial ideal generated by $\{\prod_{i \in S} x_i : S \subseteq [d], |S| = k+1 \}$. Thus $\HF_{V}(n)$ is the number of monomials of degree at most $n$ in $\R[x]$ that are divisible by at most $k$ of $x_1, \dots, x_d$. We count these monomials via casework on the number $j$ of factors.
	For each $j \in \{0, \dots, k\}$, there are $\binom{d}{j}$ choices of the $j$ factors $x_{i_1}, \dots, x_{i_j}$. The corresponding monomials are of the form $\prod_{\ell=1}^j x_{i_\ell}$ times monomials of degree at most $n-j$ in the $j$ variables $x_{i_1}, \dots, x_{i_j}$, of which there are $\binom{n}{j}$ many. 	
	Therefore $\HF_{V}(n) = \sum_{j=0}^k \binom{d}{j}\binom{n}{j}$. Since this is a degree-$k$ polynomial with leading coefficient $\binom{d}{k} / k!$ for $n \geq d$, we have $\dim(V) = k$ and $\deg(V) = \binom{d}{k}$ by Lemma~\ref{lem:hilbert}.
\end{proof}

By Proposition~\ref{prop:rank-asymp}, this Hilbert function computation answers an open question (see the discussion in~\citep[\S3]{ongie2017algebraic}) about tight rank bounds for bounded-degree polynomial kernels that are restricted to sparse data $x,y \in V$. Previously, the only bound which exploited sparsity was $\binom{d}{k}\binom{n+k}{k}$~\citep{CotKesSre11,ongie2017algebraic}.
In contrast, our bound is always at least as good, generically exact (Remark~\ref{rem:rank-asymp:tight}), and sometimes orders-of-magnitude better. 
For example, even for a small-scale instance of sparsity $k=5$ and dimension $d = 100$, the previous rank bound is in the \emph{billions} for degree $n=2$, whereas ours is only about five \emph{thousand}.

\par Note also that this variety $V$ of sparse vectors is not a manifold. This is why kernel approximation methods that exploit manifold structure perform poorly on sparse data, see the discussion in \S\ref{ssec:intro:cont} and Figure~\ref{fig:nystrom-sparse}. In that figure, we run the standard Nystr\"om method using jitter factor $1\mathrm{e}{-10}$ and plot its average performance over $50$ runs. Because there is no closed formula for the $L^{\infty}$ error of Nystr\"om over $\cX = V \cap \ball$, we plot a generous underestimate which evaluates the approximation error at a large number of sampled points. The high-precision method we compare is Taylor Features using our exact rank bound (Remark~\ref{rem:rank-asymp:tight} plus Proposition~\ref{prop:sparse:ip}). We plot its $L^{\infty}$ error which can be computed in closed form.

We conclude this discussion with a numerical illustration in the high-dimensional regime. Since a primary message of this paper is that kernel approximation has a stronger dependence on the variety dimension $k$ than on the ambient dimension $d$, we empirically investigate this in Figure~\ref{fig:rff-sparse} by plotting, for varying $d$ and $k$, the error distribution of RFF over $k$-sparse data in $\R^d$. In this plot, we see qualitatively the mild dependence of the error distribution in $d$, but a stronger dependence in $k$---this is consistent with Theorem~\ref{thm:high-dim}. Note that we plot the error distribution rather than the $L^{\infty}$ error since there is no closed form for the $L^{\infty}$ error of RFF and it requires a prohibitive number of samples to estimate empirically.

\begin{figure}
	\begin{center}
		\includegraphics[width=0.9\textwidth]{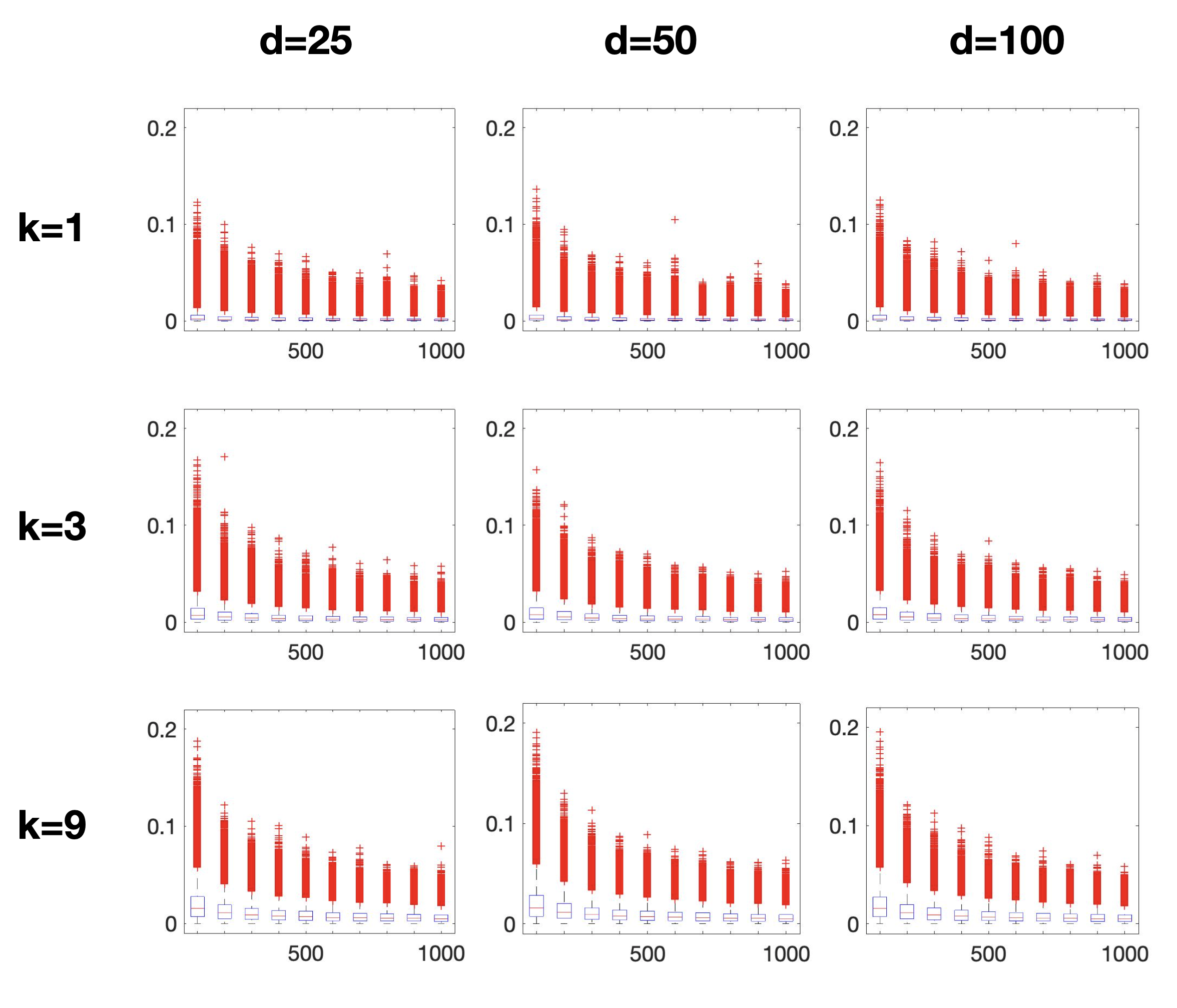}
	\end{center}
	\caption{Error distribution of the RFF approach (see \S\ref{sec:hd}) for approximating the Gaussian kernel $e^{-\|x-y\|^2/2}$ over the variety $V$ of $k$-sparse vectors in $\R^d$, for varying $k$ (rows) and $d$ (columns). In each plot, the $x$-axis is the rank of RFF, and the $y$-axis is a boxplot of the approximation error at $10^6$ pairs of points which are drawn uniformly at random from $V$. From this plot, we see qualitatively the mild dependence of the error distribution in the ambient dimension $d$, but a stronger dependence in the variety dimension $k$---this is consistent with Theorem~\ref{thm:high-dim}.}	
	\label{fig:rff-sparse}
\end{figure}

\subsection{Low-rank matrices}\label{ssec:ex:lr}

Here we consider varieties of low-rank matrices.
For simplicity, we restrict to rank-$1$ matrices; one can perform similar albeit more complicated computations for any fixed rank $r$ since every such set of matrices is a determinantal variety, see e.g.~\citep{harris1984symmetric}.

\par Let us begin with the symmetric case
\begin{align}
	V &= \{xx^T : x \in \R^m\}. \label{eq:rank1-sym}
\end{align}
Note that $V$ is a variety in the set $S^m \cong \R^{\binom{m+1}{2}}$ of symmetric $m \times m$ matrices as it is the vanishing set of all $2 \times 2$ minors (which are quadratic polynomials). As detailed below, the dimension of this variety is $m$, which can be much smaller than the ambient dimension $\binom{m+1}{2}$.

\begin{prop}[Symmetric rank-1 matrices]\label{prop:rank1-sym}
	Let $V$ be the variety in~\eqref{eq:rank1-sym}. Then 
	\begin{itemize}
		\item \underline{Hilbert function.} $\HF_{V}(n) = \sum_{k=0}^n \binom{2k+m-1}{m-1}$.
		\item \underline{Dimension.} $\dim V = m$.
		\item \underline{Degree.} $\deg V = 2^{m-1}$.
	\end{itemize}
\end{prop}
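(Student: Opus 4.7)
The plan is to exploit the natural parametrization $\phi : \R^m \to V$ given by $\phi(x) = xx^T$, which is a surjective morphism onto $V$. Under the pullback $\phi^{*}$, the coordinate ring $\R[V]$ embeds into $\R[x_1,\ldots,x_m]$ as the subring generated by the quadratic monomials $Y_{ij} = x_i x_j$, where $\{Y_{ij}\}_{i \leq j}$ are the ambient coordinates on $S^m$. This subring is precisely the $\mathbb{Z}/2$-invariant polynomials under the sign flip $x \mapsto -x$, equivalently, the direct sum $\bigoplus_{k \geq 0} \R_{2k}[x]$ of all even-degree homogeneous components.

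Since each ambient coordinate $Y_{ij}$ becomes a degree-$2$ monomial in $x$ under $\phi^{*}$, a polynomial of total degree $n$ in the $Y$ variables pulls back to a polynomial in $x$ whose homogeneous components have even degrees at most $2n$. Thus $\phi^{*}$ induces an isomorphism
\[
	\R_{\leq n}[V] \;\cong\; \bigoplus_{k=0}^n \R_{2k}[x].
\]
By Lemma~\ref{lem:poly-dim}, each summand has dimension $\binom{2k+m-1}{m-1}$, so
\[
	\HF_V(n) = \sum_{k=0}^n \binom{2k+m-1}{m-1},
\]
which is the claimed Hilbert function.

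From this, $\dim V$ and $\deg V$ follow mechanically from Lemma~\ref{lem:hilbert}. The summand $\binom{2k+m-1}{m-1}$ is a polynomial in $k$ of degree $m-1$ with leading coefficient $2^{m-1}/(m-1)!$, so summing over $k=0,\ldots,n$ yields a polynomial in $n$ of degree $m$ with leading coefficient $2^{m-1}/m!$. Lemma~\ref{lem:hilbert} then gives $\dim V = m$ (the degree of the Hilbert polynomial) and $\deg V = 2^{m-1}$ (from matching the leading coefficient $\deg V/(\dim V)!$).

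The step requiring the most care is verifying that $\phi^{*}$ identifies $\R_{\leq n}[V]$ with the even-degree truncation claimed above. Surjectivity onto $\bigoplus_{k=0}^n \R_{2k}[x]$ is a combinatorial fact: any monomial $x^\alpha$ with $|\alpha|=2k$ can be written as a product of $k$ quadratic factors $x_i x_j$, so it lies in the image of some degree-$k$ polynomial in the $Y$ variables. Injectivity (equivalently, that $I(V) = \ker \phi^{*}$) follows from the surjectivity of $\phi$ onto $V$: a polynomial $p(Y)$ vanishes on $V$ iff $p(xx^T) = 0$ for all $x \in \R^m$, iff $\phi^{*}(p) \equiv 0$ in $\R[x]$. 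Once these two points are in hand, the three assertions of the proposition are immediate from the calculation above.
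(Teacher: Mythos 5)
Your proof is correct and takes a genuinely different, more self-contained route than the paper's. The paper identifies $I(V)$ with the homogeneous ideal of the order-$2$ Veronese variety over $\mathbb{P}^{m-1}$, cites Harris for its projective Hilbert function $\binom{2n+m-1}{m-1}$, and then recovers the affine Hilbert function by telescoping the affine-to-projective difference relation. You instead work directly with the affine parametrization $\phi(x) = xx^T$: you observe that the pullback $\phi^*$ has kernel exactly $I(V)$, that it sends $\R_{\leq n}$ in the ambient $Y$-coordinates onto $\bigoplus_{k=0}^n \R_{2k}[x]$ (the truncated even-degree subring), and then read off the dimension from Lemma~\ref{lem:poly-dim}. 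Morally the two arguments are the same---the Veronese embedding is precisely the parametrization you use---but your version avoids the external citation and the projective detour, making the computation elementary and fully explicit. One small thing worth noting: your identification of $\ker\phi^*$ with $I(V)$ uses only that $V$ is the image of $\phi$, which is true by the definition~\eqref{eq:rank1-sym}, so the argument is sound even though the set $\{xx^T\}$ is strictly contained in its Zariski closure (the rank-$\leq 1$ symmetric matrices); the vanishing ideal, and hence the Hilbert function, is unaffected. The dimension and degree conclusions via Lemma~\ref{lem:hilbert} are handled correctly in both versions.
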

\begin{proof}
	Observe that $I(V)$ is the homogeneous ideal corresponding to the order-$2$ Veronese variety $\nu$ over $(m-1)$-dimensional projective space. Since the projective Hilbert function of this homogeneous ideal is $\binom{2n+m-1}{m-1}$ at degree $n$~\citep[Example 13.4]{harris2013algebraic},
	\[
	\HF_{V}(n) - \HF_{V}(n-1)
	=
	\binom{2n+m-1}{m-1},
	\]
	for $n \geq 1$ by the relation between the affine and projective Hilbert functions of a homogeneous ideal~\citep[Chapter 9, Theorem 12]{Cox13}. Since $\HF_{V}(0) = 1$, telescoping gives the desired Hilbert function identity.
	To compute the dimension and degree of $V$, note that the relation between the affine Hilbert function of $V$ and the projective Hilbert function of $\nu$ implies that the affine Hilbert series of $V$ in indeterminate $t$ is equal to the projective Hilbert series of $\nu$ in $t$, divided by $1-t$. Thus $\dim V = \dim \nu + 1$, and $\deg V = \deg \nu$ by Lemma~\ref{lem:hilbert}. Since $\binom{2n+m-1}{m-1}$ is a polynomial of degree $m-1$ in $n$ with leading coefficient $2^{m-1}/(m-1)!$, Lemma~\ref{lem:hilbert} implies $\dim \nu = m-1$ and $\deg \nu = 2^{m-1}$.
\end{proof}

Next, we consider the variety of non-symmetric rank-$1$ matrices
\begin{align}
	V = \{xy^T : x \in \R^{m_1}, y \in \R^{m_2} \}. \label{eq:rank1-nonsym}
\end{align}
Note that $V$ is a variety in $\R^{m_1 \times m_2} \cong \R^{m_1m_2}$ as it is the vanishing set of all $2 \times 2$ minors. As detailed below, the dimension of this variety is $m_1+m_2-1$, which can be much smaller then the ambient dimension $m_1m_2$. 

\begin{prop}[Rank-1 matrices]\label{prop:rank1}
	Let $V$ be the variety in~\eqref{eq:rank1-nonsym}.
	Then 
	\begin{itemize}
		\item \underline{Hilbert function.} $\HF_{V}(n) = \sum_{k=0}^n \binom{k+m_1-1}{m_1-1}\binom{k+m_2-1}{m_2-1}$.
		\item \underline{Dimension.} $\dim V = m_1 + m_2 - 1$.
		\item \underline{Degree.} $\deg V = \binom{m_1+m_2-2}{m_1-1}$.
	\end{itemize}
\end{prop}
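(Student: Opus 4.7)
The plan is to mirror the proof of Proposition~\ref{prop:rank1-sym}, with the Veronese variety replaced by the Segre variety. Identifying $\R^{m_1 \times m_2}$ with $\R^{m_1 m_2}$ via the coordinates $z_{ij}$, the variety $V$ is the affine cone over the (projective) Segre variety $\Sigma = \sigma(\mathbb{P}^{m_1-1} \times \mathbb{P}^{m_2-1}) \subset \mathbb{P}^{m_1 m_2 - 1}$ cut out by the $2 \times 2$ minors $z_{ij}z_{kl}-z_{il}z_{kj}$. Since these relations are homogeneous, $I(V)$ is a homogeneous ideal, so the projective and affine Hilbertian machinery from \S\ref{ssec:prelim:ag} applies in the same way as for the Veronese in Proposition~\ref{prop:rank1-sym}.

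First I would compute the projective Hilbert function $h_\Sigma(n)$. The key observation is that the restriction map from $\R[z_{ij}]_n$ (degree-$n$ homogeneous polynomials in the $z_{ij}$) to functions on $\Sigma$ factors through evaluation at $z_{ij} = x_i y_j$, whose image in $\R[x,y]$ consists precisely of the bihomogeneous polynomials of bidegree $(n,n)$ in $x$ and $y$. Because the kernel of this substitution map (modulo the $2\times 2$ minors) is trivial, $\R[\Sigma]_n \cong S^n(\R^{m_1}) \otimes S^n(\R^{m_2})$, whose dimension is $\binom{n+m_1-1}{m_1-1}\binom{n+m_2-1}{m_2-1}$.

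Next I would use the standard relation between the affine Hilbert function of the cone and the projective Hilbert function of $\Sigma$ (cf. \citep[Chapter 9, Theorem 12]{Cox13}, as invoked in the symmetric case): $\HF_V(n) - \HF_V(n-1) = h_\Sigma(n)$ for $n \geq 1$, with $\HF_V(0) = 1$. Telescoping immediately gives
\[
\HF_V(n) = \sum_{k=0}^n \binom{k+m_1-1}{m_1-1}\binom{k+m_2-1}{m_2-1},
\]
as claimed. Finally, to read off the dimension and degree, I would apply Lemma~\ref{lem:hilbert}: $h_\Sigma(n)$ is a polynomial of degree $(m_1-1)+(m_2-1) = m_1+m_2-2$ in $n$ with leading coefficient $1/((m_1-1)!(m_2-1)!)$, so its partial sum $\HF_V(n)$ is a polynomial of degree $m_1+m_2-1$ with leading coefficient $1/((m_1+m_2-1)(m_1-1)!(m_2-1)!)$. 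By Lemma~\ref{lem:hilbert}, $\dim V = m_1+m_2-1$ and
\[
\deg V = \frac{(m_1+m_2-1)!}{(m_1+m_2-1)(m_1-1)!(m_2-1)!} = \binom{m_1+m_2-2}{m_1-1}.
\]

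The main obstacle I anticipate is justifying cleanly that the image of the substitution map $z_{ij} \mapsto x_i y_j$ captures $\R[\Sigma]_n$ exactly (i.e., that the ideal of $2\times 2$ minors is the full vanishing ideal of $\Sigma$, not just a subset of it). This is classical, but if one wants a fully self-contained argument one can instead cite the standard dimension computation of $S^n(\R^{m_1}) \otimes S^n(\R^{m_2})$ as the bidegree-$(n,n)$ component of $\R[x,y]$ and verify that the substitution map is injective on bidegree-$(n,n)$ forms (a monomial basis argument), which sidesteps having to pin down $I(\Sigma)$ generators.
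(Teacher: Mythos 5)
Your proposal is correct and follows essentially the same route as the paper: identify $V$ as the affine cone over the Segre variety $\sigma(\mathbb{P}^{m_1-1}\times\mathbb{P}^{m_2-1})$, pass from the projective Hilbert function to the affine one via $\HF_V(n)-\HF_V(n-1)=h_\Sigma(n)$ and telescope, then read off dimension and degree from Lemma~\ref{lem:hilbert}. The only difference is cosmetic: the paper simply cites~\citep[Exercise 13.6]{harris2013algebraic} for $h_\Sigma(n)=\binom{n+m_1-1}{m_1-1}\binom{n+m_2-1}{m_2-1}$, whereas you sketch a derivation via the substitution $z_{ij}\mapsto x_iy_j$ onto bidegree-$(n,n)$ forms (and correctly flag that the one nontrivial point is that the $2\times 2$ minors generate the full vanishing ideal of the Segre, which is classical).
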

\begin{proof}
	The proof is identical to the proof of Proposition~\ref{prop:rank1-sym}, with the Veronese variety replaced by the order-$2$ Segre variety over the Cartesian product of $(m_1-1)$ and $(m_2-1)$ dimensional projective space. The facts about this Segre variety that are needed are that its projective Hilbert function is $\binom{n+m_1-1}{m_1-1}\binom{n+m_2-1}{m_2-1}$ at degree $n$~\citep[Exercise 13.6]{harris2013algebraic}, from which it is evident that its dimension is $(m_1-1) + (m_2-1)$ and its degree is $\binom{(m_1-1)+(m_2-1)}{m_1-1}$ by Lemma~\ref{lem:hilbert}.
\end{proof}

\subsection{Trigonometric moment curve}\label{ssec:ex:moment}

Here we consider $V$ to be the trigonometric moment curve
\[
M_d =
\left\{ \begin{bmatrix} \cos(\theta), \cos(2 \theta), \dots, \cos(\tfrac{d}{2} \theta), \sin(\theta), \sin(2\theta), 
	\dots, \sin(\tfrac{d}{2}\theta) \end{bmatrix}^T \in \R^{d} \; : \; 
\theta \in [0,2\pi)
\right\}
\]
for $d$ even. (We drop the $0$-th moments since they are constant and thus do not affect the variety's dimension, degree, or Hilbert function.) Although this variety $M_d$ is in ambient dimension $d$, it has dimension $1$. By Proposition~\ref{prop:rank-asymp}, this lets us prove rank bounds for polynomial kernels over $M_d$ that are linear in $d$ rather exponential in $d$. See Figure~\ref{fig:trig} for numerics.

\begin{prop}[Trigonometric moment curve]\label{prop:trig}
	Suppose $d$ is an even integer, and let $V = M_d$.
	\begin{itemize}
		\item \underline{Hilbert function.} $\HF_{V}(n) = nd+1$.
		\item \underline{Dimension.} $\dim V = 1$.
		\item \underline{Degree.} $\deg V = d$.
	\end{itemize}
\end{prop}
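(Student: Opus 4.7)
The plan is to compute $\HF_V(n)$ directly via the parametrization $\theta \mapsto (\cos\theta, \sin\theta, \dots, \cos(\tfrac{d}{2}\theta), \sin(\tfrac{d}{2}\theta))$, and then read off $\dim V$ and $\deg V$ from Lemma~\ref{lem:hilbert}. Under restriction to $V$, each polynomial in the coordinates becomes a real-valued function on $[0, 2\pi)$, so $\R_{\leq n}[V]$ can be identified with the image of the restriction map into the space of trigonometric polynomials in $\theta$. The goal is to show this image equals the span
\[
	W_n := \spann\{1, \cos(m\theta), \sin(m\theta) : 1 \leq m \leq \tfrac{nd}{2}\},
\]
which has dimension $nd+1$ by standard linear independence of distinct trigonometric modes.

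I would prove both containments using the complex parametrization $z = e^{i\theta}$, under which $\cos(k\theta) = (z^k + z^{-k})/2$ and $\sin(k\theta) = (z^k - z^{-k})/(2i)$. For the $\subseteq$ direction, a monomial of degree at most $n$ in the coordinates $\{c_k, s_k\}_{k=1}^{d/2}$ becomes, after expansion, a Laurent polynomial in $z$ whose exponents lie in $\{-n \cdot \tfrac{d}{2}, \dots, n \cdot \tfrac{d}{2}\}$ (since each factor contributes exponents of absolute value at most $d/2$). Restriction to $|z|=1$ yields a real trigonometric polynomial with frequencies of magnitude at most $nd/2$, hence lies in $W_n$. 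For the $\supseteq$ direction, fix $1 \leq m \leq nd/2$ and write $m = q \cdot \tfrac{d}{2} + r$ with $0 \leq r < \tfrac{d}{2}$; then
\[
	e^{im\theta} = \bigl(c_{d/2} + i s_{d/2}\bigr)^q \cdot \bigl(c_r + i s_r\bigr)
\]
(with the second factor interpreted as $1$ if $r=0$), and taking real and imaginary parts realizes $\cos(m\theta)$ and $\sin(m\theta)$ as polynomials of degree $\lceil 2m/d \rceil \leq n$ in the coordinate variables. This establishes $\R_{\leq n}[V] = W_n$ and hence $\HF_V(n) = nd+1$.

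The dimension and degree then drop out of Lemma~\ref{lem:hilbert}: since $\HF_V(n) = nd + 1$ is polynomial in $n$ of degree $1$, we get $\dim V = 1$; and since the leading coefficient is $d$, we get $\deg V = d \cdot 1! = d$.

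I expect the only subtle step is the $\supseteq$ direction, specifically verifying that the degree bound $\lceil 2m/d \rceil \leq n$ holds over the full range $1 \leq m \leq nd/2$ and handling the edge case $r = 0$ (where $m$ is a multiple of $d/2$) uniformly with the generic case. Once that combinatorial bookkeeping is in hand, everything else is immediate.
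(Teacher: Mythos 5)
Your proof is correct, and it takes a genuinely different route from the paper's. The paper proves the Hilbert function by writing down an explicit set of $(2k-1)k$ quadratic generators for $I(V)$ (where $k = d/2$), asserting they form a Gr\"obner basis with respect to grlex (via the S-pair criterion, which is asserted rather than carried out), computing the leading term ideal, and counting standard monomials. You instead bypass Gr\"obner bases entirely: you identify $\R_{\leq n}[V]$ with the image of the restriction map into trigonometric polynomials on $[0,2\pi)$ via the parametrization, and show this image equals $W_n = \spann\{1, \cos(m\theta), \sin(m\theta) : 1 \leq m \leq nk\}$ by working in the $z = e^{i\theta}$ picture---the $\subseteq$ containment by a Laurent-exponent bound, and $\supseteq$ by explicitly realizing each $\cos(m\theta), \sin(m\theta)$ as the real and imaginary parts of $(c_k + is_k)^q(c_r + is_r)$ with $m = qk + r$, of degree $\lceil m/k\rceil \leq n$. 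This is more elementary and self-contained, avoids the need to verify the Gr\"obner basis claim, and makes the structure of the coordinate ring transparent. What the paper's approach buys in exchange is an explicit monomial basis (the standard monomials) and hence an explicit feature map for the rank-$\HF_V(n)$ factorization of Proposition~\ref{prop:rank-exact}; your argument establishes the dimension count without directly producing such a basis in the polynomial ring, though one could extract it with a bit more work from the $(c_k+is_k)^q(c_r+is_r)$ construction. Both approaches then read off $\dim V = 1$ and $\deg V = d$ from Lemma~\ref{lem:hilbert} in the same way.
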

\begin{proof}
	Denote $k = d/2$.
	We establish the Hilbert function since it implies the other properties by Lemma~\ref{lem:hilbert}.
	Denote a point in $V$ by $[x,y]^T$ where $x,y \in \R^k$ satisfy $x_j = \cos(j \theta)$ and $y_j = \sin(j \theta)$ for some $\theta \in [0,2\pi)$ and all $j \in [k]$. We claim that the following $(2k-1)k$ quadratic generators form a Gr\"obner basis for $V$ w.r.t. the grlex ordering where $x_1 > \cdots > x_k > y_1 \cdots > y_k$:
	\begin{enumerate}[(i)]
		\item \underline{Square terms.} Take $2x_i^2 - x_{2i} - 1$ and $2y_i^2 + x_{2i} - 1$ for $i = 1, \dots, \lfloor k/2\rfloor$; $2x_i^2 + 2y_k y_{2i-k} - x_{2k-2i} - 1$ and $2y_i^2 - 2y_{k} y_{2i-k} + x_{2k-2i} - 1$ for $i = \lfloor k/2\rfloor + 1, \dots, k-1$; and $x_i^2 + y_i^2 = 1$ for $i=k$.
		\item \underline{$x_ix_j$ cross terms.} Let $i < j$. Take $2x_ix_j - x_{i+j} - x_{j-i}$ for $i+j \leq k$, and $2x_i x_j + 2y_{k} y_{i+j-k} - x_{j-i} - x_{2k-j-i}$ for $i + j > k$. 
		\item \underline{$x_iy_j$ cross terms.} Take $2x_iy_j - y_{i+j} - \sign(i-j)y_{|i-j|}$ for $i \in [k]$, $j \in [k-1]$. 
		\item \underline{$y_iy_j$ cross terms.} Take $2y_iy_j - x_{|i-j|} + x_{i+j}$ for $i < j < k$.
	\end{enumerate}
	Above, $x_0$ and $y_0$ denote $1$ and $0$, respectively. That these polynomials form a Gr\"obner basis is readily checked by observing that each is in $I(V)$ (follows from trigonometric sum-to-product and product-to-sum identities), and that the S-pair criterion holds~\citep{Cox13}.
	\par Thus $\LT(I(V))$ is generated by all quadratic monomials in $S := \{x_1, \dots, x_k, y_1, \dots, y_{k-1}\}$. The corresponding standard monomials of degree at most $n$ are of two types:
	\begin{itemize}
		\item \underline{No factors in $S$}. Then the monomial is in $\R_{\leq n}[y_k]$. 
		 There are $n+1$ such monomials. 
		\item \underline{Single linear factor from $S$}. There are $2k-1$ choices of this factor. 
		The rest of the monomial is in $\R_{\leq n-1}[y_k]$.
		 There are $(2k-1)n$ such monomials total.
	\end{itemize}
	Summing yields $2kn+1$ standard monomials total.
\end{proof}

\begin{remark}[Interpretation via combinatorial algebraic geometry]
	The leading term ideal computed above for the moment curve can be interpreted as the graphical ideal corresponding to the graph on $2k$ vertices $V = \{x_1, \dots, x_k, y_1, \dots, y_k\}$ that is the complete graph with self-loops on $V \setminus \{y_k\}$. 
\end{remark}

\subsection{Rotation matrices}\label{ssec:ex:SO3}

\begin{figure}
	\centering
	\begin{subfigure}{.48\textwidth}
		\centering
		\includegraphics[width=0.75\linewidth]{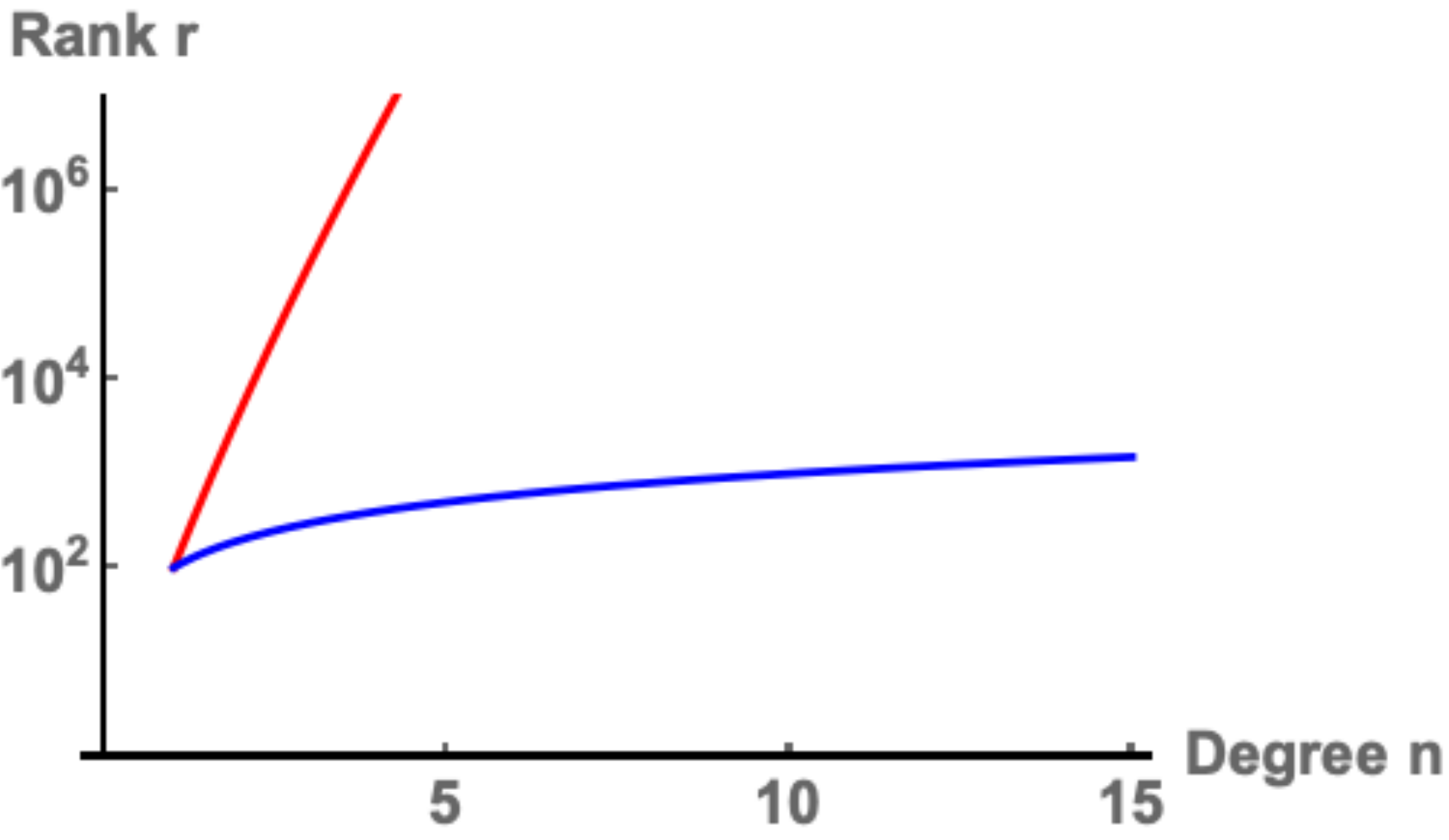}
		\caption{$V$ is the trigonometric moment curve $M_{100} \subset \R^{100}$. The rank is $100n+1$ by Proposition~\ref{prop:trig}, not $\binom{n+100}{n} = O(n^{100})$.}
		\label{fig:trig}
	\end{subfigure}%
	\hfill
	\begin{subfigure}{.48\textwidth}
		\centering
		\includegraphics[width=1\linewidth]{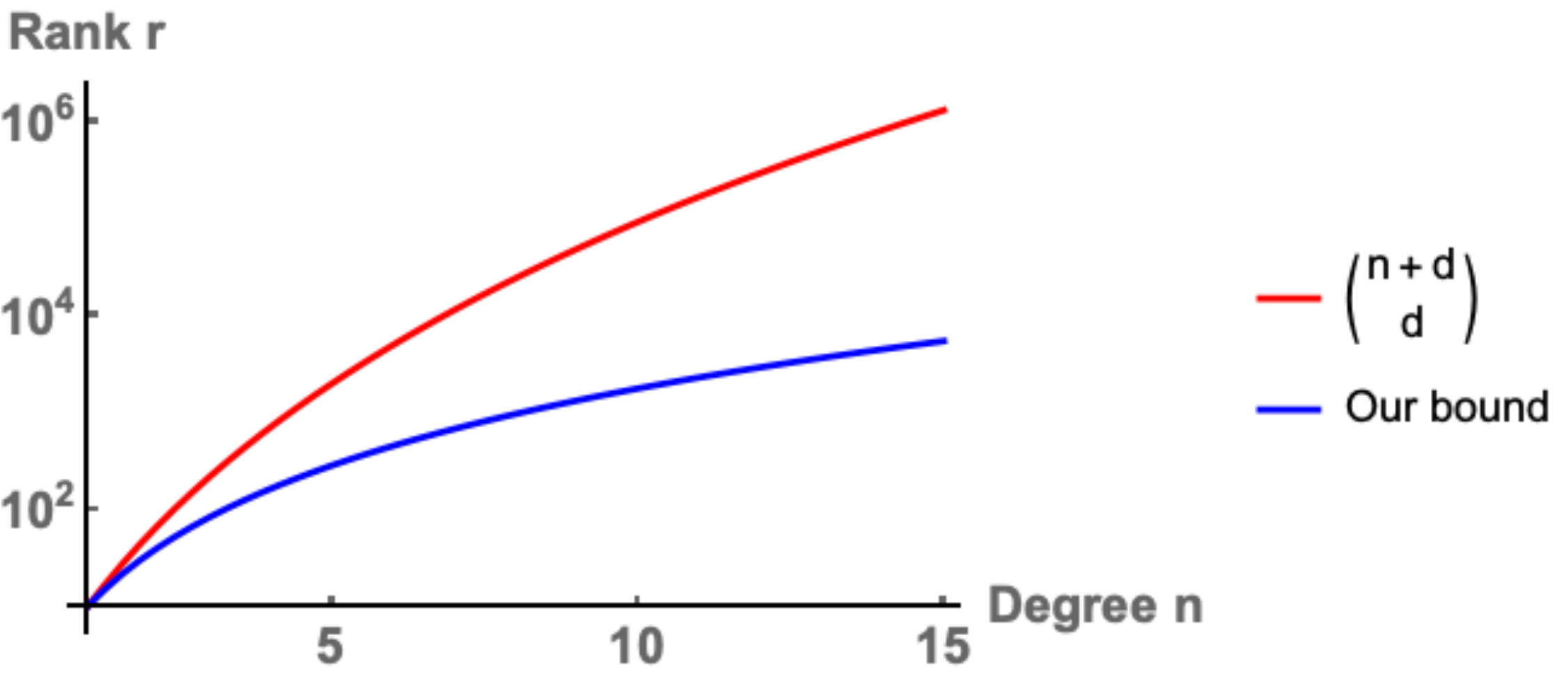}
		\caption{$V$ is the special orthogonal group $\SO \subset \R^9$. The rank is $(2n+3)(2n+1)(n+1)/3$ by Proposition~\ref{prop:SO3}, not $\binom{n+9}{n} = O(n^9)$.}
		\label{fig:so3}
	\end{subfigure}
	\caption{
		Proposition~\ref{prop:rank-asymp} provides improved rank bounds on degree-$n$ polynomial kernels over a variety $V$, namely $\HF_{V}(n) = 
		O(n^{\dim V})$ rather than the standard bound $\binom{n+d}{d} = O(n^d)$. 
		This directly translates into better high-precision rates for kernel approximation (see \S\ref{sec:high-prec}). This improvement is demonstrated here for several varieties $V$. Observe that the $y$-axis is in log-scale.
	}
	\label{fig:ex-rank}
\end{figure}

Here we consider $V$ to be the special orthogonal group 
\[
\SO := \{X \in \R^{3 \times 3} : \det(X) = 1, \; X^TX = 1 \}. 
\]
This is a variety in $\R^{9} \cong \R^{3 \times 3}$ since $\det(X) = 1$ is a polynomial equation in the entries $X_{ij}$, and $X^TX = 1$ is given by $9$ polynomial equations in the entries of $X$. Although this variety is in ambient dimension $9$, its dimension as a variety is significantly smaller, as is intuitively evident by the $3$-dimensional re-parameterization of $\SO$ in terms of the pitch, yaw, and roll scalars. The following proposition makes this precise and computes an tight rank bound for polynomial kernels over $\SO$ that is cubic in their degree. See Figure~\ref{fig:so3} for numerics. See also~\citep{brandt2017degree} for degree computations for higher-order special orthogonal groups.

\begin{prop}[SO(3)]\label{prop:SO3}
	Let $V = \SO$.
	\begin{itemize}
		\item \underline{Hilbert function.} $\HF_{V}(n) = (2n+3)(2n+1)(n+1)/3$.
		\item \underline{Dimension.} $\dim V = 3$.
		\item \underline{Degree.} $\deg V = 8$.
	\end{itemize}
\end{prop}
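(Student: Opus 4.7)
My plan is to compute the Hilbert function directly; the claims about $\dim V$ and $\deg V$ then follow from Lemma~\ref{lem:hilbert}. Expanding $(n+1)(2n+1)(2n+3)/3 = \tfrac{4}{3}n^{3} + 4n^{2} + \tfrac{11}{3}n + 1$, the leading coefficient $4/3$ and the identity $\deg V = (\dim V)!\cdot(\text{leading coefficient})$ immediately give $\dim V = 3$ and $\deg V = 6 \cdot \tfrac{4}{3} = 8$. So everything reduces to establishing the formula for $\HF_V(n)$.

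To get the Hilbert function I will use representation theory of the compact algebraic group $\SO$ via the Peter--Weyl theorem, which decomposes the coordinate ring as
\[
\R[\SO] \;\cong\; \bigoplus_{\ell=0}^{\infty} V_{\ell} \otimes V_{\ell}^{*},
\]
where $V_{\ell}$ denotes the (unique up to isomorphism) irreducible $\SO$-representation of dimension $2\ell+1$. The key step, and the heart of the argument, is to match this decomposition with the grading inherited from the embedding $\SO \hookrightarrow \R^{3\times 3}$. Concretely I realize $V_{\ell}$ as the space $\cH_{\ell}$ of degree-$\ell$ harmonic polynomials on $\R^{3}$ (which indeed has dimension $2\ell+1$). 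For $p \in \cH_{\ell}$ and $X \in \SO$, the rotated polynomial $y \mapsto p(X^{\top} y)$ is again a degree-$\ell$ harmonic polynomial in $y$; expanding it in a fixed basis $\{p_{m}\}$ of $\cH_{\ell}$ yields the Wigner matrix entries $D^{\ell}_{m,m'}(X)$, and by construction these are homogeneous polynomials of degree exactly $\ell$ in the entries of $X$. They span the $V_{\ell}\otimes V_{\ell}^{*}$ summand, proving that the summand lies inside degree $\leq \ell$.

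Combining this with the complementary fact that $V_{\ell}\otimes V_{\ell}^{*}$ cannot appear in degree strictly less than $\ell$ (since by the Clebsch--Gordan rule $V_{1}\otimes V_{1} = V_{0}\oplus V_{1}\oplus V_{2}$ and iteration, the irreducible $V_{\ell}$ first occurs in $\mathrm{Sym}^{\ell}(V_{1})$) yields
\[
\R_{\leq n}[\SO] \;=\; \bigoplus_{\ell=0}^{n} V_{\ell} \otimes V_{\ell}^{*},
\qquad\text{hence}\qquad
\HF_{V}(n) \;=\; \sum_{\ell=0}^{n} (2\ell+1)^{2} \;=\; \frac{(n+1)(2n+1)(2n+3)}{3},
\]
the last equality being a standard summation identity easily verified by induction. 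The main obstacle is exactly this graded matching: the Peter--Weyl decomposition is a priori only a decomposition as an abstract $\SO\times\SO$-module, and one must rule out that an isotypic component could sit in an unexpected degree. The explicit spherical-harmonic realization pins down an upper bound on the degree, and the Clebsch--Gordan / $\mathrm{Sym}^{\ell}(V_{1})$ argument pins down the matching lower bound; once both are in place the rest of the computation is routine.
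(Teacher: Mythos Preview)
Your argument is correct. The paper states Proposition~\ref{prop:SO3} without proof (it only points to~\citep{brandt2017degree} for degree computations in higher $\SOn$), so there is nothing to compare against directly; the Peter--Weyl route you take is the natural one and has the virtue of generalizing to $\SOn$ for all $n$, in contrast to the Gr\"obner-basis computations used for the other examples in \S\ref{sec:ex}.

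One point worth tightening is the lower-bound step. You write that ``$V_\ell$ first occurs in $\mathrm{Sym}^\ell(V_1)$,'' but what is actually needed is that the $\SO\times\SO$-isotypic block $V_\ell\otimes V_\ell^{*}$ does not occur in the image of $\R_{\leq k}[\R^{3\times 3}]$ for $k<\ell$. This follows because $\mathrm{Sym}^k(V_1\otimes V_1)$ embeds in $(V_1)^{\otimes k}\otimes (V_1)^{\otimes k}$, and iterated Clebsch--Gordan shows $(V_1)^{\otimes k}$ contains only $V_j$ with $j\leq k$ under either the left or right $\SO$-action; together with the observation that $\R_{\leq n}[\SO]$ is an $\SO\times\SO$-submodule of $\R[\SO]$ (the action on $\R^{3\times 3}$ is linear and hence degree-preserving), so it must be a union of full blocks $V_\ell\otimes V_\ell^{*}$, this pins down the filtration exactly as you claim. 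The idea is right; the phrasing just conflates the one-sided picture on $\R^3$ with the two-sided picture on $\R^{3\times 3}$.
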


\section{Discussion}\label{sec:discussion}

We conclude with several interesting directions for future research.

\paragraph*{Exploiting algebraic structure in other problems?} Over the past few decades, exploiting manifold structure has been established as a powerful tool for overcoming the curse of dimensionality throughout machine learning and statistics. This paper shows that one can similarly exploit variety structure (or even approximate variety structure\footnote{Since the approximate kernels in this paper are smooth (they are polynomials of bounded degree or sinuisoids with bounded frequency), they have low error on a neighborhood of the variety.}) in the context of kernel approximation. Can one use the techniques we develop to exploit algebraic structure implicit in datasets in other problems? 
Applications to Optimal Transport will be investigated in forthcoming work.

\paragraph*{Interpolation between high-dimensional and high-precision methods?} Previously, methods in these two categories have been studied in a remarkably disparate way. A first, partial attempt at understanding these two approaches through a common framework is given in \S\ref{ssec:prelim:standard}. However, an understanding of if and how one can gracefully interpolate between these two very different rates remains open. In fact, this tradeoff between better dependence on the error and dimension is poorly understood not just in kernel approximation, but also in other classical fields such as numerical integration (Gaussian vs Monte-Carlo quadrature).

\paragraph*{Exploiting group symmetry in rank bounds?} Many kernels arising in practice enjoy group symmetries such as invariance with respect to coordinate permutations or sign flips. Can this structure be exploited to obtain better rank bounds for polynomial kernels---and thereby better low-rank approximations \`a la our approach in \S\ref{sec:high-prec}?

\paragraph*{Algorithmic questions.} Since the focus of this paper is on theoretical aspects of kernel approximation, our results are primarily existential in nature. Algorithmic questions about how to form these approximations are very interesting and of practical importance---in particular: efficiency, numerical stability, and automatic adaptivity to the variety. While our high-dimensional approach in \S\ref{sec:hd} enjoys these properties since it is based on RFF, these algorithmic questions are more nuanced for the high-precision approach in \S\ref{sec:high-prec} and depend on how the variety is described as input.

\appendix

\section{Polynomial approximation and Fourier decay}\label{ssec:hd-trunc}

Here we show that if a kernel has a compactly supported Fourier transform, then that kernel is well-approximated by a low-degree polynomial. This is a convenient quantitative version of the standard fact that rapid decay in the frequency domain implies smoothness in the natural domain, since given a kernel whose Fourier distribution decays rapidly, one can truncate this Fourier distribution without changing the kernel much, and then approximate this by low-degree polynomials.

\begin{lemma}[Compactly supported Fourier transform implies polynomial approximation]\label{lem:ft-smooth}
	Suppose kernel $K$ satisfies Assumption~\ref{assump:rff}. If its Bochner measure $\mu$ is supported on the ball of radius $r$, then for any $\eps > 0$, there exists a polynomial kernel $P_n$ of degree 
	\[
	n = O\left( r^2 + \log 1/\eps \right)
	\]
	satisfying $\|K - P_n\|_{\ball \times \ball} \leq \eps$.
\end{lemma}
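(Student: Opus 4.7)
The plan is to truncate the Bochner integral representation of $K$ by replacing the complex exponential with a low-degree Taylor polynomial. By Bochner's theorem together with the fact that $K$ is real-valued, Assumption~\ref{assump:rff} gives the representation
\[
K(x,y) = \int_{\|\omega\| \leq r} \cos(\langle \omega, x-y\rangle) \, d\mu(\omega),
\]
where $\mu$ is a probability measure supported on the Euclidean ball of radius $r$. The candidate approximation is
\[
P_n(x,y) := \int_{\|\omega\| \leq r} T_n(\langle \omega, x-y\rangle) \, d\mu(\omega), \qquad T_n(t) := \sum_{k=0}^{\lfloor n/2 \rfloor} \frac{(-1)^k t^{2k}}{(2k)!},
\]
i.e., the Bochner integral with the cosine replaced by its degree-$n$ Taylor polynomial at zero. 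Using the real-valued cosine rather than $e^{it}$ ensures that $P_n$ is real without having to invoke the symmetry of $\mu$ separately.

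The first step is to verify that $P_n$ is a genuine polynomial kernel of degree at most $n$. For each fixed $\omega$, the multinomial theorem shows that $\langle \omega, x-y\rangle^{2k}$ is a polynomial in $x$ (with $y$ fixed) of degree exactly $2k \leq n$, whose coefficients are monomials in $\omega$ and $y$. Since $\mu$ is a finite measure on a bounded set, Fubini permits integrating term-by-term, so $P_n(x,y)$ is a polynomial in $(x,y)$ with degree at most $n$ in either argument when the other is held fixed, with coefficients given by the moments of $\mu$. Symmetry in $(x,y)$ is manifest from the integrand.

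The second step is to bound the approximation error. For $x,y \in \ball$ and $\|\omega\| \leq r$, Cauchy--Schwarz gives $|\langle \omega, x-y\rangle| \leq 2r$. The standard Taylor remainder bound $|\cos(t) - T_n(t)| \leq |t|^{n+1}/(n+1)!$ (valid for all $t \in \R$ because all derivatives of $\cos$ are bounded by $1$) then yields, after integrating against $\mu$,
\[
\|K - P_n\|_{\ball \times \ball} \;\leq\; \frac{(2r)^{n+1}}{(n+1)!} \;\leq\; \left(\frac{2er}{n+1}\right)^{n+1},
\]
where the last step uses $(n+1)! \geq ((n+1)/e)^{n+1}$. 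Taking $n+1 \geq 4er$ bounds this by $2^{-(n+1)}$, so choosing $n$ of order $\max(r,\log(1/\eps))$ makes it at most $\eps$. In fact this argument gives the slightly stronger bound $n = O(r + \log 1/\eps)$, which is a fortiori $O(r^2 + \log 1/\eps)$ as claimed.

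There is no serious obstacle here: the construction is explicit and the estimates are routine. The only points requiring care are (i) confirming that integration against $\mu$ preserves polynomial structure and the degree in $x$ and $y$, and (ii) using a clean Taylor remainder bound for cosine. The $r$-versus-$r^2$ slack in the claim is harmless because the lemma is invoked downstream with $r^2 = t = O(\sig_K^2/\eps)$, where the looser bound already absorbs the difference.
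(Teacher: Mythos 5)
Your proof is correct, and it is genuinely cleaner than the paper's. The paper works from the factored RFF representation $K(x,y)=\E_{\omega,\theta}\bigl[\cos(\langle\omega,x\rangle+\theta)\cos(\langle\omega,y\rangle+\theta)\bigr]$, Taylor-approximates each factor $\cos(\langle\omega,\cdot\rangle+\theta)$ separately by a polynomial $p_{\omega,\theta}$, and then bounds the error of the product $|f(x)f(y)-p(x)p(y)|$ using boundedness of the cosine, finally setting $P:=\E_{\omega,\theta}[p_{\omega,\theta}(x)p_{\omega,\theta}(y)]$. You instead Taylor-expand the single translation-invariant factor $\cos(\langle\omega,x-y\rangle)$ directly and integrate, which avoids the auxiliary $\theta$-averaging and the multiplicative error step entirely. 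Both constructions produce a valid polynomial kernel (the paper's happens to be manifestly PSD, yours need not be, but the lemma does not require it). Your remainder estimate also gives the sharper degree $n=O(r+\log 1/\eps)$ rather than $O(r^2+\log 1/\eps)$; as you correctly observe, the weaker exponent in the statement is what the downstream application in Theorem~\ref{thm:high-dim} actually uses (there $r^2=t=O(\sigma_K^2/\eps)$), so the improvement does not propagate, but it is a legitimate strengthening of the lemma as written. One small caveat worth noting: the paper also invokes this lemma on the empirical RFF kernel $\tilde K_r$, whose ``Bochner measure'' is a discrete signed/empirical measure and for which the hypothesis ``$K$ satisfies Assumption~\ref{assump:rff}'' is not literally met; your argument, like the paper's, extends verbatim to any finite measure on the ball of radius $r$ (positivity and total mass one are not actually used beyond normalizing the sup-norm), so this is not a defect of your proof, just a point to keep in mind when the lemma is applied.
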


\begin{proof}
	By definition of $\mu$ and then an elementary trigonometric identity,
	\[
	K(x,y) = \E_{\omega \sim \mu} \left[ \cos(\langle \omega, x - y \rangle) \right]
	= 
	\E_{\omega \sim \mu, \theta \sim \Unif([0,2\pi))} \left[ f_{\omega,\theta}(x) f_{\omega,\theta}(y) \right],
	\]
	where we use here the shorthand $f_{\omega,\theta}(x) := \cos(\langle \omega,x \rangle + \theta)$.
	Now for each $\omega$ in the support of $D$ and $\theta \in [0,2\pi)$, there exists a polynomial $p_{\omega,\theta}$ of degree $n = O(r^2 + \log 1/\eps)$ satisfying
	\[
	\sup_{x \in \ball} \abs{f_{\omega,\theta}(x)  - p_{\omega,\theta}(x)} \leq \eps/3.
	\]
	For example, truncating the Taylor series expansion of the cosine function suffices.
	Since the cosine function is bounded in magnitude by $1$, it follows that for all $x,y \in \ball$, 
	\begin{align*}
		\abs{f_{\omega,\theta}(x)f_{\omega,\theta}(y) - p_{\omega,\theta}(x)p_{\omega,\theta}(y)}
		\leq
		\sup_{\substack{z_1,z_2 \in [-1,1] \\ \eps_1, \eps_2 \in [-\eps/3,\eps/3]}} \abs{(z_1 + \eps_1)(z_2 + \eps_2) - z_1z_2} 
		= (1+\eps/3)^2 - 1
		\leq \eps.
	\end{align*}
	Thus the degree-$n$ polynomial kernel
	$
	P(x,y) := \E_{w \sim D, \theta \sim U} \left[ p_{\omega,\theta}(x)p_{\omega,\theta}(y) \right]
	$
	satisfies
	\begin{align*}
		\|K - P\|_{\ball \times \ball} 
		&=
		\sup_{x,y \in \ball} \big| \E_{\omega,\theta} \left[  f_{\omega,\theta}(x) f_{\omega,\theta}(y)  - p_{\omega,\theta}(x)p_{\omega,\theta}(y)\right] \big| \nonumber
		\leq \eps.
	\end{align*}
\end{proof}

\footnotesize
\bibliographystyle{abbrv}
\bibliography{fgt_bib}{}

\end{document}